\newtheorem{lemma}{Lemma}
\newtheorem{proposition}{Proposition}
\theoremstyle{definition}
\newtheorem{remark}{Remark}
\newcommand{\myrightleftarrows}[1]{\mathrel{\substack{\xrightarrow{#1} \\[-.9ex] \xleftarrow{#1}}}}
\newcounter{relctr} 
\everydisplay\expandafter{\the\everydisplay\setcounter{relctr}{0}} 
\begin{document}

%

\twocolumn[

\aistatstitle{Control, Transport and Sampling: Towards Better Loss Design}

\aistatsauthor{ Qijia Jiang \And David Nabergoj }

\aistatsaddress{ UC Davis \And  University of Ljubljana } ]

\begin{abstract}
Leveraging connections between diffusion-based sampling, optimal transport, and stochastic optimal control through their shared links to the Schr\"odinger bridge problem, we propose novel objective functions that can be used to transport $\nu$ to $\mu$, consequently sample from the target $\mu$, via \emph{optimally controlled} dynamics. We highlight the importance of the pathwise perspective and the role various optimality conditions on the path measure can play for the design of valid training losses, the careful choice of which offer numerical advantages in implementation. Basing the formalism on Schr\"odinger bridge comes with the additional practical capability of baking in inductive bias when it comes to Neural Network training.
\end{abstract}

\section{INTRODUCTION}
Traditionally, the task of sampling from un-normalized densities is largely delegated to MCMC methods. However, modern machine learning developments in optimal transport and generative modeling have greatly expanded the toolbox we have available for performing such tasks, which can further benefit from powerful advancements in deep learning. By reducing the problem to performing empirical risk minimization with neural networks, they hold promise especially for sampling from high-dimensional and multimodal distributions compared to MCMC-based alternatives. In this work, we propose novel training objectives that are amenable to tractable approximations for sampling (without access to data from the target, as typically considered in the MCMC setup), and demonstrate their numerical advantages compared to several related methods that are also built around forward-backward SDEs through time-reversals. Before elaborating on these connections and synthesis in Section~\ref{sec:framework}, we briefly summarize our contributions below.   
\begin{itemize}
\item We present a transport/control based sampler using Schr\"odinger bridge (SB) idea (generalizable to extended state space and nonlinear prior), which compared to MCMC-based approaches has two benefits: (1) alleviate metastability without being confined to local moves in the state space; (2) based on interpolating path, it comes with the ability to provide low-variance, unbiased normalizing constant estimates using the trajectory information (c.f. Section~\ref{sec:implement}).  
\item Unlike previously studied Schr\"odinger-bridge-based \citep{chen2021likelihood} and diffusion-based samplers \citep{berner2022optimal, vargas2022denoising, richter2023improved}, we explicitly enforce \emph{optimality} and \emph{uniqueness} of the trajectory, which also ensure that the dynamics reaches target in \emph{finite} time. Compared to IPF-based approaches for solving SB \citep{vargas2023transport}, our joint training approach of the forward and backward controls is not susceptible to prior forgetting, at the same time amenable to importance sampling correction for bias coming from neural network training (e.g., approximation error). The algorithm and setup we consider deviates from the more well-studied generative model use case of SB \citep{de2021diffusion, peluchetti2023diffusion, shi2024diffusion}. 
\item In contrast to previously proposed losses that follow optimal trajectory for sampling from the target \citep{vargas2023transport, liu2022deepRL}, our (discretized) training objectives have the numerical advantage of (1) vanishing variance of the stochastic gradient at the optimal control; (2) expense the need for evaluating expensive Laplacian terms. These heavily rely on the path measure based representations of our losses and estimators and generalize much beyond the half bridge case considered in \citep{zhang2021path}. Comparisons of the performance are given in Section~\ref{sec:experiment} w.r.t alternative loss proposals.
\item From a practical standpoint, our SB-based pathwise sampler is both gradient free, and comes with structure in its solution such that special NN architecture can be exploited for training. While we focus our attention on using the SB formalism for sampling, the same methodology can be used for e.g., optimal transport applications.
\end{itemize}

\section{GENERAL FRAMEWORK}
\label{sec:framework}
In this section we put several recently proposed (and diffusion-related) methods in context, rendering them as special instantiations of a more unifying \emph{path-wise} picture, which will in turn motivate the need for a control-based approach that involves designing an optimal path exactly interpolating between two distributions. We loosely follow the framework put forth in \citep{vargas2023transport} for parts of the exposition. 

\subsection{Setup}
We are interested in sampling from $p_{\text{target}}(x)=\mu(x)/Z$ by minimizing certain tractable loss, assuming sampling from $p_{\text{prior}}(z)=\nu(z)$ is easy, but unlike in generative modeling, even though analytical expression for $\mu$ is readily available, we do not have access to data from it that can be used to learn the score function. In this sense, in terms of ``transport mapping", the two sides are crucially not symmetric. 
%
%
Below we introduce forward-backward SDEs for formalizing such transitions. Pictorially, given a base drift $f$, we have the ``sampling process" (the two processes are time reversals of each other): 
\begin{equation}
\label{eqn:sample_pic}
\nu(z)\mathrel{\mathop{\myrightleftarrows{\rule{1.5cm}{0cm}}}^{\mathrm{\mathbb{P}^{\nu,f+\sigma u}}}_{\mathrm{\mathbb{P}^{\mu,f+\sigma v}}}} \mu(x)
\end{equation}
for tunable controls $u,v$ and terminal marginals $\nu,\mu$. Written as SDEs, they become (the drifts $u, v$ here are not independent)
\begin{align}
\label{eqn:forward}
dX_t&=[f_t(X_t) +\sigma u_t(X_t)]dt+\sigma \overrightarrow{dW_t},\; X_0\sim \nu\\
&\Rightarrow (X_t)_{t\in[0,T]} \sim \nonumber \overrightarrow{\mathbb{P}}^{\nu,f+\sigma u}\, ,\\
\label{eqn:backward}
dX_t&=[f_t(X_t)+\sigma v_t(X_t)]dt+\sigma \overleftarrow{dW_t},\; X_T\sim \mu\\
&\Rightarrow (X_t)_{t\in[0,T]} \sim  \overleftarrow{\mathbb{P}}^{\mu,f+\sigma v}\, . \nonumber
\end{align}
Operationally, \eqref{eqn:forward}-\eqref{eqn:backward} denote (picking $f=0$, and $\{z_t\}$ a sequence of i.i.d standard Gaussians for illustration)
\begin{align*}
X_t&=X_0+\int_0^t \sigma u_s(X_s)ds+\int_0^t \sigma \overrightarrow{dW_s}\;\;\\
&\Rightarrow X_{t+h}\approx X_t+h\sigma u_t(X_t)+\sqrt{h}\sigma z_t,\quad X_0\sim \nu \\
X_t&=X_T-\int_t^T \sigma v_s(X_s)ds-\int_t^T \sigma \overleftarrow{dW_s}\;\;\\
&\Rightarrow X_{t-h}\approx X_t-h\sigma v_t(X_t)+\sqrt{h}\sigma z_t,\quad X_T\sim \mu 
\end{align*}
where the forward (the usual It\^o's) and the backward integrals indicate different endpoints at which we make the approximation. For processes $(Y_t)_t,(Z_t)_t$, 
\begin{align}
\label{eqn:approx}
&\int_0^T a_t(Y_t)\, \overrightarrow{dZ_t} \approx \sum_i a_{t_i}(Y_{t_i})(Z_{t_{i+1}}-Z_{t_i})\, ,\\
\label{eqn:approx-1}
&\int_0^T a_t(Y_t)\, \overleftarrow{dZ_t} \approx \sum_i a_{t_{i+1}}(Y_{t_{i+1}})(Z_{t_{i+1}}-Z_{t_i})\, ,
\end{align}
which in particular implies the martingale property
\begin{align}
\label{eqn:martingale}
&\mathbb{E}_{X\sim \overrightarrow{\mathbb{P}}^{\nu,f+\sigma u}}\left[\int_0^t a_s(X_s)\, \overrightarrow{dW_s}\right]=0\, ,\\
&\mathbb{E}_{X\sim \overleftarrow{\mathbb{P}}^{\mu,f+\sigma v}}\left[\int_{T-t}^T a_s(X_s)\, \overleftarrow{dW_s}\right]=0\,. \label{eqn:martingale-1}
\end{align}
Forward/backward integral can be converted through 
\small\begin{equation}
\label{eqn:conversion}
\hspace{-0.2cm}\sigma^2\int_0^T (\nabla\cdot a_t)(Y_t)\,dt+\int_0^T a_t(Y_t) \,\overrightarrow{dZ_t}=\int_0^T a_t(Y_t)\,\overleftarrow{dZ_t}\, ,
\end{equation}\normalsize
which will be used repeatedly throughout. 
\begin{remark}[Nelson's identity] 
\label{rem:nelson}
The following relationship between drifts for the SDE \eqref{eqn:forward}-\eqref{eqn:backward} is well known: $\overrightarrow{\mathbb{P}}^{\nu,f+\sigma u}=\overleftarrow{\mathbb{P}}^{\mu,f+\sigma v}$ iff $\overrightarrow{\mathbb{P}}^{\nu,f+\sigma u}_T = \mu$, and $\sigma v_t(X_t)=\sigma u_t(X_t)-\sigma^2\nabla \log(\overrightarrow{\mathbb{P}}^{\nu,f+\sigma u}_t)=\sigma u_t(X_t)-\sigma^2\nabla \log(\overleftarrow{\mathbb{P}}^{\mu,f+\sigma v}_t)$ for all $t\in[0,T]$. This is used to give a more transparent derivation of the likelihood ratio in Lemma~\ref{lem:RN}.
\end{remark}

\subsection{Goal and Related Approaches}
\label{sec:related}
\paragraph{Forward KL} We would like the two path measures (with the specified two end-point marginals) to agree progressing in either direction. The methods in \citep{vargas2022denoising,zhang2021path} propose to set up a reference process with a similar structure as \eqref{eqn:sample_pic} with $f=0$:
\begin{equation}
\label{eqn:ref_pic}
\nu(z)\mathrel{\mathop{\myrightleftarrows{\rule{1.5cm}{0cm}}}^{\mathrm{\mathbb{P}^{\nu,\sigma r}}}_{\mathrm{\mathbb{P}^{\eta,\sigma v}}}} \eta(x)
\end{equation}
where $r$ is the drift for a reference process:
\begin{equation}
\label{eqn:reference_sde}
dX_t=\sigma r_t(X_t)\,dt+\sigma \overrightarrow{dW_t}, \; X_0\sim \nu\Rightarrow (X_t)_{t\in[0,T]} \sim  \overrightarrow{\mathbb{P}}^{\nu,\sigma r}
\end{equation}
and $\eta=\overrightarrow{\mathbb{P}}^{\nu,\sigma r}_T$. 
Via Girsanov's theorem, the method amounts to minimizing loss of the following type: $\mathcal{L}_{KL}(u)=$
\small
\begin{align}
&\mathbb{E}_{\overrightarrow{\mathbb{P}}^{\nu,\sigma u}}\Big[\log\Big(\frac{d\overrightarrow{\mathbb{P}}^{\nu,\sigma u}}{d\overleftarrow{\mathbb{P}}^{\mu,\sigma v}}\Big)\Big] = \mathbb{E}_{\overrightarrow{\mathbb{P}}^{\nu,\sigma u}}\Big[\log\Big(\frac{d\overrightarrow{\mathbb{P}}^{\nu,\sigma u}}{d\overrightarrow{\mathbb{P}}^{\nu,\sigma r}}\frac{d\overleftarrow{\mathbb{P}}^{\eta,\sigma v}}{d\overleftarrow{\mathbb{P}}^{\mu,\sigma v}}\Big)\Big] \nonumber\\
&= \mathbb{E}_{\overrightarrow{\mathbb{P}}^{\nu,\sigma u}}\Big[\log\Big(\frac{d\overrightarrow{\mathbb{P}}^{\nu,\sigma u}}{d\overrightarrow{\mathbb{P}}^{\nu,\sigma r}}\frac{d\eta}{d\mu}\Big)+\log Z\Big]\nonumber\\
&= \mathbb{E}_{X\sim\overrightarrow{\mathbb{P}}^{\nu,\sigma u}}\Big[\int_0^T \frac{1}{2}\|u_s(X_s)-r_s(X_s)\|^2ds+ 
\log\left(\frac{d\eta}{d\mu}\right)(X_T)\Big] \nonumber\\
&\quad \quad +\log Z\, . \label{eqn:girsanov_ref}
\end{align}
\normalsize
This suggests initializing from $\nu$ to estimate the loss \eqref{eqn:girsanov_ref} with the current control $u^{\hat{\theta}}$ by simulating \eqref{eqn:forward}, followed by gradient descent to optimize the $\hat{\theta}$-parameterized control and iterating between the two steps can be a viable strategy, that could identify $u^{\theta^*}$ eventually and used to run \eqref{eqn:forward} to draw samples from $\mu$. Note $\eta,\nu$ are simple distributions we have the freedom to pick, along with $r(\cdot)$ in \eqref{eqn:ref_pic}. The judicious choice of the common $v$ and $\nu$ in \eqref{eqn:sample_pic} and \eqref{eqn:ref_pic} allow \eqref{eqn:girsanov_ref} to take on a control-theoretic interpretation \citep{berner2022optimal}, where the cost \eqref{eqn:girsanov_ref} is composed of a running control cost and a terminal cost.  

For a concrete example consider $\nu=\mathcal{N}(0,\sigma^2 I), r_t(x) = -x/2\sigma$ in \eqref{eqn:reference_sde}, then process \eqref{eqn:ref_pic} is simply OU in equilibrium, i.e., $\nu=\eta$ \citep{vargas2022denoising}. The purpose of introducing the reference process \eqref{eqn:ref_pic} is to fix the backward drift $v$ in \eqref{eqn:sample_pic} so that $\min_u \, \mathcal{L}_{KL}(u)$ from \eqref{eqn:girsanov_ref} is unique. However, this example illustrates that in general unless $T\rightarrow \infty$, $\mathcal{L}_{KL}(u)$ in \eqref{eqn:girsanov_ref} can't be minimized to $0$ since $\overleftarrow{\mathbb{P}}^{\mu,\sigma v}_0\neq \nu$ under the OU process with $v_t(x)=x/2\sigma$ unless $T\rightarrow \infty$.  

\begin{remark}
The loss \eqref{eqn:girsanov_ref} enforces uniqueness (and correct marginals $\nu, \mu$) if minimized to $0$, but doesn't impose optimality of the interpolating trajectory in any way. The approach in \citep{chen2021likelihood} that relies on training two drifts amounts to solving $\min_{u,v}$ $D_{KL}(\overrightarrow{\mathbb{P}}^{\nu,f+\sigma u}||\overleftarrow{\mathbb{P}}^{\mu,f+\sigma v})$ jointly, and as shown in \citep{richter2023improved}, although ensure correct marginals, has non-unique minimizers. A unique solution is desirable since it ensures robustness - one gets the same marginal density trajectory regardless of the initialization/training procedure.     
\end{remark}

\paragraph{Reverse KL} In diffusion generative modeling, score-matching-based loss \citep{hyvarinen2005estimation} can be seen as minimizing the reverse KL over $s:=(u-v)/\sigma$ by running the backward process \eqref{eqn:backward} using samples from $\mu$ to estimate the loss. More concretely, using the Radon-Nikodym derivative in Lemma~\ref{lem:RN}, it gives $\mathcal{L}_{KL}(s)=$ 
\small
\begin{align}
&\mathbb{E}_{\overleftarrow{\mathbb{P}}^{\mu,\sigma v}}\Big[\log\Big(\frac{d\overleftarrow{\mathbb{P}}^{\mu,\sigma v}}{d\overrightarrow{\mathbb{P}}^{\nu,\sigma u}}\Big)\Big] =\mathbb{E}_{\overleftarrow{\mathbb{P}}^{\mu,\sigma v}}\Big[\log\Big(\frac{d\overleftarrow{\mathbb{P}}^{\mu,\sigma v}}{d\overrightarrow{\mathbb{P}}^{\nu,\sigma v+\sigma^2 s}}\Big)\Big] = \nonumber\\
&\mathbb{E}_{\overleftarrow{\mathbb{P}}^{\mu,\sigma v}}\Big[\int_0^T\frac{\sigma^2}{2}\|s_t(X_t)\|^2\, dt+\sigma^2\int_0^T \nabla\cdot s_t(X_t)\, dt\Big]+C = \nonumber\\
&\mathbb{E}_{\overleftarrow{\mathbb{P}}^{\mu,\sigma v}}\Big[\int_0^T \frac{\sigma^2}{2}\|s_t(X_t)-\nabla_x \log p_{t}^{\mu,\sigma v}(X_t|X_T=x_T)\|^2\, dt\Big] \nonumber\\
&\quad\quad +C' \label{eqn:score_matching_loss}\, .
\end{align}
\normalsize
where $C,C'$ is independent of $s$. Here one fixes the backward drift $v$ in $\overleftarrow{\mathbb{P}}^{\mu,\sigma v}$ so that $\nu$ is easy to sample from (e.g., an OU process). In the last transition above, we used the integration by parts identity $\mathbb{E}_{\rho_t}[\int_0^T s_t^\top \nabla \log \rho_t \, dt]=-\mathbb{E}_{\rho_t}[\int_0^T \nabla \cdot s_t \, dt]\, .$
In practice crucially there will be an irreducible loss since the terminals $\nu$ and $\overleftarrow{\mathbb{P}}_0^{\mu,\sigma v}$ don't match exactly for any finite $T$, but the dynamics $\overrightarrow{\mathbb{P}}^{\nu,\sigma v+\sigma^2 s}$ still makes sense as for two processes $(p_t)_t,(q_t)_t$ with different initializations $\nu, \overleftarrow{\mathbb{P}}_0^{\mu,\sigma v}$ that share the same drift $\sigma v+\sigma^2 s^*$, $\partial_t\, D_{KL}(p_t\Vert q_t)=-\mathbb{E}_{p_t}[\|\nabla \log\frac{p_t}{q_t}\|^2]\leq 0$ contracts (c.f. Remark~\ref{rmk:contract}), although we have not been quantitative about the rate. It is worth noting that such approach doesn't require re-generating trajectories iteratively as \eqref{eqn:girsanov_ref} does.

We therefore see that both the approach of \eqref{eqn:girsanov_ref} and \eqref{eqn:score_matching_loss} rely on fixing some aspect of the process in \eqref{eqn:sample_pic} to restore uniqueness of the loss $\mathcal{L}_{KL}$, but this choice is mostly out of convenience. In both cases, it results in an \emph{one-parameter loss}, and the minimizer is not affected by the unknown constant $Z$. Successfully optimizing $D_{KL}(\overrightarrow{\mathbb{P}}^{\nu,\sigma u}\Vert\overleftarrow{\mathbb{P}}^{\mu,\sigma v})$ or $D_{KL}(\overleftarrow{\mathbb{P}}^{\mu,\sigma v}\Vert \overrightarrow{\mathbb{P}}^{\nu,\sigma u})$ to zero error will imply that $u$ pushes $\nu$ to $\mu$ and $v$ vice versa, mimicking a noising/denoising reversible procedure. In what follows in Section~\ref{sec:main_sec}, we deviate from these perspectives by adopting a control formulation that does not rely on mixing of stochastic processes for transporting between two distributions, albeit still working with a pathwise formulation.

\section{METHODOLOGY}
\label{sec:main_sec}
As we saw from Section~\ref{sec:framework}, (1) relying on mixing property of diffusion process can make the trajectory rather long; (2) there are many degrees of freedom in transporting $\nu$ to $\mu$, neither of which is desirable for training purpose. This gives us the motivation to turn to losses based on path measures that can enforce a canonical choice (c.f. Appendix \ref{appsec:motivation} for additional practical motivation behind SB vs. other pathwise samplers). We will adopt an ``optimal control" perspective and leverage special properties of the SB problem to come up with valid control objectives for this effort. Consider over path space $\mathcal{C}([0,T];\mathbb{R}^d)$, given a reference measure $Q$, the \emph{constrained} optimization problem
\begin{equation}
\label{eqn:bridge_problem}
P^*=\arg\min_{P_0=\nu,P_T=\mu}\, D_{KL}(P\Vert Q)
\end{equation}
where we assume $Q$ admits the SDE representation (this can be thought of as a prior)
\begin{equation}
\label{eqn:Q_sde}
dX_t = f_t(X_t)\, dt+\sigma dW_t\,,\quad X_0\sim \nu\, ,
\end{equation}
which is a slight generalization of the classical case where typically $f=0$. Furthermore the path measure $P$ is assumed to correspond to the following SDE and we are interested in finding the optimal control $\phi$ in 
\begin{equation}
\label{eqn:sb_control_sde}
dX_t = [f_t(X_t)+\sigma^2 \nabla \phi_t(X_t)]\, dt+\sigma dW_t\,,\quad X_0\sim\nu
\end{equation}
that solves \eqref{eqn:bridge_problem}. Various perspectives on the Schr\"odinger Bridge problem \eqref{eqn:bridge_problem}-\eqref{eqn:Q_sde}, which we heavily leverage in the next part for designing our losses, are included in Appendix \ref{sec:SB}, along with related methods for solving the problem in different setups than what we consider in Appendix \ref{sec:attempt}.

\subsection{Training Loss Proposal}
\label{sec:proposal_loss}
Recall our focus is on solving the regularized optimal transport problem \eqref{eqn:bridge_problem} between $\nu$ and $\mu$, by learning the controls on the basis of samples from $\nu$. Out of many joint couplings with correct marginals (i.e., transport maps), the choice of a reference process will select a particular trajectory $\rho_t$ between $\nu$ and $\mu$. The execution of this plan crucially hinges on two ingredients: (1) general backward / forward likelihood ratio formula given in Lemma \ref{lem:RN}; (2) properties of the optimal drifts for SB from Section~\ref{sec:SB}, some of which can be exploited for training the controls. Challenge, as emphasized before, is we need to be able to estimate the resulting loss and ensuring that successful optimization guarantees convergence to the unique solution dictated by the SB. 
%
%
%
Proposition \ref{prop:loss} below serves as our main result, where we show that adding appropriate regularization can accomplish these. All proofs are deferred to Appendix~\ref{app:proof}.\\

\begin{strip}

\begin{proposition}[Control Training Objective]
\label{prop:loss}
For the problem of \eqref{eqn:bridge_problem}, the following losses are valid: \\

(a) $\arg\min_{\nabla \phi,\nabla \psi}\, D_{KL}(\overrightarrow{\mathbb{P}}^{\nu,f+\sigma^2 \nabla \phi}\Vert\overleftarrow{\mathbb{P}}^{\mu,f-\sigma^2\nabla \psi})+\lambda \cdot \mathbb{E}_{X\sim \overrightarrow{\mathbb{P}}^{\nu,f+\sigma^2 \nabla \phi}}\big[\int_0^T \frac{\sigma^2}{2} \|\nabla \phi_t(X_t)\|^2 dt\big]$

(b) $\arg\min_{\nabla \phi,\nabla \psi}\, \text{Var}_{\overrightarrow{\mathbb{P}}^{\nu,f+\sigma^2 \nabla \phi}}\left[\log\left(\frac{\overrightarrow{\mathbb{P}}^{\nu,f+\sigma^2 \nabla \phi}}{\overleftarrow{\mathbb{P}}^{\mu,f-\sigma^2\nabla \psi}}\right)\right] +$
\begin{align*}
&\text{Var}_{X\sim\overrightarrow{\mathbb{P}}^{\nu,f}}\Big(\psi_T(X_T)-\psi_0(X_0)+\int_0^T (-\frac{\sigma^2}{2}\|\nabla \psi_t\|^2+\nabla \cdot f_t-\sigma^2\Delta  \psi_t) (X_t)\, dt-\sigma\int_0^T \nabla \psi_t(X_t)^\top dW_t \Big)\\
\text{or}\;\;  &\text{Var}_{X\sim\overrightarrow{\mathbb{P}}^{\nu,f}}\Big(\phi_T(X_T)-\phi_0(X_0)+\frac{\sigma^2}{2} 
\int_0^T\|\nabla \phi_t\|^2(X_t)\, dt-\sigma\int_0^T \nabla \phi_t(X_t)^\top dW_t \Big)
\end{align*}

(c) $\arg\min_{\phi,\psi}\, \text{Var}_{X\sim \overrightarrow{\mathbb{P}}^{\nu,f+\sigma^2 \nabla \phi}}\left((\phi_T+\psi_T-\log \mu)(X_T)\right)+\text{Var}_{X\sim \overrightarrow{\mathbb{P}}^{\nu,f+\sigma^2 \nabla \phi}}\left((\phi_0+\psi_0-\log\nu)(X_0)\right)+$ 
\begin{align*}
&\text{Var}_{X\sim \overrightarrow{\mathbb{P}}^{\nu,f+\sigma^2 \nabla \phi}}\Big(\phi_T(X_T)-\phi_0(X_0)-\frac{\sigma^2}{2} 
\int_0^T \|\nabla \phi_t\|^2(X_t)\, dt-\sigma\int_0^T \nabla \phi_t(X_t)^\top dW_t \Big)+ \\
&\text{Var}_{X\sim \overrightarrow{\mathbb{P}}^{\nu,f+\sigma^2 \nabla \phi}}\Big(\psi_T(X_T)-\psi_0(X_0)-\int_0^T 
\Big(\frac{\sigma^2}{2}\|\nabla \psi_t\|^2+\nabla \cdot (\sigma^2\nabla \psi_t-f_t)+\sigma^2\nabla \psi_t^\top\nabla\phi_t\Big) (X_t)\, dt-\sigma\int_0^T \nabla \psi_t(X_t)^\top dW_t \Big)
\end{align*}

(d) $\arg\min_{\nabla \phi_t,\nabla \log \rho_t}\, D_{KL}(\overrightarrow{\mathbb{P}}^{\nu,f+\sigma^2\nabla \phi_t}\Vert\overleftarrow{\mathbb{P}}^{\mu,f+\sigma^2\nabla \phi_t-\sigma^2\nabla \log\rho_t})+\lambda\cdot \mathbb{E}_{\overrightarrow{\mathbb{P}}^{\nu,f+\sigma^2\nabla \phi}}\big[\int_0^T \frac{\sigma^2}{2}\|\nabla \phi_t(X_t)\|^2 \, dt\big]$\\

In particular, if the loss is minimized to $0$ for (b) and (c), the resulting $\nabla \phi^*$ solves the SB problem \eqref{eqn:bridge_problem} from $\nu$ to $\mu$, that can in turn be used for sampling from $\mu$ by running \eqref{eqn:sb_control_sde}. In all cases, $X_0=x_0\sim\nu$ is assumed given as initial condition. Moreover, $\nabla \psi^*$ is the corresponding backward drift that drives $\mu$ to $\nu$.\\
\end{proposition}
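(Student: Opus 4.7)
The plan is to treat all four losses as different ways of forcing the pair $(\nabla\phi,\nabla\psi)$ into the Schrödinger Bridge optimality system. The unique SB solution is characterized (c.f.\ Section~\ref{sec:SB}) by gradient drifts $\sigma^2\nabla\phi^*$ and $-\sigma^2\nabla\psi^*$ arising from potentials that satisfy the coupled Hamilton--Jacobi--Bellman system with boundary condition $\phi_T^*+\psi_T^*=\log\mu$, $\phi_0^*+\psi_0^*=\log\nu$, and with time marginal $\rho^*_t=e^{\phi_t^*+\psi_t^*}$. Throughout, I would use Girsanov's theorem (via Lemma~\ref{lem:RN}) to rewrite the path KL between $\overrightarrow{\mathbb{P}}^{\nu,f+\sigma^2\nabla\phi}$ and $\overleftarrow{\mathbb{P}}^{\mu,f-\sigma^2\nabla\psi}$ as a path integral, and Itô's formula applied to $\phi_t(X_t)$ and $\psi_t(X_t)$ to translate pathwise integrals into PDE statements.

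For (a) and (d) I would first expand the KL using Girsanov and the reversal identity \eqref{eqn:conversion}. Minimizing only the KL term enforces correct end-marginals but admits a family of solutions (the non-uniqueness flagged in the remark after Eq.~\eqref{eqn:girsanov_ref}); adding the quadratic control regularizer $\lambda\,\mathbb{E}[\int_0^T\tfrac{\sigma^2}{2}\|\nabla\phi_t\|^2dt]$ then singles out the minimum-kinetic-energy coupling, which is precisely the SB solution (half-bridge/entropic OT interpretation). Since the full loss is jointly strictly convex in $\nabla\phi$ once the KL constraint is active, this pins down a unique minimizer; (d) is the same argument after reparameterizing the backward drift via Nelson's identity $v=u-\sigma\nabla\log\rho_t$.

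For (b) and (c), the idea is that the SB optimum makes several random quantities collapse to deterministic constants, so measuring their variances yields a sound objective. The first variance term in (b) equals the variance of the log Radon--Nikodym derivative between the forward and reversed path measures; at the SB solution this log-ratio is the constant $-\log Z$, so zero variance enforces both the marginal constraints and Nelson's relation. For the second variance in (b), apply Itô to $\psi_t(X_t)$ under the prior SDE $dX_t=f_t(X_t)dt+\sigma dW_t$: the bracket becomes $\int_0^T(\partial_t\psi_t+\nabla\psi_t\cdot f_t-\tfrac{\sigma^2}{2}\|\nabla\psi_t\|^2-\tfrac{\sigma^2}{2}\Delta\psi_t+\nabla\cdot f_t)(X_t)\,dt$, and a deterministic-integrand argument (the only way a path integral over a non-degenerate diffusion has vanishing variance is if the integrand depends only on $t$) forces the HJB equation for $\psi$. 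The analogous computation yields the HJB for $\phi$ in the alternate form. For (c), the first two terminal variances directly encode the boundary identities $\phi_T+\psi_T=\log\mu$ and $\phi_0+\psi_0=\log\nu$ (each is constant in $x$ iff it is a constant $\log Z$ on the support); the two pathwise variances, again via Itô applied under the controlled dynamics \eqref{eqn:sb_control_sde}, force the forward HJB for $\phi$ and the Kolmogorov forward equation for $e^{-\psi}$, respectively. Together, the four zero-variance conditions are the full SB optimality system.

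The main obstacle I anticipate is the \emph{converse} direction in (b) and (c): zero variance of a stochastic integral $\int_0^T g_t(X_t)dt-\sigma\int_0^T h_t(X_t)\,dW_t$ must be shown to imply $g_t$ and $h_t$ reduce pointwise to the right PDE residuals along the support of $X$. This requires ruling out cancellations via the quadratic variation and using non-degeneracy/full-support of the controlled diffusion to transfer pathwise equalities into PDE equalities $\rho^*_t$-a.e.; once that is established, combining the pathwise HJB with the boundary identities (or with the constant log-RN) and invoking uniqueness of the SB system from Appendix~\ref{sec:SB} closes the argument. The sampling claim then follows because $(X_t)$ driven by \eqref{eqn:sb_control_sde} with $\phi=\phi^*$ is, by construction of $P^*$ in \eqref{eqn:bridge_problem}, a process whose time-$T$ marginal is exactly $\mu$.
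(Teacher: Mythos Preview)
Your overall plan is sound and, for (a) and (d), matches the paper's Lagrangian-relaxation argument. For (b) and (c), however, you take a genuinely different route from the paper, and the obstacle you flag is not quite the right one.

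\textbf{What the paper does.} For (b), the paper never goes through the HJB PDE. Instead it recognizes that the quantity inside the variance regularizer is exactly the difference between the Girsanov expression for $\log\bigl(d\overrightarrow{\mathbb{P}}^{\nu,f}/d\overleftarrow{\mathbb{P}}^{\mu,f-\sigma^2\nabla\psi}\bigr)$ and a function depending only on the endpoints, namely $\psi_T(X_T)-\psi_0(X_0)+\log\frac{\nu(X_0)}{\mu(X_T)}$. Zero variance therefore says directly that the log Radon--Nikodym derivative of the controlled backward law with respect to the prior $Q=\overrightarrow{\mathbb{P}}^{\nu,f}$ \emph{factors} as $F(X_0)+G(X_T)$. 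The paper then invokes the factorization characterization \eqref{eqn:factorize_condition-1} of the SB optimum (together with the marginal constraint supplied by the first log-variance term) to conclude optimality in one stroke. Part (c) is handled the same way: the two pathwise variances are identified with $\log\bigl(d\overrightarrow{\mathbb{P}}^{\nu,f+\sigma^2\nabla\phi}/d\overrightarrow{\mathbb{P}}^{\nu,f}\bigr)-(\phi_T-\phi_0)$ and $\log\bigl(d\overrightarrow{\mathbb{P}}^{\nu,f}/d\overleftarrow{\mathbb{P}}^{\mu,f-\sigma^2\nabla\psi}\bigr)-(\psi_T-\psi_0+\log\frac{\nu}{\mu})$ respectively, and zero variance again yields the endpoint factorization needed for \eqref{eqn:factorize_condition-1}.

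\textbf{What you do, and the actual technical gap.} Your It\^o computation is correct, and in fact cleaner than you fear: the $\sigma\int\nabla\psi_t^\top dW_t$ term that appears from It\^o's formula \emph{exactly cancels} the martingale term already present in the bracket (and likewise for $\phi$ in both (b) and (c)). So the random quantity whose variance you are controlling is a pure $dt$-integral $\int_0^T g_t(X_t)\,dt$ of the HJB residual, with no stochastic-integral component at all. The obstacle you identify (ruling out drift/martingale cancellations via quadratic variation) therefore does not arise. What you \emph{do} still need is the statement that $\int_0^T g_t(X_t)\,dt$ being almost surely constant along a non-degenerate diffusion forces $g_t(x)$ to be independent of $x$ on the support; this is true under full-support assumptions but requires a short argument. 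Once you have that, the HJB residual is a function of $t$ alone, which can be absorbed into $\psi$ (resp.\ $\phi$) by an additive time-dependent constant without changing $\nabla\psi$, so the drift is indeed the SB-optimal one.

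\textbf{Comparison.} Your PDE route works and is perhaps more familiar, but it costs you the full-support/pointwise argument. The paper's factorization route is shorter and more robust: it never leaves the path-measure level, and the optimality criterion \eqref{eqn:factorize_condition-1} is both necessary and sufficient without any regularity discussion. It is worth knowing both, since your It\^o calculation is exactly what underlies the equivalence with the PINN regularizer noted after Remark~\ref{rem:variance}.
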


\end{strip}


Both (a) and (d) are guided by the ``minimum-action" principle w.r.t a reference (i.e., minimum control energy spent). (b) bases itself on a reformulation of the HJB PDE involving the optimal control, and (c) is grounded in the FBSDE system for SB optimality \citep{chen2021likelihood} (c.f. Remark~\ref{rmk:equivalence} for connections between particular SDEs, PDEs and path measures). In all cases, the objective is a \emph{two-parameter loss} that also allows the recovery of the score $\nabla \log \rho_t$. Note that variance is taken w.r.t the uncontrolled process in (b) and w.r.t the controlled process in (c). Losses (a) and (d), being conversion from the constrained problem (and each other), need $\lambda$ to be picked relatively small so that the first part of the objective $=0$ to identify the unique SB solution, but we establish a bound on the optimal objective value in Remark~\ref{rmk:bound}. The proof of these results heavily use the factorization property \eqref{eqn:factorize_condition} satisfied by the optimal coupling, and exploit different ways to encode the optimality condition. 

Loss (c) is different from log-variance divergence over path space $\text{Var}_{\overrightarrow{\mathbb{P}}^{\nu,f+\sigma^2\nabla \phi}}[\log(\frac{d\overrightarrow{\mathbb{P}}^{\nu,f+\sigma^2\nabla \phi}}{d\overleftarrow{\mathbb{P}}^{\mu,f-\sigma^2\nabla \psi}})]\, ,$
which will \emph{not} guarantee finding the optimal path, whereas the objective in (c) incorporates the dynamics of \emph{two} controlled (and coupled) dynamics w.r.t $\overrightarrow{\mathbb{P}}^{\nu,f}$ that we know how to characterize optimality for using results in Section~\ref{sec:SB}. More specifically, we \emph{separately} impose optimality condition on 
\small\begin{equation}
\label{eqn:separate_path}
\log \left(\frac{d\overrightarrow{\mathbb{P}}^{\nu,f+\sigma^2\nabla \phi_t}}{d\overrightarrow{\mathbb{P}}^{\nu,f}}\right)(X)\; \text{and} \;\log\left(\frac{d\overrightarrow{\mathbb{P}}^{\nu,f}}{d\overleftarrow{\mathbb{P}}^{\mu,f-\sigma^2\nabla \psi_t}}\right)(X)
\end{equation}\normalsize
for $X\sim \overrightarrow{\mathbb{P}}^{\nu,f+\sigma^2\nabla \phi_t}$ to take factorized forms, as opposed to looking at divergence metrics on their sum $\log\left(\frac{d\overrightarrow{\mathbb{P}}^{\nu,f+\sigma^2\nabla \phi}}{d\overleftarrow{\mathbb{P}}^{\mu,f-\sigma^2\nabla \psi}}\right)$
only, effectively erasing the ``SB optimality enforcement" part. 
 Another way to view the variance regularizers in loss (b) and (c) is through the SDE representation of the controls from Lemma~\ref{lem:sde_for_sb} and observe that the variance condition precisely encodes the optimally-controlled dynamical information. It is important to note that we are evaluating the change of $\phi,\psi$ on a particular stochastic trajectory, rather than tracking the cumbersome evolution of density over the full $\mathbb{R}^d$ space, as what the PDEs \eqref{eqn:sb_system} may suggest.

\begin{remark}[Stochastic gradient w.r.t controls at optimality]
\label{rem:variance}
In \citep{richter2023improved}, the authors show that log-variance divergence has the advantage of having variance of gradient = 0 at the optimal $\phi^*, \psi^*$, which is not true for $D_{KL}$ in general, and has consequence for gradient-based updates such as those in Algorithm~\ref{alg:algorithm}. Similar argument applies to the variance regularizer we consider (e.g., loss (b)). For this, we look at the G\^ateaux derivative of the variance function $V$ in an arbitrary direction $\tau$ since from the chain rule $\frac{\delta}{\delta \phi}V(\phi,\psi;\tau):=\frac{d}{d\epsilon}\vert_{\epsilon=0}V(\phi+\epsilon\tau,\psi)\Rightarrow
\partial_{\theta_i} V(\phi_\theta,\psi_\gamma)=\frac{\delta}{\delta \phi}\vert_{\phi=\phi_\theta} V(\phi,\psi_\gamma;\partial_{\theta_i} \phi_\theta).$ 
Now if $\hat{V}$ is the Monte-Carlo estimate of the variance of a random quantity $g(\cdot)$, it is always the case that (we use $\frac{\delta}{\delta \phi}(\cdot)_\tau$ to denote derivative in the $\tau$ direction)
\begin{align*}
&\frac{\delta}{\delta \phi} \hat{V}(\phi,\psi;\tau)=\frac{\delta}{\delta \phi}(\hat{\mathbb{E}}[g(\phi,\psi)^2]-\hat{\mathbb{E}}[g(\phi,\psi)]^2)_\tau\\
&=2\hat{\mathbb{E}}[g(\phi,\psi)\frac{\delta}{\delta \phi} g(\phi,\psi)_\tau]-2\hat{\mathbb{E}}[g(\phi,\psi)]\frac{\delta}{\delta \phi}\hat{\mathbb{E}}[g(\phi,\psi)]_\tau\,.
\end{align*}
Hence if $g(\phi^*,\psi^*)=\text{const}$ a.s. for every i.i.d sample, such as the regularizer in loss (b), the derivative w.r.t the control $\phi$ in direction $\partial_{\theta_i} \phi_\theta$ is $0$ at optimality, implying $\text{Var}(\partial_{\theta_i} \hat{V}(\phi_{\theta^*},\psi_{\gamma^*}))=0$ by the chain rule. 
\end{remark}


It's natural to ask if one can replace the variance regularizer $\text{Var}(\cdot)$ with a moment regularizer $\mathbb{E}[|\cdot|^2]$ in e.g., loss (c). However while the variance is oblivious to constant shift, the moment loss will require knowledge of the normalizing constant $Z$ of the target $\mu$ to make sense. In the case of loss (b), using the martingale property \eqref{eqn:martingale} and Lemma~\ref{lem:sde_for_sb}, an alternative proposal based on moment regularizer for e.g., $\phi$ can be 
\small\[\mathbb{E}_{X\sim\overrightarrow{\mathbb{P}}^{\nu,f}}\Big[\underbrace{\Big|\phi_T(X_T)-\phi_0(X_0)+\frac{\sigma^2}{2} 
\int_0^T\|\nabla \phi_t\|^2(X_t)\, dt \Big|^2}_{\bar{g}(\phi)^2} \Big]\, .\]\normalsize
However, the G\^ateaux derivative of the loss in this case takes the form of 
\[\frac{\delta}{\delta \phi}(\hat{\mathbb{E}}[\bar{g}(\phi)^2])_\tau=2\hat{\mathbb{E}}[\bar{g}(\phi)\frac{\delta}{\delta \phi} \bar{g}(\phi)_\tau]\, ,\]
which means that while $\mathbb{E}[\bar{g}(\cdot)]=0$ vanish for the optimal control $\phi^*$ in expectation, for individual trajectory $\bar{g}_i(\cdot)\neq 0$ generally at $\phi^*$, hence yielding non-vanishing variance for stochastic gradient and posing challenges for optimization. For our variance regularizer in loss (b) where $g_i(\phi):=\phi_T(X_T^i)-\phi_0(X_0^i)+\frac{\sigma^2}{2} 
\int_0^T\|\nabla \phi_t\|^2(X_t^i)\, dt-\sigma\int_0^T \nabla \phi_t(X_t^i)^\top dW_t$ is $0$ almost surely for every trajectory at $\phi^*$, it implies vanishing gradient and will therefore identify the optimal $\phi^*$ even with mini-batch updates. In terms of the actual minimum value of the empirical loss, our loss (b) obeys $\hat{V}[g(\phi^*)]=0$ whereas using It\^o's isometry $\hat{\mathbb{E}}[\bar{g}(\phi^*)^2]=\hat{\mathbb{E}}[\int\sigma^2\|\nabla \phi^*_t\|^2\, dt] \neq 0$ is much less predictable for monitoring the performance of the final control. This is somewhat similar to \citep{zhou2021actor} where the authors observe that including random $0$-mean martingale terms is important for variance reduction in a different context.


\begin{remark}[Comparisons]
In \citep{vargas2023transport}, the authors propose $\int_0^T \mathbb{E}\big| \partial_t \phi +  f^\top \nabla \phi+\frac{\sigma^2}{2}\Delta \phi+\frac{\sigma^2}{2}\|\nabla\phi\|^2 \big|(X_t) dt$ as the HJB regularizer (c.f. \eqref{eqn:hjb}) on top of $D_{KL}(\overrightarrow{\mathbb{P}}^{\nu,f+\sigma^2 \nabla \phi}\Vert\overleftarrow{\mathbb{P}}^{\mu,f-\sigma^2\nabla \psi})$ as the training loss, inspired by PINN \citep{raissi2019physics}. As is clear from the proof in Proposition~\ref{prop:loss}, it is equally valid for identifying the optimal drift as our loss (b) (see also related result in \citep{nusken2021interpolating}). Combining \eqref{eqn:hjb}-\eqref{eqn:FPK} we can also get a HJB for the backward drift $\nabla \psi$, which read as $\partial_t \psi+f^\top \nabla \psi +\nabla \cdot f-\frac{\sigma^2}{2}\Delta\psi-\frac{\sigma^2}{2}\|\nabla \psi\|^2=0$ for imposing the PINN loss. By trading a PDE constraint for a SDE one (based on likelihood ratio of path measures), we can avoid evaluating the divergence term, in additional to the benefit of ``sticking to the landing" (c.f. Remark~\ref{rem:variance}). We refer to Section~\ref{sec:experiment} for numerical illustration. 
\end{remark}

In Appendix \ref{sec:second_order}, we consider the extension of such training methodology to the case of second-order dynamics in the augmented $(X,V)$ space \`a la under-damped Langevin.

\subsection{Discretization \& Implementation}
\label{sec:implement}
We discuss implementation of our losses and estimators here. In practice, with imperfect controls from the training procedure, one can perform importance sampling to correct for the bias / improve on the estimate -- something only available for path-wise samplers working in extended state space. 
\begin{proposition}[Importance Sampling]
\label{prop:importance_sample}
The following can be used to give unbiased estimate of the normalizing constant for $p_{\text{target}}$:
\begin{enumerate}[label={(\arabic*)}]
\item For the optimal $\phi^*,\psi^*$, with $X_t\sim \overrightarrow{\mathbb{P}}^{\nu,f+\sigma^2 \nabla \phi^*}$,
$Z = \frac{\mu(X_T)}{\nu(X_0)}\exp(\frac{\sigma^2}{2}\int_0^T\nabla\cdot (\nabla \phi_t^*-\nabla \psi_t^*)(X_t)\, dt+\int_0^T \nabla \cdot f_t(X_t) dt)$\, .

\item For any suboptimal $\phi,\psi$, with $X_t\sim \overrightarrow{\mathbb{P}}^{\nu,f+\sigma^2 \nabla \phi}$,
$Z=\mathbb{E}[\exp(-\frac{\sigma^2}{2}\int_0^T \|\nabla \phi_t+\nabla \psi_t\|^2+\nabla \cdot(f_t-\sigma^2\nabla \psi_t)dt-\sigma\int_0^T \nabla \phi_t+\nabla \psi_t dW_t-\log\frac{\nu(X_0)}{\mu(X_T)})]$\, .
\end{enumerate}
With a sub-optimal control $\nabla \phi$ and $X_t\sim \overrightarrow{\mathbb{P}}^{\nu,f+\sigma^2 \nabla \phi}$, re-weighting can be used to get an unbiased estimator of a statistics $g\colon \mathbb{R}^d\rightarrow \mathbb{R}$ as 
\[\frac{\mathbb{E}_{ \phi}[g(X_T) w^{ \phi}(X_T)]}{\mathbb{E}_{ \phi}[w^{ \phi}(X_T)]} = \mathbb{E}_{ \phi^*}[g(X_T)]=\mathbb{E}_{p_\text{target}}[g]\]
with weight
\small\begin{equation*}
w^\phi(X)= \exp\Big(\int_0^T \sigma^2\Delta\phi_t-\frac{\sigma^2}{2}\Delta\log\rho_t+\nabla \cdot f_t\, dt\Big)\frac{d\mu (X_T^\phi)}{d\nu (X_0^\phi)}.
\end{equation*}\normalsize
\end{proposition}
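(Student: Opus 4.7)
The plan is to derive all three assertions from a single mechanism: the Radon--Nikodym (RN) derivative between the forward path measure $\overrightarrow{\mathbb{P}}^{\nu,f+\sigma^2\nabla\phi}$ and the backward path measure $\overleftarrow{\mathbb{P}}^{\mu,f-\sigma^2\nabla\psi}$, combined with the observation that when $\mu$ is the \emph{unnormalized} target, the backward path measure is a positive measure of total mass $\int\mu(x)\,dx=Z$ rather than a probability measure. Applying the general likelihood-ratio identity from Lemma~\ref{lem:RN}, the RN derivative factors into (i) the boundary ratio $\mu(X_T)/\nu(X_0)$ coming from the different endpoint marginals, (ii) the standard Girsanov exponential $-\tfrac{\sigma^2}{2}\int\|\nabla\phi+\nabla\psi\|^2\,dt-\sigma\int(\nabla\phi+\nabla\psi)\cdot dW_t$ for the drift difference $\sigma^2(\nabla\phi+\nabla\psi)$, and (iii) the divergence correction $-\int\nabla\cdot(f-\sigma^2\nabla\psi)\,dt$ produced when the backward stochastic integral is rewritten as a forward one through \eqref{eqn:conversion}. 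Taking $\mathbb{E}_{X\sim\overrightarrow{\mathbb{P}}^{\nu,f+\sigma^2\nabla\phi}}[\cdot]$ of both sides of this RN formula and using that the left-hand measure integrates to the total mass $Z$ immediately yields part (2).

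For part (1) I specialize to the Schr\"odinger bridge optimum. There $\overleftarrow{\mathbb{P}}^{\mu,f-\sigma^2\nabla\psi^*}=Z\cdot\overrightarrow{\mathbb{P}}^{\nu,f+\sigma^2\nabla\phi^*}$ as positive measures, so the RN derivative equals the deterministic constant $Z$ almost surely along every optimal trajectory. I substitute Nelson's identity $\nabla\phi^*+\nabla\psi^*=\nabla\log\rho^*$ into the exponent, then eliminate the stochastic integral $\sigma\int\nabla\log\rho^*\cdot dW_t$ by applying It\^o to $\log\rho^*(X_t)=(\phi^*+\psi^*)(X_t)$ along the forward optimal SDE, invoking both the HJB equation for $\phi^*$ and the dual (backward Kolmogorov) PDE for $\psi^*$ recorded in the Remark just above the Proposition. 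The quadratic $-\tfrac{\sigma^2}{2}\|\nabla\log\rho^*\|^2$ term cancels against the It\^o second-order correction, while the boundary piece $\log\rho^*_T(X_T)-\log\rho^*_0(X_0)$ combines with $\log(\mu(X_T)/\nu(X_0))$ to produce $\log Z$ (using $\rho^*_0=\nu$ and $\rho^*_T=\mu/Z$). The surviving deterministic pieces collapse precisely to $\tfrac{\sigma^2}{2}\int\Delta(\phi^*-\psi^*)\,dt+\int\nabla\cdot f\,dt$, giving the pointwise identity in (1).

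For the importance-sampling reweighting, I take $\mathbb{Q}=\overleftarrow{\mathbb{P}}^{\mu,f-\sigma^2\nabla\psi^*}$ (the optimal backward measure; total mass $Z$, terminal marginal $\mu$) and set $w^\phi(X)=d\mathbb{Q}/d\overrightarrow{\mathbb{P}}^{\nu,f+\sigma^2\nabla\phi}(X)$. Then $\mathbb{E}_\phi[g(X_T)w^\phi]=\int g(x)\mu(x)\,dx=Z\,\mathbb{E}_{p_\text{target}}[g]$ and $\mathbb{E}_\phi[w^\phi]=Z$, immediately yielding the self-normalized identity $\mathbb{E}_\phi[g(X_T)w^\phi]/\mathbb{E}_\phi[w^\phi]=\mathbb{E}_{p_\text{target}}[g]$. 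To recast $w^\phi$ in the displayed form, I use the expansion from part (2) with $\psi=\psi^*$, eliminate $\nabla\psi^*$ via $\nabla\psi^*=\nabla\log\rho^*-\nabla\phi^*$, and then apply It\^o to $\log\rho^*(X_t)$ along the $\phi$-controlled (not $\phi^*$-controlled) forward SDE to substitute the remaining stochastic integral and the $\|\nabla\phi+\nabla\psi^*\|^2$ term by deterministic divergence expressions; the unknown $\phi^*$ drops out, leaving only $\phi$, $\nabla\cdot f$, and $\Delta\log\rho^*$, which reassembles into the claimed $\frac{\mu(X_T)}{\nu(X_0)}\exp\bigl(\int[\sigma^2\Delta\phi_t-\tfrac{\sigma^2}{2}\Delta\log\rho_t+\nabla\cdot f_t]\,dt\bigr)$.

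The main obstacle is the pathwise cancellation in part (1): the raw RN exponent contains both a stochastic It\^o integral and a quadratic $\|\nabla\log\rho^*\|^2$ term, yet the final identity is purely deterministic. This collapse is not incidental; it reflects the duality between the forward HJB and the backward Kolmogorov-type PDE for the Hopf--Cole potentials, and realizing it requires a coordinated use of It\^o in both time directions together with the conversion \eqref{eqn:conversion} that trades a backward stochastic integral for a forward one plus a divergence. The same cancellation philosophy underlies the simplification in the reweighting formula. Once Lemma~\ref{lem:RN} and this pathwise cancellation are in hand, part (2) and the IS identity are essentially bookkeeping consequences of Girsanov.
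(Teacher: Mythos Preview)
Your treatment of part (2) is correct and matches the paper's: both start from the identity $1=\mathbb{E}_{\overrightarrow{\mathbb{P}}^{\nu,f+\sigma^2\nabla\phi}}\bigl[(d\overrightarrow{\mathbb{P}}^{\nu,f+\sigma^2\nabla\phi}/d\overleftarrow{\mathbb{P}}^{\mu,f-\sigma^2\nabla\psi})^{-1}\bigr]$ and read off the exponent from Lemma~\ref{lem:RN}.

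For part (1), however, your It\^o-on-the-SDE route is circular. If you apply It\^o to $\log\rho^*_t(X_t)=(\phi^*_t+\psi^*_t)(X_t)$ along $dX_t=(f+\sigma^2\nabla\phi^*)\,dt+\sigma\,dW_t$ and use the HJB/backward PDEs, you obtain
\[
\log\rho^*_T(X_T)-\log\rho^*_0(X_0)=\int_0^T\Bigl[\tfrac{\sigma^2}{2}\|\nabla\log\rho^*\|^2+\sigma^2\Delta\psi^*-\nabla\cdot f\Bigr]dt+\sigma\int_0^T\nabla\log\rho^*\cdot dW_t\,.
\]
Since $\log\rho^*_T-\log\rho^*_0=\log\frac{\mu(X_T)}{\nu(X_0)}-\log Z$, the boundary term already contains $\log Z$; substituting this back into the RN formula for $\log Z$ from part (2) collapses to a tautology rather than the displayed deterministic identity. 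Along the SDE the exponent in (1) is genuinely random and \emph{not} almost surely equal to $\log Z$. The paper instead establishes (1) along the probability-flow ODE \eqref{eqn:ode}: it simply quotes the instantaneous change-of-variables identity \eqref{eqn:covf}, which yields the formula directly with $X_t$ following the deterministic flow (the proof states explicitly ``the estimator is exact with $X_t$ following \eqref{eqn:ode}''). No It\^o cancellation is needed.

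For the reweighting, you misidentify $\rho_t$: in the paper it is $\rho_t^\phi$, the time-$t$ marginal of the \emph{suboptimal} $\phi$-controlled process, not the SB optimum $\rho^*_t$. The paper's weight is the marginal density ratio $w^\phi(X_T)=d\mu/d\rho_T^\phi(X_T)$, which depends only on $X_T$, and is rewritten pathwise via the same ODE change-of-variables formula \eqref{eqn:partial_log_rho}. Your choice $w^\phi=d\overleftarrow{\mathbb{P}}^{\mu,f-\sigma^2\nabla\psi^*}/d\overrightarrow{\mathbb{P}}^{\nu,f+\sigma^2\nabla\phi}$ is a \emph{path}-measure ratio; it is also a valid self-normalized weight, but it is a different object, and your assertion that ``the unknown $\phi^*$ drops out'' to recover the displayed formula is unjustified: along the $\phi$-controlled SDE the It\^o expansion of $\log\rho^*$ retains the cross term $\sigma^2(\nabla\phi-\nabla\phi^*)\cdot\nabla\log\rho^*$, and the stochastic integral $\sigma\int(\nabla\phi+\nabla\psi^*)\,dW_t$ still depends on $\nabla\phi^*$ through $\nabla\psi^*=\nabla\log\rho^*-\nabla\phi^*$.
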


As in \cite[Proposition F.1]{vargas2023transport}, it is possible to trade divergence term for a backward integral when dealing with path integrals using \eqref{eqn:conversion}. 
%
%
%
%
Thanks to the fact that our various estimators and regularizers are built upon path measures, the following lemma generalizes this idea and provides a recipe for estimating the regularizer from Proposition~\ref{prop:loss} and the normalizing constant from Proposition~\ref{prop:importance_sample} with discrete-time updates that are cheap to evaluate. We work out loss (c) from Proposition~\ref{prop:loss} below -- most other parts are straightforward to adapt so we simply state them in Section~\ref{sec:spec}. 
\begin{strip}
\begin{lemma}[Discretized Loss and Estimator]
\label{lem:discretization}
For $X\sim \overrightarrow{\mathbb{P}}^{\nu,f+\sigma^2 \nabla \phi}$, the last part of loss $(c)$ on $\psi$
can be estimated 
\[\text{Var}_N\Big(\psi_K(X_{K+1}^i)-\psi_0(X_0^i)+ \frac{1}{2\sigma^2 h}\sum_{k=0}^{K-1}  \|X_{k+1}^i-X_k^i-f_{k}(X_k^i)h\|^2-\|X_{k}^i-X_{k+1}^i+(f_{k+1}-\sigma^2\nabla \psi_{k+1})(X_{k+1}^i)h\|^2\Big),\] where $\text{Var}_N$ denotes empirical estimate of the variance using $N$ samples.

The importance-weighted $Z$-estimator from Proposition~\ref{prop:importance_sample} can be approximated as 
\begin{equation}
\label{eqn:discrete_z_estimate}
\hat{Z}=\frac{1}{N}\sum_{i=1}^N \exp\Big(\log\frac{\mu(X_K^i)}{\nu(X_0^i)}+\frac{1}{2}\sum_{k=0}^{K-1}\|z_k^i\|^2- \frac{1}{2\sigma^2 h}\|X_k^i-X_{k+1}^i+(f_{k+1}-\sigma^2\nabla \psi_{k+1})(X_{k+1}^i)h\|^2\Big)\, .
\end{equation}

In both cases for $i\in[N]$ independently, $z_k^i\sim\mathcal{N}(0,I)$, $X_{k+1}^i=X_k^i+(f_k(X_k^i)+\sigma^2 \nabla \phi_k(X_k^i))h+\sigma\sqrt{h} \cdot z_k^i\, .$
\end{lemma}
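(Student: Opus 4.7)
The plan is to recognize both discretized expressions as log-ratios of the Euler--Maruyama transition kernels associated with the path measures in question, then Taylor expand the Gaussian log-densities to match (i) the integrand of the $\psi$-regularizer in loss (c) and (ii) the exponent of Proposition~\ref{prop:importance_sample}(2). Concretely, the Euler forward step $X_{k+1}^i = X_k^i + (f_k+\sigma^2\nabla\phi_k)(X_k^i)h + \sigma\sqrt{h}\,z_k^i$ has log-density (up to a Gaussian normalizing constant) $-\tfrac{1}{2\sigma^2 h}\|X_{k+1}-X_k-(f_k+\sigma^2\nabla\phi_k)(X_k)h\|^2 = -\tfrac{1}{2}\|z_k\|^2$, while the Euler backward step for $\overleftarrow{\mathbb{P}}^{\mu,f-\sigma^2\nabla\psi}$, namely $X_k = X_{k+1} - (f_{k+1}-\sigma^2\nabla\psi_{k+1})(X_{k+1})h + \sigma\sqrt{h}\,\bar z_k$, has log-density $-\tfrac{1}{2\sigma^2 h}\|X_k - X_{k+1} + (f_{k+1}-\sigma^2\nabla\psi_{k+1})(X_{k+1})h\|^2$ at our sample. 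Since both Gaussian kernels share the variance $\sigma^2 h\, I$ in $\mathbb{R}^d$, their normalizations cancel and the exponent in \eqref{eqn:discrete_z_estimate} is exactly $\log[\mu(X_K)/\nu(X_0)]$ plus the log-ratio of backward over forward transition kernels, i.e.~$\log\, d\overleftarrow{\mathbb{P}}_h^{\mu,f-\sigma^2\nabla\psi}/d\overrightarrow{\mathbb{P}}_h^{\nu,f+\sigma^2\nabla\phi}$; averaging over $i$ returns $Z$ because $\overleftarrow{\mathbb{P}}^{\mu,f-\sigma^2\nabla\psi}$ has total mass $Z$ ($\mu$ being un-normalized).

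To match the continuous-time formulas I would Taylor expand both squared norms around $h=0$. The first one reduces cleanly to $\tfrac{\sigma^2 h}{2}\|\nabla\phi_k\|^2 + \sigma\sqrt{h}\,\nabla\phi_k(X_k)^\top z_k + \tfrac{1}{2}\|z_k\|^2$, whose Riemann sum converges by the forward convention in \eqref{eqn:approx} to $\tfrac{\sigma^2}{2}\int_0^T\|\nabla\phi_t\|^2\, dt + \sigma\int_0^T \nabla\phi_t^\top \overrightarrow{dW_t}$ plus the path-independent constant $\tfrac{1}{2}\sum_k\|z_k\|^2$. The second squared norm is more delicate: writing its argument as $(A_0+\delta)h-\sigma\sqrt{h}\,z_k$ with $A_0:=-\sigma^2(\nabla\phi_k(X_k)+\nabla\psi_{k+1}(X_{k+1}))$ and drift increment $\delta:=f_{k+1}(X_{k+1})-f_k(X_k)$, It\^o's formula supplies $\delta \approx \sigma\sqrt{h}\,Jf_k(X_k)\,z_k + O(h)$ with $Jf$ the Jacobian of $f$. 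Expanding the square and keeping terms that contribute at order $h$ after division by $2\sigma^2 h$ yields $\tfrac{\sigma^2 h}{2}\|\nabla\phi_k+\nabla\psi_{k+1}\|^2 + \sigma\sqrt{h}\,(\nabla\phi_k+\nabla\psi_{k+1})^\top z_k - h\, z_k^\top (Jf_k)^\top z_k + \tfrac{1}{2}\|z_k\|^2$; since $\mathbb{E}[z_k^\top(Jf_k)^\top z_k]=\nabla\cdot f_k$, the Riemann sum $\sum_k h\, z_k^\top(Jf_k)^\top z_k \to \int_0^T \nabla\cdot f_t\, dt$ with $O(\sqrt{h})$ fluctuations.

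Subtracting the two discrete sums cancels the common $\tfrac{1}{2}\|z_k\|^2$, and applying the conversion \eqref{eqn:conversion} with $a_t=\nabla\psi_t$ rewrites the backward stochastic integral $\sigma\int_0^T\nabla\psi_t^\top \overleftarrow{dW_t}$ arising from the $\sigma\sqrt{h}\,\nabla\psi_{k+1}(X_{k+1})^\top z_k$ pieces as $\sigma\int_0^T\nabla\psi_t^\top \overrightarrow{dW_t} + \sigma^2\int_0^T \Delta\psi_t\, dt$. Collecting everything, the continuum limit is $-\int_0^T\bigl[\tfrac{\sigma^2}{2}\|\nabla\psi_t\|^2 + \nabla\cdot(\sigma^2\nabla\psi_t-f_t) + \sigma^2\nabla\phi_t^\top\nabla\psi_t\bigr] dt - \sigma\int_0^T \nabla\psi_t^\top dW_t$, which once $\psi_K(X_K)-\psi_0(X_0)$ is added is exactly the random variable whose variance appears in the $\psi$-regularizer of loss (c); feeding the identical expansions into the exponent of \eqref{eqn:discrete_z_estimate} reproduces Proposition~\ref{prop:importance_sample}(2) term by term. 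The delicate step throughout is the It\^o cross term $\delta^\top z_k$: although naively of order $O(h^{3/2})$, its diagonal component carries a non-vanishing Gaussian expectation that generates the $\int\nabla\cdot f\, dt$ correction, while the matching $\sigma^2\int\Delta\psi\, dt$ piece comes from \eqref{eqn:conversion}; getting both with the right signs is exactly what makes the Euler scheme Girsanov-consistent at order $O(1)$.
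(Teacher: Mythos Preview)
Your argument is correct, but it proceeds in the opposite direction from the paper. The paper's proof is purely top-down: it starts from the continuous-time Radon--Nikodym derivative \eqref{eqn:temp2} (equivalently \eqref{eqn:rn_original-1}--\eqref{eqn:rn_original-2} with the reference set to zero), discretizes the forward $\overrightarrow{dX_t}$ and backward $\overleftarrow{dX_t}$ integrals via the left/right endpoint rules \eqref{eqn:approx}, and then completes the square in each summand algebraically to arrive at the two squared norms. The It\^o corrections that you extract by hand (the $\nabla\!\cdot f$ and $\sigma^2\Delta\psi$ terms) are never made explicit in the paper's derivation---they are hidden in the distinction between the forward and backward discretization conventions, and the whole computation is a few lines of finite-dimensional algebra.

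Your route is bottom-up: you recognize the discrete formulas as log-ratios of the Euler--Maruyama Gaussian transition kernels (which makes the unbiasedness of \eqref{eqn:discrete_z_estimate} transparent at the discrete level, since the numerator is an un-normalized backward path density with total mass $Z$), and then Taylor-expand to check that the $h\to 0$ limit reproduces the integrand of the $\psi$-regularizer and of Proposition~\ref{prop:importance_sample}(2). This is more work---tracking the $z_k^\top(Jf_k)^\top z_k$ and $z_k^\top\nabla^2\psi_k z_k$ cross terms carefully---but it explains \emph{why} the particular squared-norm structure is the natural one (it is just the Euler log-likelihood), and it gives discrete-level unbiasedness without appealing to the continuous Girsanov identity at all. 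The paper's approach is shorter but obscures this interpretation; yours is longer but more illuminating about the mechanism.
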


\end{strip}

Putting everything together gives the final algorithm. 
\begin{algorithm}[H]
\caption{Control Objective Training for Sampling from Un-normalized Density}\label{alg:algorithm}
\begin{algorithmic}
\Require Initial draw $(X_0^{i,(0)})_{i=1}^{N} \in\mathbb{R}^d \sim \nu$ independent, initial controls $\phi^{(0)},\psi^{(0)}$ 
\Require Un-normalized density $\mu$, base drift $f$, num of time steps $K$, num of iterations $T$
\For{$t=0,\cdots,T-1$}
\State Run \eqref{eqn:em-discretize} with current control $(\nabla \phi^{(t)}_k)_{k=0,\cdots, K}$ to obtain $(X_k^{n,(t)})_{k=0,\cdots, K}$ for $n=1,\cdots N$
\State Estimate the loss \eqref{eqn:log_divergence} $+$ discretized regularizer (c.f. Lemma~\ref{lem:discretization} \& Section~\ref{sec:spec}) and the gradient w.r.t the two parameterized controls using the samples $(X_k^{n,(t)})_{k=0,\cdots K, 
 n=1,\cdots, N}$
\State Gradient update on the parameters to obtain $\nabla \phi^{(t+1)}$ and $\nabla \psi^{(t+1)}$
\EndFor
\State\Return $X_K^{1,(T)},\cdots, X_K^{N,(T)}$ as $N$ samples from $\mu$ with their importance weights $w^{\phi^{(T)}}(X_K^{n,(T)})$ \eqref{eqn:discretized_w}, and the weighted $Z$ estimator \eqref{eqn:discrete_z_estimate} for $\mu$
\end{algorithmic}
\end{algorithm}

We also draw a connection between the optimal $Z$ estimator and the optimal controls below. 
\begin{remark}[Optimal $\log Z$ estimator]
As observed in \citep{vargas2023transport}, discretizations with backward integral have the additional benefit of giving ELBO lower bound for the normalizing constant $Z$ of $p_{\text{target}}$. Since our $\hat{Z}$-estimator from Lemma~\ref{lem:discretization} can be understood as a ratio of two discrete chains 
\begin{equation*}
\hat{Z}=\frac{\mu(X_K)q^v(X_{0:K-1} | X_K)}{\nu(X_0)p^u(X_{1:K}|X_0)}\, ,
\end{equation*}
it implies that $\mathbb{E}_{\nu(X_0)p^u(X_{1:K}|X_0)}[\log \hat{Z}]\leq \log (\mathbb{E}_{\nu(X_0)p^u(X_{1:K}|X_0)}[\hat{Z}]) =\log\left[\int \mu(X_K) \, dX_K\right] =\log(Z)$\,.
The estimator is reminiscent of the philosophy adopted in annealed importance sampling (AIS) with extended target, but in our case, the backward kernel $q^v(\cdot)$ is the \emph{time-reversal} of the forward one, which can be shown to be the optimal transition kernel minimizing the variance of the resulting evidence estimate \citep{doucet2022score}. In addition, the forward kernel $p^u(\cdot)$ in our case follows an optimal trajectory that ensures the chain reaches the \emph{exact} target $\mu$ rapidly. 
\end{remark}

\section{NUMERICS \& COMPARISONS}
\label{sec:experiment}
In this section, we instantiate our main contributions (Proposition \ref{prop:loss} and Lemma \ref{lem:discretization}) and offer numerical evidence on their advantages compared to existing proposals for solving an optimal trajectory problem in a typical MCMC setup. We highlight that our algorithm does not use gradient information from the target $\nabla \log \mu$ as e.g., Langevin would.

\subsection{Algorithm Specification} 
\label{sec:spec}
Below for simplicity we pick the reference process to be a Brownian motion with $f=0$ and $\lambda$ is a parameter that we tune for best performance, but in theory any $\lambda>0$ would work. This aspect deviates from other constrained formulation of the problem that may require special choice of $\lambda$. 

(1) PINN-regularization \citep{vargas2023transport}: for $i=1,\cdots,n$, and $Z\sim\mathcal{N}(0,I)$ independently draw in parallel
\begin{equation}
\label{eqn:em-discretize}
x_{k+1}^i = x_k^i+\sigma^2 h\nabla \phi(x_k^i,kh)+\sigma \sqrt{h}Z_k^i,\, x_0^i\sim \nu
\end{equation}
for $k=0,\cdots,K$ with stepsize $h=c/(K+1)$ for some $c\geq 1$. Using trajectories $\{x_k^i\}$, minimize over $\phi,\psi$,

\small
\begin{align}
&\frac{1}{K+1}\text{Var}_n \Big[\log \frac{\nu(x_0^i)}{\mu(x_{K+1}^i)}+\nonumber\\
&\sum_{k=0}^K \frac{1}{2\sigma^2 h}(\|x_k^i-x_{k+1}^i- 
\sigma^2h\nabla \psi(x_{k+1}^i,(k+1)h)\|^2 \nonumber \\
&- \|x_{k+1}^i-x_k^i-\sigma^2 h \nabla \phi(x_k^i,kh)\|^2)\Big]+  \label{eqn:log_divergence}\\ 
&\frac{\lambda h}{n}\sum_{i=1}^n \sum_{k=0}^{K} \Big|\partial_t \phi(x_k^i,kh)+
\frac{\sigma^2}{2}\Delta \phi(x_k^i,kh)+\frac{\sigma^2}{2}\|\nabla \phi(x_k^i,kh)\|^2\Big|  \label{pinn:loss}
\end{align}\normalsize 
The first term \eqref{eqn:log_divergence} is an estimate of the log-variance divergence between the two path measures. Above $\phi(x,t), \psi(x,t)$ are two neural networks that take $t\in \mathbb{R}$ and $x\in \mathbb{R}^d$ as inputs and map to $\mathbb{R}$. We repeat \eqref{eqn:em-discretize} and \eqref{pinn:loss} several times, and compute statistics using samples $\{x_{K+1}^i\}_{i=1}^n$ at the end. 


(2) Variance-regularization (loss (b) from Proposition \ref{prop:loss}): Simulate trajectories \eqref{eqn:em-discretize} as before, additionally simulate $\{y_k^i\}_{k=0}^K$ as follows and cache them: 
\begin{equation}
\label{eqn:em-discretize_uc}
y_{k+1}^i = y_k^i+\sigma \sqrt{h}Z_k^i,\, y_0^i\sim \nu\, .
\end{equation}
%
Minimize over $\phi,s$ the following discretized loss
\small
\begin{align}
&\eqref{eqn:log_divergence}+ \frac{\lambda}{K+1}\cdot  \text{Var}_n \Big[\phi(y_{K+1}^i,(K+1)h)-\phi(y_0^i,0)+\frac{1}{2\sigma^2h}\nonumber\\
&\sum_{k=0}^K 
\|y_{k+1}^i-y_k^i-\sigma^2\nabla \phi(y_k^i,kh)h\|^2-\|y_{k+1}^i-y_k^i\|^2\Big] \label{var:loss}
\end{align} \normalsize
Alternate between \eqref{eqn:em-discretize} and \eqref{var:loss} several times. Above $\text{Var}_n$ denotes the empirical variance across the $n$ trajectories $\{y_k^i\}_{i=1}^n$ of the quantity inside $[\cdot]$. The loss \eqref{var:loss}, as the proof of Proposition \ref{prop:loss} shows, comes from the fact that along the prior $X\sim\overrightarrow{\mathbb{P}}^{\nu,f}$, if $\phi$ is optimal,
\begin{align}
&\log \Big(\frac{d\overrightarrow{\mathbb{P}}^{\nu,f+\sigma^2\nabla \phi_t}}{d\overrightarrow{\mathbb{P}}^{\nu,f}}\Big)(X) = \label{eqn:crucial_condition}\\
&\int_0^T -\frac{\sigma^2}{2}\|\nabla \phi_t\|^2 dt + \int_0^T \sigma\nabla \phi_t^\top \,dW_t 
 {\color{red}{ \overset{!}{=} \phi_T(X_T)-\phi_0(X_0)}}\nonumber
\end{align}
has to satify the factorization \eqref{eqn:factorize_condition-1}. And we discretized the Radon-Nikodym derivative \eqref{eqn:crucial_condition} similar to how it was done in the KL divergence $D_{KL}(\overrightarrow{\mathbb{P}}^{\nu,\sigma^2 \nabla \phi}\Vert\overleftarrow{\mathbb{P}}^{\mu,-\sigma^2\nabla \psi})$ (c.f. Lemma \ref{lem:discretization} for a similar derivation).

(3) Instead of the regularizer \eqref{var:loss}, another discretization of loss (b) using condition \eqref{eqn:crucial_condition} can be a LSTD-like regularizer similar in spirit to \citep{liu2022deepRL}:
\begin{align}
&\eqref{eqn:log_divergence}+\lambda\cdot \frac{h}{n}\sum_{i=1}^n \sum_{k=0}^K \Big\vert\phi(y_{k+1}^i,(k+1)h)-\phi(y_{k}^i,kh)+\nonumber\\
&\frac{\sigma^2 h}{2} \|\nabla \phi(y_{k}^i,kh)\|^2-\sigma\sqrt{h}\nabla \phi(y_{k}^i,kh)^\top Z_k^i\Big\vert\, , \label{eqn:TD}
\end{align}
where the $Z_k^i$'s are re-used from \eqref{eqn:em-discretize_uc}.
The loss above can be justified with Lemma \ref{lem:sde_for_sb}. 

For the first three losses that we experiment \eqref{pinn:loss}, \eqref{var:loss}, \eqref{eqn:TD}, an analogous regularization on the backward drift involving $\nabla \psi$ is also possible (c.f. Remark~\ref{rmk:backward_drift}).

%

(4) Separately-controlled loss (loss (c) from Proposition \ref{prop:loss}): Simulate \eqref{eqn:em-discretize} as before, with the $n$ trajectories $\{x_k^i\}$, minimize over $\phi,\psi$ the following discretized loss (c.f. Lemma \ref{lem:discretization}): 
\small
\begin{align}
&\text{Var}_n[\psi(x_{K+1}^i,(K+1)h)+\phi(x_{K+1}^i,(K+1)h)-\log \mu(x_{K+1}^i)] + \nonumber\\
&\text{Var}_n[\psi(x_0^i,0)+\phi(x_0^i,0)-\log \nu(x_0^i)]+ \label{eqn:new_loss}\\ 
&\frac{\lambda}{K+1}  \text{Var}_n \Big[\psi(x_{K+1}^i,(K+1)h)-\psi(x_0^i,0)+\frac{1}{2\sigma^2h}\nonumber\\
&\sum_{k=0}^K  \|x_{k+1}^i-x_k^i\|^2 - \|x_{k}^i-x_{k+1}^i-\sigma^2 h\nabla \psi(x_{k+1}^i,(k+1)h)\|^2 \Big] \nonumber\\
&+\frac{\lambda}{K+1}\text{Var}_n\Big[ \phi(x_0^i,0)-\phi(x_{K+1}^i,(K+1)h)+\frac{1}{2\sigma^2 h}\nonumber\\
&\sum_{k=0}^K  
\|x_{k+1}^i-x_k^i\|^2-\|x_{k+1}^i-x_k^i-\sigma^2 h\nabla \phi(x_k^i,kh)\|^2\Big] .  \nonumber
\end{align}
\normalsize
It consists of $4$ variance terms, the last $2$ coming from discretization of path measures \eqref{eqn:separate_path}. We alternate between simulating \eqref{eqn:em-discretize} and updating $\phi,\psi$ from \eqref{eqn:new_loss}. One can also consider SDE-based discretization for the last 2 terms but will incur additional Laplacian and divergence terms as suggested by Lemma \ref{lem:sde_for_sb}, \eqref{eqn:fbsde_control_1}-\eqref{eqn:fbsde_control_2}. 



\paragraph{Comparison between discretized losses} We emphasize that the discretized variance regularizers \eqref{var:loss} and \eqref{eqn:backward_var} wouldn't be available without the path-wise stochastic process perspective. The FBSDE view (Lemma \ref{lem:sde_for_sb}) will naturally lend to TD-like regularizers \eqref{eqn:TD} and \eqref{eqn:backward_td} similar to \citep{liu2022deepRL}, but the crucial differences are: (1) we base the estimate on reference $\overrightarrow{\mathbb{P}}^{\nu,f}$ therefore there's no need to differentiate through the generated trajectory when optimize the loss over $\nabla \phi$ or $\nabla \psi$; (2) \citep{liu2022deepRL} consider dynamics with mean-field interaction and different loss. The two discretized objectives \eqref{var:loss}/\eqref{eqn:backward_var} and \eqref{eqn:new_loss} both enjoy variance reduction property as elaborated in Remark~\ref{rem:variance}, without the need to evaluate expensive Laplacian terms (as the PDE-based PINN approach \eqref{pinn:loss}/\eqref{eqn:backward_pinn} or TD approach \eqref{eqn:backward_td} would require). 

\subsection{Result}
In our experiments, we picked the following 4 targets as benchmark as they capture different properties of distributions that can affect the training of the controls. These properties (multimodality, spatially varying curvature) pose challenges for MCMC methods and are ideal examples to demonstrate the differences between the regularizers.
\vspace{-0.2cm}
\begin{figure}[h]
   \centering
        \includegraphics[width=0.8\hsize]{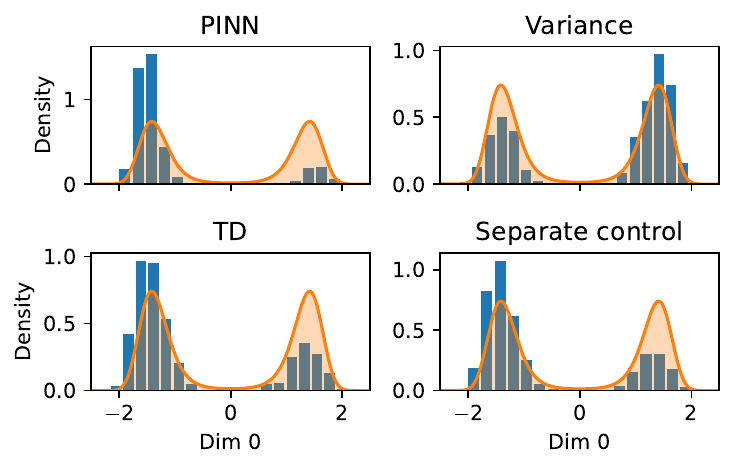}
    \caption{Weighted Marginal for Double Well}
\end{figure}
\vspace{-0.2cm}
\begin{table}[H]
\caption{$-\log Z$ Estimator (lower is better)}
\begin{center}
\begin{tabular}{@{}lcccc@{}}
\toprule
                & PINN                     & Variance & TD    & SC \\ \midrule
Normal & \num{6.317639350891113} & \num{2.8825278282165527}    & \num{2.013974666595459}  & \textbf{\num{1.3046903610229492}}            \\
Funnel          & \num{8.854369163513184} & \num{3.560049533843994}     & \num{3.5501890182495117}
 & \textbf{\num{3.390867233276367}}             \\
GMM             & \num{5.802513122558594} & \num{8.346628189086914}     & \textbf{\num{4.097251892089844}}  & \num{4.393266677856445}             \\
Double well & \num{12.274190902709961} & \num{5.391733169555664} & \num{4.436683654785156} & \textbf{\num{1.7614850997924805}} \\ \bottomrule
\end{tabular}
\end{center}
 \end{table}



\vspace{-0.2cm}
Additional supporting numerical experiments and details can be found in Appendix~\ref{app:numerics}. Across our experiments, we observe that our separately controlled loss \eqref{eqn:new_loss} is generally much better compared to the PINN loss \eqref{pinn:loss} that tends to exhibit mode-seeking behavior, while TD \eqref{eqn:TD} and variance regularizer \eqref{var:loss} can sometimes be comparable. We also observe that the separately controlled loss is less sensitive to tuning parameters, and the training loss curve is often smoother.

From a practical standpoint, there are additional benefits for basing the methodology on a SB formulation. Since we have the knowledge that the optimal control vector field (1) takes the gradient form of a scalar function (i.e., divergence free therefore conservative); (2) satisfy the HJB equation which admit e.g., rotation equivariance: if $\phi(x,t),f(x,t)$ is a solution, $\phi(Rx,t), R^\top f(Rx,t)$ is another solution for a rotation matrix $R$. So in the example $f_t(x_t)=-x_t$, it simply implies that $\phi_t(x)$ is a radial function. These can be baked into the architecture as inductive bias for training in practical implementation \citep{kondor2018clebsch,richter2022neural} so the NN can be sufficiently constrained to be more sample-efficient. This is an advantage compared to other diffusion-like samplers since this type of precise characterization of the optimal solution is something non-SB-bridge-based samplers do not admit.



\section{CONCLUSION}
We exploited the connections between diffusion generative modeling, stochastic control and optimal transport, with the goal of sampling from high-dimensional, complex distributions in mind. This is orthogonal to MCMC-based approaches, and is accomplished by a more ``learning-driven" methodology on the optimal control / drift that can be trained with a suitable control objective. While there are a lot of flexibility in the design (that also makes the case for SB more compelling than existing pathwise samplers mentioned in Section~\ref{sec:related}), we have illustrated that careful choice is needed for both numerical implementation and generalizability of the framework.  

More broadly, the mapping between target $p_{\text{target}}(x)$ and control $\phi(x,t)$ can be thought of as a form of operator learning mapping between functions in infinite dimensional space, for which universal approximation theorems have emerged recently for various neural network architectures \citep{de2022generic}. More specifically, one can view the task as learning the solution operator of the coupled PDEs \eqref{eqn:hjb}-\eqref{eqn:FPK}. This also points to the fact that our SB-based methodology can effectively leverage available data (i.e., supervised $\mu,\nabla \phi$ pairs) to generalize across tasks rather than being a task-specific problem only, such as the other diffusion-type samplers that train for path-measure consistency. Application-wise, exploiting such methods in the context of molecular dynamics simulation for sampling transition path is also exciting and development is already underway \citep{holdijk2024stochastic}. 

\bibliography{sample_paper}

\begin{thebibliography}{34}
\providecommand{\natexlab}[1]{#1}
\providecommand{\url}[1]{\texttt{#1}}
\expandafter\ifx\csname urlstyle\endcsname\relax
  \providecommand{\doi}[1]{doi: #1}\else
  \providecommand{\doi}{doi: \begingroup \urlstyle{rm}\Url}\fi

\bibitem[Benamou and Brenier(2000)]{benamou2000computational}
Jean-David Benamou and Yann Brenier.
\newblock {A computational fluid mechanics solution to the Monge-Kantorovich
  mass transfer problem}.
\newblock \emph{Numerische Mathematik}, 84\penalty0 (3):\penalty0 375--393,
  2000.

\bibitem[Berner et~al.(2022)Berner, Richter, and Ullrich]{berner2022optimal}
Julius Berner, Lorenz Richter, and Karen Ullrich.
\newblock {An optimal control perspective on diffusion-based generative
  modeling}.
\newblock \emph{arXiv preprint arXiv:2211.01364}, 2022.

\bibitem[Caluya and Halder(2021)]{caluya2021wasserstein}
Kenneth~F Caluya and Abhishek Halder.
\newblock {Wasserstein proximal algorithms for the Schr{\"o}dinger bridge
  problem: Density control with nonlinear drift}.
\newblock \emph{IEEE Transactions on Automatic Control}, 67\penalty0
  (3):\penalty0 1163--1178, 2021.

\bibitem[Chen et~al.(2018)Chen, Rubanova, Bettencourt, and
  Duvenaud]{chen2018neural}
Ricky~TQ Chen, Yulia Rubanova, Jesse Bettencourt, and David~K Duvenaud.
\newblock {Neural ordinary differential equations}.
\newblock \emph{Advances in neural information processing systems}, 31, 2018.

\bibitem[Chen et~al.(2021{\natexlab{a}})Chen, Liu, and
  Theodorou]{chen2021likelihood}
Tianrong Chen, Guan-Horng Liu, and Evangelos Theodorou.
\newblock {Likelihood Training of Schr{\"o}dinger Bridge using Forward-Backward
  SDEs Theory}.
\newblock In \emph{International Conference on Learning Representations},
  2021{\natexlab{a}}.

\bibitem[Chen et~al.(2023)Chen, Liu, Tao, and Theodorou]{chen2024deep}
Tianrong Chen, Guan-Horng Liu, Molei Tao, and Evangelos Theodorou.
\newblock {Deep momentum multi-marginal Schr{\"o}dinger bridge}.
\newblock \emph{Advances in Neural Information Processing Systems}, 36, 2023.

\bibitem[Chen et~al.(2021{\natexlab{b}})Chen, Georgiou, and
  Pavon]{chen2021stochastic}
Yongxin Chen, Tryphon~T Georgiou, and Michele Pavon.
\newblock {Stochastic control liaisons: Richard Sinkhorn meets Gaspard Monge on
  a Schr\"odinger Bridge}.
\newblock \emph{SIAM Review}, 63\penalty0 (2):\penalty0 249--313,
  2021{\natexlab{b}}.

\bibitem[De~Bortoli et~al.(2021)De~Bortoli, Thornton, Heng, and
  Doucet]{de2021diffusion}
Valentin De~Bortoli, James Thornton, Jeremy Heng, and Arnaud Doucet.
\newblock {Diffusion Schr{\"o}dinger bridge with applications to score-based
  generative modeling}.
\newblock \emph{Advances in Neural Information Processing Systems},
  34:\penalty0 17695--17709, 2021.

\bibitem[De~Ryck and Mishra(2022)]{de2022generic}
Tim De~Ryck and Siddhartha Mishra.
\newblock {Generic bounds on the approximation error for physics-informed (and)
  operator learning}.
\newblock \emph{Advances in Neural Information Processing Systems},
  35:\penalty0 10945--10958, 2022.

\bibitem[Dockhorn et~al.(2022)Dockhorn, Vahdat, and Kreis]{dockhorn2022score}
Tim Dockhorn, Arash Vahdat, and Karsten Kreis.
\newblock {Score-Based Generative Modeling with Critically-Damped Langevin
  Diffusion}.
\newblock In \emph{International Conference on Learning Representations
  (ICLR)}, 2022.

\bibitem[Doucet et~al.(2022)Doucet, Grathwohl, Matthews, and
  Strathmann]{doucet2022score}
Arnaud Doucet, Will Grathwohl, Alexander~G Matthews, and Heiko Strathmann.
\newblock {Score-based diffusion meets annealed importance sampling}.
\newblock \emph{Advances in Neural Information Processing Systems},
  35:\penalty0 21482--21494, 2022.

\bibitem[Holdijk et~al.(2023)Holdijk, Du, Hooft, Jaini, Ensing, and
  Welling]{holdijk2024stochastic}
Lars Holdijk, Yuanqi Du, Ferry Hooft, Priyank Jaini, Berend Ensing, and Max
  Welling.
\newblock {Stochastic Optimal Control for Collective Variable Free Sampling of
  Molecular Transition Paths}.
\newblock \emph{Advances in Neural Information Processing Systems}, 36, 2023.

\bibitem[Hyv{\"a}rinen and Dayan(2005)]{hyvarinen2005estimation}
Aapo Hyv{\"a}rinen and Peter Dayan.
\newblock {Estimation of non-normalized statistical models by score matching}.
\newblock \emph{Journal of Machine Learning Research}, 6\penalty0 (4), 2005.

\bibitem[Janati et~al.(2020)Janati, Muzellec, Peyr{\'e}, and
  Cuturi]{janati2020entropic}
Hicham Janati, Boris Muzellec, Gabriel Peyr{\'e}, and Marco Cuturi.
\newblock {Entropic optimal transport between unbalanced Gaussian measures has
  a closed form}.
\newblock \emph{Advances in neural information processing systems},
  33:\penalty0 10468--10479, 2020.

\bibitem[Kondor et~al.(2018)Kondor, Lin, and Trivedi]{kondor2018clebsch}
Risi Kondor, Zhen Lin, and Shubhendu Trivedi.
\newblock {Clebsch--gordan nets: a fully fourier space spherical convolutional
  neural network}.
\newblock \emph{Advances in Neural Information Processing Systems}, 31, 2018.

\bibitem[L{\'e}onard(2014)]{leonard2014survey}
Christian L{\'e}onard.
\newblock {A survey of the Schr{\"o}dinger problem and some of its connections
  with optimal transport}.
\newblock \emph{Discrete and Continuous Dynamical Systems-Series A},
  34\penalty0 (4):\penalty0 1533--1574, 2014.

\bibitem[Liu et~al.(2022)Liu, Chen, So, and Theodorou]{liu2022deepRL}
Guan-Horng Liu, Tianrong Chen, Oswin So, and Evangelos Theodorou.
\newblock {Deep generalized Schr{\"o}dinger bridge}.
\newblock \emph{Advances in Neural Information Processing Systems},
  35:\penalty0 9374--9388, 2022.

\bibitem[N{\"u}sken and Richter(2021)]{nusken2021interpolating}
Nikolas N{\"u}sken and Lorenz Richter.
\newblock {Interpolating between BSDEs and PINNs: deep learning for elliptic
  and parabolic boundary value problems}.
\newblock \emph{arXiv preprint arXiv:2112.03749}, 2021.

\bibitem[Pavon et~al.(2021)Pavon, Trigila, and Tabak]{pavon2021data}
Michele Pavon, Giulio Trigila, and Esteban~G Tabak.
\newblock {The Data-Driven Schr{\"o}dinger Bridge}.
\newblock \emph{Communications on Pure and Applied Mathematics}, 74\penalty0
  (7):\penalty0 1545--1573, 2021.

\bibitem[Peluchetti(2023)]{peluchetti2023diffusion}
Stefano Peluchetti.
\newblock {Diffusion Bridge Mixture Transports, Schr\"odinger Bridge Problems
  and Generative Modeling}.
\newblock \emph{arXiv preprint arXiv:2304.00917}, 2023.

\bibitem[Peyr{\'e} and Cuturi(2019)]{peyre2019computational}
Gabriel Peyr{\'e} and Marco Cuturi.
\newblock {Computational optimal transport: With applications to data science}.
\newblock \emph{Foundations and Trends in Machine Learning}, 11\penalty0
  (5-6):\penalty0 355--607, 2019.

\bibitem[Raissi et~al.(2019)Raissi, Perdikaris, and
  Karniadakis]{raissi2019physics}
Maziar Raissi, Paris Perdikaris, and George~E Karniadakis.
\newblock {Physics-informed neural networks: A deep learning framework for
  solving forward and inverse problems involving nonlinear partial differential
  equations}.
\newblock \emph{Journal of Computational physics}, 378:\penalty0 686--707,
  2019.

\bibitem[Richter et~al.(2023)Richter, Berner, and Liu]{richter2023improved}
Lorenz Richter, Julius Berner, and Guan-Horng Liu.
\newblock {Improved sampling via learned diffusions}.
\newblock \emph{arXiv preprint arXiv:2307.01198}, 2023.

\bibitem[Richter-Powell et~al.(2022)Richter-Powell, Lipman, and
  Chen]{richter2022neural}
Jack Richter-Powell, Yaron Lipman, and Ricky~TQ Chen.
\newblock {Neural conservation laws: A divergence-free perspective}.
\newblock \emph{Advances in Neural Information Processing Systems},
  35:\penalty0 38075--38088, 2022.

\bibitem[Schr{\"o}dinger(1931)]{schrodinger1931umkehrung}
Erwin Schr{\"o}dinger.
\newblock \emph{{{\"U}ber die umkehrung der naturgesetze}}.
\newblock Sitzungsberichte der preussischen Akademie der Wissenschaften,
  physikalische mathematische Klasse, 1931.

\bibitem[Shi et~al.(2023)Shi, De~Bortoli, Campbell, and
  Doucet]{shi2024diffusion}
Yuyang Shi, Valentin De~Bortoli, Andrew Campbell, and Arnaud Doucet.
\newblock {Diffusion Schr{\"o}dinger bridge matching}.
\newblock \emph{Advances in Neural Information Processing Systems}, 36, 2023.

\bibitem[Song et~al.(2020)Song, Sohl-Dickstein, Kingma, Kumar, Ermon, and
  Poole]{song2020score}
Yang Song, Jascha Sohl-Dickstein, Diederik~P Kingma, Abhishek Kumar, Stefano
  Ermon, and Ben Poole.
\newblock {Score-Based Generative Modeling through Stochastic Differential
  Equations}.
\newblock In \emph{International Conference on Learning Representations}, 2020.

\bibitem[Thijssen and Kappen(2015)]{thijssen2015path}
Sep Thijssen and HJ~Kappen.
\newblock {Path integral control and state-dependent feedback}.
\newblock \emph{Physical Review E}, 91\penalty0 (3):\penalty0 032104, 2015.

\bibitem[Tzen and Raginsky(2019)]{tzen2019theoretical}
Belinda Tzen and Maxim Raginsky.
\newblock {Theoretical guarantees for sampling and inference in generative
  models with latent diffusions}.
\newblock In \emph{Conference on Learning Theory}, pages 3084--3114. PMLR,
  2019.

\bibitem[Vargas and N{\"u}sken(2023)]{vargas2023transport}
Francisco Vargas and Nikolas N{\"u}sken.
\newblock {Transport, Variational Inference and Diffusions: with Applications
  to Annealed Flows and Schr\"odinger Bridges}.
\newblock \emph{arXiv preprint arXiv:2307.01050}, 2023.

\bibitem[Vargas et~al.(2021)Vargas, Thodoroff, Lamacraft, and
  Lawrence]{vargas2021solving}
Francisco Vargas, Pierre Thodoroff, Austen Lamacraft, and Neil Lawrence.
\newblock {Solving Schr{\"o}dinger bridges via maximum likelihood}.
\newblock \emph{Entropy}, 23\penalty0 (9):\penalty0 1134, 2021.

\bibitem[Vargas et~al.(2022)Vargas, Grathwohl, and Doucet]{vargas2022denoising}
Francisco Vargas, Will~Sussman Grathwohl, and Arnaud Doucet.
\newblock {Denoising Diffusion Samplers}.
\newblock In \emph{The Eleventh International Conference on Learning
  Representations}, 2022.

\bibitem[Zhang and Chen(2021)]{zhang2021path}
Qinsheng Zhang and Yongxin Chen.
\newblock {Path Integral Sampler: A Stochastic Control Approach For Sampling}.
\newblock In \emph{International Conference on Learning Representations}, 2021.

\bibitem[Zhou et~al.(2021)Zhou, Han, and Lu]{zhou2021actor}
Mo~Zhou, Jiequn Han, and Jianfeng Lu.
\newblock {Actor-critic method for high dimensional static
  Hamilton--Jacobi--Bellman partial differential equations based on neural
  networks}.
\newblock \emph{SIAM Journal on Scientific Computing}, 43\penalty0
  (6):\penalty0 A4043--A4066, 2021.

\end{thebibliography}
\bibliographystyle{plainnat}

\appendix
\onecolumn
\clearpage
\thispagestyle{empty}
\aistatstitle{Control, Transport and Sampling Towards Better Loss Design: \\
Supplementary Materials}



\section{ADDITIONAL MOTIVATION \& PRACTICAL CONSIDERATION}
\label{appsec:motivation}

The main application we have in mind is for gradient-free sampling but the very same methodology can be applied for optimal transport and stochastic control applications. To give a concrete example, one might have handwriting of 2 digits collected from a group of individuals as the 2 marginals, the coupling $\pi^* (\nu,\mu)$ identified by the SB will put most of its mass on the digits written by the same people, therefore this kind of matching capability has applications beyond marginal sampling at the terminal time. The optimal control aspect of the SB trajectory finds application in e.g., transition path / rare-event sampling \citep{holdijk2024stochastic}, which is a well-known challenge in computational chemistry.
\begin{figure}[H]
\centering
\includegraphics[width=0.4\hsize]{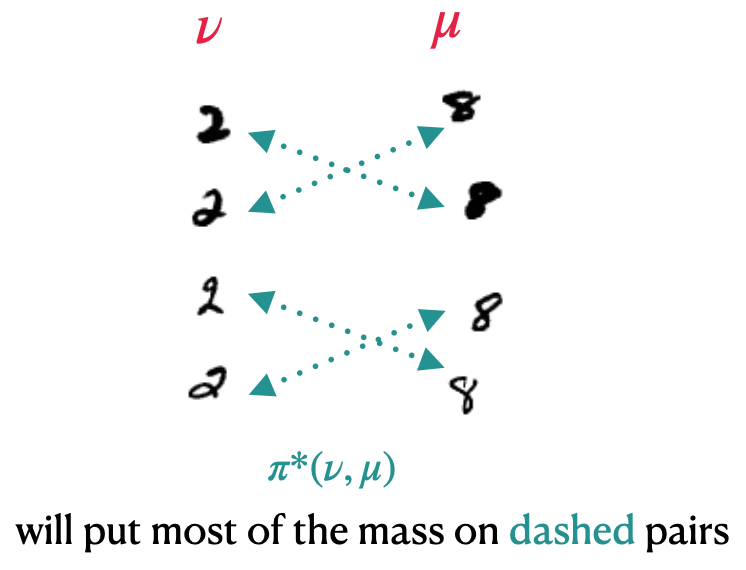}
\caption{Optimal Transport Between Fixed Marginals}
\end{figure}

Beyond being able to leverage special NN architecture, from a differential programming perspective, the ``precise understanding of the solution" for SB is also beneficial. At optimality after our training procedure, since we know the optimal control solves a convex optimization problem (\eqref{eqn:bb-1}-\eqref{eqn:bb-2} is not convex in $\rho, u$ as written, but it can be turned into a convex problem in a new set of parameters $\rho,m = \rho u$), if we perturb the target density $\mu(\cdot)$ slightly, one can leverage implicit differentiation to work out a perturbation on the optimal control $u_t(\cdot)$ locally, without differentiating through the solver that is used for training. This is because one can view the operation as a mapping $\mu \mapsto (\rho_t,u_t)$ through the KKT condition. This notion of uniqueness and optimality imply our SB formulation can avoid expensive retraining when deployed in multiple similar instances, in contrast to other diffusion-style samplers \citep{vargas2022denoising, richter2023improved}. We view having an optimality baked into the solution as ideal for blending problem structure into generic off-the-shelf machine learning methods.


\vfill

\section{\uppercase{Various Perspectives on the Schr\"odinger Bridge Problem}}
\label{sec:SB}
Most of these can be traced out in \citep{leonard2014survey,chen2021stochastic}, which has its roots in statistical mechanics \citep{schrodinger1931umkehrung} (and in modern terms, closely related to large-deviation results via Sanov's theorem). For us however, the following perspectives will be more fruitful. 

(1) Mixture of pinned diffusions (weights given by $\pi^*$) can be seen by disintegration of path measure:
\begin{equation}
\label{eqn:path}
D_{KL}(P\Vert Q) = D_{KL}(P_{0T}\Vert Q_{0T})+\int D_{KL} (P^{zx}\Vert Q^{zx})\, dP_{0T}(z,x)\, .
\end{equation}
Since the only constraints are on the two end-points \eqref{eqn:bridge_problem} is therefore equivalent to \eqref{eqn:static}, by choosing $P^{zx}=Q^{zx}$. The optimal solution takes the form $P^*=\pi^*Q^{zx}$, which means one can sample from $(z,x)\sim \pi^*\in \Pi_{\nu,\mu}$, and sample from the bridges conditioned on the end-points at $t=0,T$. 

(2) It has the interpretation of entropy-regularized optimal transport \citep{peyre2019computational} when $f=0$: in terms of static formulation because of \eqref{eqn:path}, if the reference $Q$ is simply the Wiener process (accordingly $r(x|z)\propto e^{-\frac{1}{2T}\|x-z\|^2}$),
\begin{align}
\label{eqn:static}
\pi^*(z,x)&=\arg\min_{\pi_z=\nu, \pi_x=\mu} \; D_{KL}(\pi(z,x)\Vert r(z,x)) \\
&= \arg\min_{\pi_z=\nu, \pi_x=\mu} \; \mathbb{E}_{z\sim \nu}[D_{KL}(\pi(x\vert z)\Vert r(x\vert z))]+ D_{KL}(\nu(z)\Vert r(z))\nonumber\\
&= \arg\min_{\pi_z=\nu, \pi_x=\mu} \int \frac{1}{2}\|x-z\|^2 \pi(z,x)\, dxdz+ T \int \pi(z,x)\log \pi(z,x)\, dxdz  \nonumber\\
&= \arg\min_{\pi_z=\nu, \pi_x=\mu} \int \frac{1}{2}\underbrace{\|x-z\|^2}_{\mathcal{C}(z,x)} \pi(z,x)\, dxdz+ T \underbrace{\int \pi(z,x)\log \frac{\pi(z,x)}{\nu(z)\otimes \mu(x)}\, dxdz}_{D_{KL}(\pi(z,x)\Vert \nu(z)\otimes\mu(x))}\, , \nonumber
\end{align}
where the first term is nothing but the definition of the Wasserstein-2 distance (and optimal transport with quadratic cost in the sense of Kantorovich). The second entropy term favors independent coupling $\nu\otimes \mu$. This is a Lagrangian description of the transport. Different choices of $r(x|z)$ transition will induce different transport costs. The objective above can also be written as 
\[\arg\min_{\pi_z=\nu, \pi_x=\mu}\; D_{KL}\left(\pi(z,x)\,\Vert\, e^{-\mathcal{C}(z,x)/2T}\nu(z)\otimes \mu(x)\right) \, .\] 
In terms of dynamical formulation, via Girsanov's theorem on path measure, with the controlled dynamics as 
\[dX_t = [f_t(X_t)+\sigma u_t(X_t)]\, dt+\sigma dW_t\,,\] \eqref{eqn:bridge_problem} can be reformulated as a \emph{constrained} problem: 
\begin{equation}
\label{eqn:sb_contrain}
P^*=\arg\min_P\, \mathbb{E}_{X\sim\overrightarrow{\mathbb{P}}^{\nu,f+\sigma u}}\left[\frac{1}{2}\int_0^T \|u_t(X_t)\|^2 dt\, \Big\vert\, \overrightarrow{\mathbb{P}}^{\nu,f+\sigma u}_T = \mu\right]\, .
\end{equation}
Or a \emph{regularized} Benamou-Brenier fluid-dynamics \citep{benamou2000computational} analogy of the optimal transport 
\begin{align}
&\inf_{\rho,v} \; \int_{\mathbb{R}^d} \int_0^T \left[\frac{1}{2}\|v_t(X_t)-\bar{v}_t(X_t)\|^2+\frac{\sigma^4}{8}\left\|\nabla \log\frac{\rho_t(X_t)}{\bar{\rho}_t(X_t)}\right\|^2\right] \rho_t(X_t) \, dt dx \label{eqn:continuity-1}\\
\label{eqn:continuity}
&\text{s.t.}\; \frac{\partial \rho_t}{\partial t}+\nabla \cdot (\rho_t v_t)=0, \, \rho_0=\nu, \rho_T=\mu\, .
\end{align}
Above $\bar{v}_t=f_t-\frac{\sigma^2}{2}\nabla \log\bar{\rho}_t$ is the velocity field of the prior process, and we see the penalization results in an additional relative Fisher information term. Since $v^*$ is a velocity field in the continuity equation, it means a deterministic evolution (i.e., ODE) as
\[\dot{X}_t=v_t(X_t),\, X_0\sim \nu\]
will have $X_t\sim \rho_t$, the optimal entropic interpolation flow, which gives an Eulerian viewpoint. 

(3) Optimal control views the problem as steering $\nu$ at $t=0$ to $\mu$ at $t=T$ with minimal control effort. The value function (i.e., optimal cost-to-go)
\begin{equation}
\label{eqn:value_function}
V(x,t):= \min_u\, \mathbb{E}_u\left[\frac{1}{2}\int_t^T \|u_s(X_s)\|^2ds \,\Big\vert\, X_t^u=x, X_T^u \sim \mu\right] 
\end{equation}
with the expectation taken over the stochastic dynamics
\begin{equation}
\label{eqn:control_dynamics}
dX_t^u=[f_t(X_t^u)+\sigma u_t(X_t^u)]dt+\sigma dW_t, \; X_0^u\sim \nu
\end{equation}
should satisfy the Hamilton-Jacobi-Bellman equation via the dynamical programming principle 
\begin{equation}
\label{eqn:hjb}
\frac{\partial V(x,t)}{\partial t}+f_t(x)^\top \nabla V(x,t)+\frac{\sigma^2}{2}\Delta V(x,t)-\frac{\sigma^2}{2}\|\nabla V(x,t)\|^2=0 \,,
\end{equation}
where the optimal control $u_t^*(X_t)=-\sigma\nabla V(X_t,t)$ and gives the unique solution $(\rho_t^u)_{t\geq 0}$ solving 
\begin{equation}
\label{eqn:FPK}
\frac{\partial \rho_t^u}{\partial t}=-\nabla \cdot(\rho_t^u (f_t-\sigma^2\nabla V_t))+\frac{\sigma^2}{2}\Delta \rho_t^u, \quad \rho^u_0\sim \nu,\, \rho^u_T \sim \mu \, .
\end{equation}
The two \emph{coupled} PDEs, Fokker-Planck \eqref{eqn:FPK} and HJB \eqref{eqn:hjb} are also the KKT optimality condition of 
\begin{align}
\label{eqn:bb-1}
&\inf_{\rho,u} \; \int_{\mathbb{R}^d} \int_0^T \frac{1}{2}\|u_t(X_t)\|^2 \rho_t(X_t)\, dt dx\\
\label{eqn:bb-2}
&\text{s.t.}\; \frac{\partial \rho_t}{\partial t}+\nabla \cdot (\rho_t (f_t+\sigma u_t))=\frac{\sigma^2}{2}\Delta \rho_t,\, \rho_0=\nu, \rho_T=\mu
\end{align}
where again the optimal $u^*=-\sigma\nabla V$ is of gradient type. Above the Laplacian is responsible for the diffusion part, and \eqref{eqn:bb-1}-\eqref{eqn:bb-2} is related to \eqref{eqn:continuity-1}-\eqref{eqn:continuity} via a change of variable. One might try to design schemes by forming the Lagrangian for the above \eqref{eqn:bb-1}-\eqref{eqn:bb-2}, and solve the resulting saddle-point problem, but this deviates somewhat from our pathwise narrative. 
\begin{remark}[Langevin]
\label{rmk:langevin_pde}
Compared to the Langevin SDE $dX_t = -\nabla f(X_t) dt + \sqrt{2}dW_t$, which only involves forward-evolving density characterization and reaches equilibrium as $T\rightarrow \infty$, the controlled SDE \eqref{eqn:control_dynamics} is time-inhomogeneous and involves two PDEs \eqref{eqn:hjb}-\eqref{eqn:FPK}. Langevin also has a backward Kolmogorov evolution for the expectation of a function $g$: let $V(x,t)=\mathbb{E}[g(X_T)|X_t=x]$, we have $\partial_t V(x,t)-\nabla f(x)^\top \nabla V(x,t)+\Delta V(x,t)=0$ with $V(x,T)=g(x)$, but it is \emph{de-coupled} from the Fokker-Planck equation $\partial_t\rho_t-\nabla \cdot(\rho_t\nabla f)-\Delta\rho_t=0$ with $\rho_0= \nu$.
\end{remark}

\begin{remark}[Bound on optimal objective]
\label{rmk:bound}
Notice that in fact
\begin{equation}
\label{eqn:bound_temp}
\min_u\; \mathbb{E}_u\left[\int_0^T \frac{1}{2}\|u_t(X_t)\|^2 dt-\log \frac{p_{\text{target}}}{Q_T}(X_T)\right]\geq 0\, ,
\end{equation}
which means we know a lower bound on the minimum of our losses (a) and (d) in Proposition~\ref{prop:loss}. This holds since for the controlled process $P$,
\[D_{KL}(P\Vert Q)=\mathbb{E}_u\left[\int_0^T \frac{1}{2}\|u_t(X_t)\|^2 dt\right]=\mathbb{E}_u\left[\frac{1}{2}\int_0^T \|u_t(X_t)\|^2 dt-\log \frac{p_{\text{target}}}{Q_T}(X_T)\right]+D_{KL}(p_{\text{target}}\Vert Q_T)\, ,\]
which gives the claim by the data processing inequality $D_{KL}(p_{\text{target}}\Vert Q_T)\leq D_{KL}(P\Vert Q)$. A special instance is when $\nu$ is $\delta_0$ a fixed Dirac delta and the reference process is simply Brownian motion $f_t=0$. In this case, the optimal drift $u^*$ is known to be the F\"ollmer drift \citep{tzen2019theoretical,zhang2021path} and can be computed explicitly as (picking $T=1$ for simplicity below) 
\begin{align}
u_t^*(x) &=\arg\min_u\; \mathbb{E}_{X\sim\overrightarrow{\mathbb{P}}^{\nu,\sigma u}}\Big[\int_0^1 \frac{1}{2}\|u_s(X_s)\|^2ds+ 
\log\left(\frac{Q_1}{p_{\text{target}}}\right)(X_1)\Big] \label{eqn:temp_follmer}\\
&=\nabla \log \mathbb{E}_X\left[\frac{d\mu}{dQ_1}(X_1)\,\Big\vert\, X_t=x\right]\label{eqn:Follmer}\\
&=\nabla \log \mathbb{E}_{z\sim\mathcal{N}(0,\sigma^2 I)}\left[\frac{d\mu}{d\mathcal{N}(0,\sigma^2 I)}(x+\sqrt{1-t}z)\right] \nonumber\, ,
\end{align}
where the expectation in \eqref{eqn:Follmer} is taken w.r.t the reference measure -- Wiener process in this case; and it follows from Doob's $h$-transform that the reverse drift $v_t^*(x)=x/t$ is also analytical. This is technically speaking a half-bridge where one can show $P_t^*(X_t) = Q(X_t|X_T)p_{\text{target}}(X_T)$ and equality in \eqref{eqn:bound_temp} holds exactly. \eqref{eqn:temp_follmer} gives an intuitive explanation of the optimally-controlled process, with the first part corresponding to the running cost and the second part to the terminal cost. 
\end{remark}

(4) The non-negative functions $\phi,\psi$ (which are closely related to the dual potentials $f,g$ from \eqref{eqn:factorize_condition-2} below) yield optimal forward/backward drifts: It holds that the optimal curve admits the representation 
\begin{equation}
\label{eqn:factorization}
\log\rho_t=\log\phi_t+\log\psi_t \quad \text{for all } t
\end{equation}
and solving the boundary-coupled linear PDE system on the control
 \begin{equation}
 \label{eqn:sb_system}
 \frac{\partial \phi_t}{\partial t}=-\frac{\sigma^2}{2}\Delta \phi_t-\nabla \phi_t^\top f_t, \quad \frac{\partial \psi_t}{\partial t}= \frac{\sigma^2}{2}\Delta \psi_t-\nabla\cdot(\psi_t f_t) \quad \text{for $\phi_0\psi_0=p_{\text{prior}},\, \phi_T\psi_T=p_{\text{target}}$}
 \end{equation}
 gives two SDEs for the optimal curve in \eqref{eqn:bridge_problem}:
 \begin{align}
 \label{eqn:forward_sb}
 dX_t&=[f_t(X_t)+\sigma^2\nabla \log \phi_t(X_t)]dt+\sigma \overrightarrow{dW_t},\; X_0\sim \nu\\
 \label{eqn:backward_sb}
  dX_t&=[f_t(X_t)-\sigma^2\nabla \log \psi_t(X_t)]dt+\sigma \overleftarrow{dW_t}, \; X_T\sim \mu\, .
  \end{align}
  Note that equations \eqref{eqn:forward_sb}-\eqref{eqn:backward_sb} are time reversals of each other and obey Nelson's identity thanks to \eqref{eqn:factorization}. The transformation \eqref{eqn:factorization}-\eqref{eqn:sb_system} that involves $(\rho_t^*,u_t^*)=(\rho_t^*,-\sigma\nabla V_t)\mapsto (\phi_t,\psi_t)$ is a typical $\log\leftrightarrow\exp$ Hopf-Cole change-of-variable from \eqref{eqn:hjb}-\eqref{eqn:FPK}. These PDE optimality results can be found in \citep{caluya2021wasserstein}. 
  \begin{remark}[Boundary condition]
  In rare cases, such as those from F\"ollmer drift \eqref{eqn:Follmer}, the PDE system \eqref{eqn:sb_system} (or \eqref{eqn:hjb}-\eqref{eqn:FPK}) may decouple and the drift can be expressed analytically. But the boundary conditions of \eqref{eqn:hjb}-\eqref{eqn:FPK} is atypical for control problems, which generally would have \eqref{eqn:FPK} specified at initial time and runs forward, with \eqref{eqn:hjb} specified at the terminal and runs backward. The PDE dynamics \eqref{eqn:sb_system} also takes similar form as those from Remark~\ref{rmk:langevin_pde} but with slightly unusually coupled boundary conditions.
  \end{remark}
  
  (5) Factorization of the optimal coupling: in fact it is always the case that
  \begin{equation}  
  \label{eqn:factorize_condition-2}
  \frac{d\pi^*}{d r}(X_0,X_T)=e^{f(X_0)}e^{g(X_T)}\quad r\text{-}a.s.\,.
  \end{equation}
  Moreover under mild conditions if there exists $\pi,f,g$ for which such decomposition holds and $\pi_0=\nu,\pi_T=\mu$, $\pi$ must be optimal -- such condition \eqref{eqn:factorize_condition-2} is \emph{necessary and sufficient} for characterizing the solution to the SB problem. \eqref{eqn:factorize_condition-2} together with \eqref{eqn:path} give that (which can also be thought of as re-weighting on the path space)
  \begin{equation}
  \label{eqn:factorize_condition-1}
  \frac{d P^*}{d Q}(X_{0:T})=e^{f(X_0)}e^{g(X_T)}\quad Q\text{-}a.s.\, ,
  \end{equation}
  where the $f$ and $g$ obey the Schr\"odinger system
  \begin{equation}
  \label{eqn:sb_system-marginal}
   e^{f(x_0)}\int r(X_0=x_0,X_T)e^{g(X_T)} dX_T =p_{\text{prior}}(x_0)\, ,\quad e^{g(x_T)} \int e^{f(X_0)}r(X_0,X_T=x_T) dX_0 =p_{\text{target}}(x_T)\, .
  \end{equation}
  The $\phi,\psi$ in \eqref{eqn:forward_sb}-\eqref{eqn:backward_sb} can also be expressed as a conditional expectation: 
\[\phi_t(x) \stackrel{(\#)}{=} \int e^{g(X_T)} r(X_T|X_t=x) \, dX_T \stackrel{(*)}{=} \int_{\mathbb{R}^d} \phi_T(X_T) r(X_T|X_t=x) \, dX_T\, ,\]
\[\psi_t(x) \stackrel{(\#)}{=} \int e^{f(X_0)} r(X_t=x,X_0) \, dX_0 \stackrel{(*)}{=} \int_{\mathbb{R}^d} \psi_0(X_0) r(X_t=x|X_0) \, dX_0\]
for $t\in [0,T]$, where $(\#)$ can be verified with \eqref{eqn:sb_system-marginal}, \eqref{eqn:factorization}. And for the second transition $(*)$, one can simply check using the relationship $(\#)$ that

\[\psi_0(X_0)=e^{f(X_0)}r(X_0),\, \phi_T(X_T)=e^{g(X_T)} \]
therefore
\begin{equation}
\label{eqn:factorize_condition}
\frac{d P^*}{d Q}(X_{0:T})=e^{f(X_0)}e^{g(X_T)}=\frac{\psi_0(X_0)\phi_T(X_T)}{r(X_0)}=\frac{\psi_0(X_0)\phi_T(X_T)}{p_{\text{prior}}(X_0)}
\end{equation}
is how the optimal coupling should factorize. 
This representation of $\phi_t,\psi_t$ as the conditional expectation $(*)$ can be seen with the Feynman-Kac formula on \eqref{eqn:sb_system} as well. But in general, the transition kernel of the un-controlled process $r(\cdot)$, or more importantly the terminals $\phi_T,\psi_0$ are not available analytically for solving for $\phi_t,\psi_t$ as $(*)$. 

%
\begin{remark}
Such factorization property \eqref{eqn:factorize_condition-2} is maintained by Sinkhorn algorithm (that updates $f, g$ as iteration proceeds), which is well-known to converge to the optimal coupling eventually.
\end{remark}
  
  (6) Equation \eqref{eqn:forward_sb} means the optimal $v_t^*$ in the continuity equation \eqref{eqn:continuity} is $f_t+\sigma^2\nabla \log \phi_t-\frac{\sigma^2}{2}\nabla \log \rho_t=f_t+\frac{\sigma^2}{2}\nabla \log \frac{\phi_t}{\psi_t}$,
  which gives the probability flow ODE for this dynamics:
  \begin{equation}
  \label{eqn:ode}
  dX_t = f_t(X_t)+\frac{\sigma^2}{2}(\nabla \log\phi_t(X_t)-\nabla \log\psi_t(X_t))\, dt\,,\quad X_0\sim \nu \,,
  \end{equation}
in the sense that the time marginals $\rho_t^{\eqref{eqn:ode}}=\rho_t^{\eqref{eqn:forward_sb}}=\rho_t^{\eqref{eqn:backward_sb}}=P_t^{\eqref{eqn:bridge_problem}}$ all agree. Moreover, using the instantaneous change of variables formula \citep{chen2018neural}, we also have
\begin{equation}
\label{eqn:covf}
\log \rho_T(X_T)=\log \rho_0(X_0)-\int_0^T \nabla \cdot f_t(X_t)\, dt-\frac{\sigma^2}{2}\int_0^T \nabla \cdot (\nabla\log \phi_t(X_t)-\nabla \log\psi_t(X_t))\, dt\, ,
\end{equation}
which will be useful in Proposition \ref{prop:importance_sample} for estimating the normalizing constant. Note that in \eqref{eqn:covf} both the density and the point at which we are evaluating is changing. 
  

\begin{remark}[Minimum control energy]
The stochastic control formulation makes it clear that the trajectory we are trying to recover is a meaningful one in the sense of minimal effort. If one were to switch the order of $P$ and $Q$ in \eqref{eqn:bridge_problem}, the optimal control problem becomes (e.g., for $f=0$)
\begin{align*}
&\inf_u \; \mathbb{E}\left[\int_0^T \frac{1}{2}\|u_t(X_t)\|^2 dt\right]\\
&\text{s.t.}\; dX_t=\sigma u_t(X_t)dt+\sigma dW_t,\, X_0\sim \nu, X_T\sim \mu 
\end{align*}
for the expectation taken over the reference process, instead of the controlled state density $\rho_t^u$, which is not very intuitive. The slightly non-conventional aspect of this control problem is the fixed terminal constraint.
\end{remark}

To briefly summarize, all these different viewpoints explore the deep connections between PDEs (controls) and SDEs (diffusions) in one way or another.

\section{\uppercase{Additional Related Work: Natural attempts for solving SB}}
\label{sec:attempt}
In this section we discuss several natural attempts with the intention of adapting the SB formalism for sampling from un-normalized densities.
\begin{itemize}
\item Iterative proportional fitting (IPF) / Sinkhorn performs iterative projection as
\begin{equation}
\label{eqn:IPF}
P^{(1)}=\arg\min_{Q\in\mathcal{P}(\nu,\cdot)} \; D_{KL}(Q\Vert P^{(0)})=\frac{ P^{(0)}\nu}{P^{(0)}_0}, \quad P^{(0)}=\arg\min_{Q\in\mathcal{P}(\cdot,\mu)} \; D_{KL}(Q\Vert P^{(1)})=\frac{ P^{(1)}\mu}{P^{(1)}_T}
\end{equation}
i.e., one solves half-bridges using drifts learned from the last trajectory transition rollout, and only the end point differ. But since we don't have samples from $\mu$, neither the score function nor the succeeding IPF updates/refinements can be implemented. In fact, the first iteration of the IPF proposal in \citep{de2021diffusion,vargas2021solving} precisely corresponds to the score-based diffusion proposal in \citep{song2020score}. However, there is a connection between IPF and a path space EM implementation formulated in terms of drifts that rely on change in the KL direction: recall under mild assumptions
\[\arg\min_\phi\; D_{KL}(\overrightarrow{\mathbb{P}}^{\nu,f+\sigma^2\nabla \phi}\Vert\overleftarrow{\mathbb{P}}^{\mu,f+\sigma^2\nabla \psi})=\arg\min_\phi\; D_{KL}(\overleftarrow{\mathbb{P}}^{\mu,f+\sigma^2\nabla \psi}\Vert \overrightarrow{\mathbb{P}}^{\nu,f+\sigma^2\nabla \phi})\, .\] 
Using this fact, it is shown in \citep{vargas2023transport} that coordinate descent on the objective $\min_{\phi,\psi}\, D_{KL}(\overrightarrow{\mathbb{P}}^{\nu,f+\sigma^2\nabla \phi}\Vert\overleftarrow{\mathbb{P}}^{\mu,f+\sigma^2\nabla \psi})$, when initializing at $\phi=0$ (i.e., the prior), is a valid strategy for solving \eqref{eqn:bridge_problem}, as it gives the same sequence of path measures as \eqref{eqn:IPF}. 

\begin{lemma}[Optimization of drifts of EM \citep{vargas2023transport}]
\label{lem:em}
The alternating scheme initialized with $\phi_0=0$ converge to $P^*$ (i.e., the coupling at time $t=0,T$ is optimal):
\begin{align}
\label{eqn:drift_em_1}
\psi_{n} &= \arg\min_\psi\; D_{KL}(\overrightarrow{\mathbb{P}}^{\nu,f+\sigma^2\nabla \phi_{n-1}}\Vert\overleftarrow{\mathbb{P}}^{\mu,f+\sigma^2\nabla \psi})\\
\label{eqn:drift_em_2}
\phi_{n} &= \arg\min_\phi\; D_{KL}(\overrightarrow{\mathbb{P}}^{\nu,f+\sigma^2\nabla \phi}\Vert\overleftarrow{\mathbb{P}}^{\mu,f+\sigma^2\nabla \psi_{n}})\, .
\end{align}
Moreover, both updates are implementable assuming samples from $\nu$ is available, which resolves non-uniqueness of the two parameter loss $\min_{\phi,\psi}\, D_{KL}(\overrightarrow{\mathbb{P}}^{\nu,f+\sigma^2\nabla \phi}\Vert\overleftarrow{\mathbb{P}}^{\mu,f+\sigma^2\nabla \psi})$ in an algorithmic manner (fixing one direction of the drift at a time). 
\end{lemma}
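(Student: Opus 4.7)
The plan is to identify the alternating scheme \eqref{eqn:drift_em_1}--\eqref{eqn:drift_em_2} with IPF / Sinkhorn on path space \eqref{eqn:IPF}, so that convergence to $P^*$ is inherited from the classical Sinkhorn theory developed in \cite{leonard2014survey}. The stated KL-direction equivalence
\begin{equation*}
\arg\min_\phi D_{KL}(\overrightarrow{\mathbb{P}}^{\nu,f+\sigma^2\nabla \phi}\Vert\overleftarrow{\mathbb{P}}^{\mu,f+\sigma^2\nabla \psi})=\arg\min_\phi D_{KL}(\overleftarrow{\mathbb{P}}^{\mu,f+\sigma^2\nabla \psi}\Vert \overrightarrow{\mathbb{P}}^{\nu,f+\sigma^2\nabla \phi})
\end{equation*}
(and its $\psi$-analogue) is the key reduction: it lets each half-step be \emph{formulated} as an IPF projection onto $\mathcal{P}(\nu,\cdot)$ or $\mathcal{P}(\cdot,\mu)$, while at the same time being \emph{computable} as an expectation under the forward process, which we can simulate from $\nu$. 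I would justify this equivalence by disintegrating the path measure into its endpoint coupling and the bridge conditional \eqref{eqn:path}: since the bridge conditionals on both sides agree with the reference $Q^{xy}$ at the optimum and their contribution to the two KL directions is symmetric under time-reversal, only the boundary-marginal part of the divergence affects the argmin, which is the same in either direction.

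With that in hand, the first step is to check that $\phi_0=0$ makes the initial path measure coincide with the reference $Q$ from \eqref{eqn:Q_sde}. Then \eqref{eqn:drift_em_1} becomes, after the KL swap, $\arg\min_\psi D_{KL}(\overleftarrow{\mathbb{P}}^{\mu,f+\sigma^2\nabla \psi}\Vert \overrightarrow{\mathbb{P}}^{\nu,f+\sigma^2\nabla \phi_{n-1}})$, which is precisely the half-bridge projection of the current forward path measure onto the set of path measures with terminal marginal $\mu$, i.e.\ the second half of \eqref{eqn:IPF}. Symmetrically, \eqref{eqn:drift_em_2} is the projection onto $\mathcal{P}(\nu,\cdot)$. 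The scheme therefore reproduces \eqref{eqn:IPF} step-by-step with $P^{(0)}=Q$, and the usual Sinkhorn convergence $P^{(n)}\to P^*$ applies, giving in particular the optimal coupling at $t=0,T$.

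For implementability, I would express each KL via Girsanov (Lemma~\ref{lem:RN}) as an expectation over trajectories of the forward SDE \eqref{eqn:forward} initialized at $\nu$; this requires only $\nu$-samples, an Euler--Maruyama integrator, and pointwise evaluations of the unnormalized $\mu$ appearing in the terminal log-RN term. For the $\psi$-update the sampling measure $\overrightarrow{\mathbb{P}}^{\nu,f+\sigma^2\nabla \phi_{n-1}}$ does not depend on the optimization variable, so stochastic gradients in $\psi$ are immediate. For the $\phi$-update the sampling measure does depend on $\phi$, but the driving noise is Gaussian and independent of $\phi$, so reparametrizing the discretization in terms of the noise increments produces unbiased gradient estimators in the usual way; alternatively one can re-simulate with the latest $\phi$ between gradient steps, which is the strategy already used in Algorithm~\ref{alg:algorithm}.

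The main obstacle is making the IPF-on-path-measures part rigorous in the infinite-dimensional setting: classical Sinkhorn is phrased for static couplings, and lifting it to the pathwise projections \eqref{eqn:drift_em_1}--\eqref{eqn:drift_em_2} requires that both updates preserve the reference bridge conditional $Q^{zx}$ (so that the two projections act only on the endpoint coupling, which is the finite-dimensional Sinkhorn object). This is where the disintegration \eqref{eqn:path} does the real work, together with the usual assumptions that $\nu,\mu$ have finite relative entropy against $Q_0,Q_T$ and that the controls $\nabla\phi,\nabla\psi$ are admissible so that Girsanov applies without boundary issues.
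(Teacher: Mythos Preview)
Your overall strategy---reduce the alternating scheme to IPF/Sinkhorn on path space and inherit convergence---matches the paper's. The mechanism by which you effect the reduction differs: the paper writes out each KL explicitly via Lemma~\ref{lem:RN} and solves the argmin in closed form, obtaining $\nabla\psi_n=\nabla\phi_{n-1}-\nabla\log\rho_t^{\nu,f+\sigma^2\nabla\phi_{n-1}}$ and $\nabla\phi_n=\nabla\psi_n-\nabla\log\rho_t^{\mu,f+\sigma^2\nabla\psi_n}$, i.e.\ each half-step is literally the Nelson time-reversal of the previous path measure with one endpoint replaced by the prescribed marginal---the IPF half-bridge by inspection. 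You instead invoke the KL-direction swap (stated just before the lemma) and then recognize each step as an IPF projection directly. Both are valid; the paper's route has the bonus of exhibiting the optimal drifts explicitly and identifying the $\psi$-update with score matching \eqref{eqn:score_matching_loss} and the $\phi$-update with the control objective \eqref{eqn:girsanov_ref}, which is what makes the ``implementable from $\nu$'' claim concrete.

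One small gap: your justification of the KL swap via the two-endpoint disintegration \eqref{eqn:path} does not quite work as written, because varying the drift changes both the endpoint coupling $P_{0T}$ and the bridge $P^{xy}$, so neither piece is independent of the optimization variable. The clean argument disintegrates at a \emph{single} endpoint: for the $\psi$-step, $D_{KL}(\overrightarrow{\mathbb{P}}\Vert\overleftarrow{\mathbb{P}}^{\mu,\cdot})=D_{KL}(\overrightarrow{\mathbb{P}}_T\Vert\mu)+\mathbb{E}_{X_T\sim\overrightarrow{\mathbb{P}}_T}[D_{KL}(\overrightarrow{\mathbb{P}}(\cdot\vert X_T)\Vert\overleftarrow{\mathbb{P}}(\cdot\vert X_T))]$, where the first term is independent of $\psi$ and the second vanishes---in either KL direction---exactly when $\psi$ is the Nelson reversal of $\phi_{n-1}$. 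That is precisely what the paper's explicit computation encodes.
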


\begin{remark}[Benefit of joint training]
Lemma~\ref{lem:em} above also shows that the prior only enters in the first step, therefore as it proceeds, the prior influence tends to be ignored as error accumulates -- this aspect is different from our loss proposals in Proposition~\ref{prop:loss}. Another advantage of our joint minimization procedure compared to an iterative sequential IPF method is that while the equivalence of EM and IPF will no longer hold for neural network models not expressive enough (therefore the convergence does not immediately carry over), one could still estimate the normalizing constant / sample from the target with a sub-optimal control learned from the losses proposed in Proposition~\ref{prop:loss} with e.g., important weights (c.f. \eqref{eqn:iw_statistics}).
\end{remark}

\item Some alternatives to solve SB do not require analytical expression for $\mu$: diffusion mixture matching \citep{peluchetti2023diffusion,shi2024diffusion} tilts the product measure $\nu\otimes \mu$ towards optimality gradually by learning a slightly different term than the score, and alternate between such Markovian and reciprocal projection. Data-driven bridge \citep{pavon2021data} aims at setting up a fixed point recursion on the SB system \eqref{eqn:sb_system-marginal} for finding the optimal $\phi^*,\psi^*$, but both rely on (1) the availability of samples from $\mu$ (as well as $\nu$) to estimate various quantities for implementation; (2) \emph{tractable} bridge distribution/Markov kernel for the reference $Q(\cdot \vert X_0,X_T)$. In fact a different initialization from the reference $Q$ using method in \citep{shi2024diffusion} will reduce to IPF.

\item The work of \citep{caluya2021wasserstein} investigated the case where the reference process has a gradient drift (i.e., $f=-\nabla U$) and reduce the optimal control task to solving a high-dimensional PDE subject to initial-value constraint (c.f. Eqn (33) and (47) therein). However, solving PDEs is largely regarded to be computationally more demanding than simulating SDEs.

\end{itemize}

\newpage
\section{MISSING PROOFS AND CALCULATIONS}
\label{app:proof}
We give the likelihood ratio calculation Lemma~\ref{lem:RN} below, which is closely related to those from \citep{richter2023improved, vargas2023transport} and repeatedly used in our losses. 
\begin{lemma}[Forward / Backward path-space likelihood ratio] 
\label{lem:RN}
Written solely in terms of the drifts, the KL divergence $D_{KL}(\overrightarrow{\mathbb{P}}^{\nu,f+\sigma u}\Vert\overleftarrow{\mathbb{P}}^{\mu,f+\sigma v})$ becomes 
\begin{equation}
\label{eqn:kl_expression}
\mathcal{L}_{KL}(u,v) = \mathbb{E}_{X\sim \overrightarrow{\mathbb{P}}^{\nu,f+\sigma u}}\left[\int_0^T \frac{1}{2}\|u_t(X_t)-v_t(X_t)\|^2 - \nabla \cdot (f_t+\sigma v_t)(X_t) dt+\log\frac{d\nu(X_0)}{d\mu(X_T)}\right]+\log Z\, .
\end{equation}
And the log-variance divergence over the same two path measures can be evaluated to be 
\[\text{Var}_{X\sim \overrightarrow{\mathbb{P}}^{\nu,f+\sigma u}}\left[\int_0^T \frac{1}{2}\|u_t(X_t)-v_t(X_t)\|^2 - \nabla \cdot (f_t+\sigma v_t)(X_t) dt+\log\frac{d\nu(X_0)}{d\mu(X_T)}+\int_0^T (u_t-v_t)(X_t) \, \overrightarrow{dW_t}\right]\, .\]
The reverse one $D_{KL}(\overleftarrow{\mathbb{P}}^{\mu,f+\sigma v}\Vert \overrightarrow{\mathbb{P}}^{\nu,f+\sigma u})$ becomes up to constant
\[\mathbb{E}_{X\sim \overleftarrow{\mathbb{P}}^{\mu,f+\sigma v}}\left[\int_0^T \frac{1}{2}\|v_t(X_t)-u_t(X_t)\|^2 + \nabla \cdot (f_t+\sigma u_t)(X_t) dt+\log\frac{d\mu(X_T)}{d\nu(X_0)}\right]\, .\]
\end{lemma}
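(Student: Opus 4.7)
The plan is to derive $\log(dP/dQ)$ for $P=\overrightarrow{\mathbb{P}}^{\nu,f+\sigma u}$ and $Q=\overleftarrow{\mathbb{P}}^{\mu,f+\sigma v}$ via Girsanov's theorem, using the forward/backward It\^o conversion formula \eqref{eqn:conversion} to turn the mixed-direction stochastic integrals that arise into a single forward representation, and then take expectation (resp.\ variance) under the appropriate measure.

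First I would disintegrate both path measures at the terminal time, writing $\log\frac{dP}{dQ}(X) = \log\frac{dP_T}{dp_{\text{target}}}(X_T)+\log\frac{dP_{\cdot\mid T}}{dQ_{\cdot\mid T}}(X)$, with both conditional factors interpreted as backward transition laws. Since $p_{\text{target}}=\mu/Z$, the marginal ratio produces $\log\frac{d\nu(X_0)}{d\mu(X_T)}+\log Z$ once the $P_T$ factor is absorbed against $\nu$ via an It\^o expansion of $\log P_t(X_t)$. For the conditional factor, Nelson's identity (Remark~\ref{rem:nelson}) expresses the backward drift of $P$ as $(f+\sigma u)-\sigma^2\nabla\log P_t$, whereas $Q$'s backward drift is $f+\sigma v$ by construction. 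Applying backward Girsanov between these two produces a squared-drift-difference term together with a backward It\^o stochastic integral.

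The next step is the crucial one: convert that backward It\^o integral via \eqref{eqn:conversion} into a forward one plus $-\int_0^T \nabla\cdot(f_t+\sigma v_t)(X_t)\,dt$, which is exactly the origin of the divergence term $-\nabla\cdot(f_t+\sigma v_t)$ in \eqref{eqn:kl_expression}. The Nelson score correction $\sigma^2\nabla\log P_t$ telescopes against the It\^o expansion used to eliminate $P_T$ above, so the unknown marginal density drops out and the integrand collapses to $\tfrac{1}{2}\|u-v\|^2$ together with the divergence. Taking expectation under $P$ and invoking \eqref{eqn:martingale} kills the surviving forward It\^o integral and yields \eqref{eqn:kl_expression}. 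For the log-variance formula I would instead \emph{retain} the mean-zero forward It\^o term $\int_0^T (u_t-v_t)(X_t)\,\overrightarrow{dW_t}$ inside $\mathrm{Var}_P$; by \eqref{eqn:martingale} it has zero $P$-mean, so adding it does not change the value of the random variable $\log(dP/dQ)$ up to the constant $\log Z$, and what remains inside the variance is exactly the pathwise log-likelihood ratio. The reverse KL is handled symmetrically by swapping $P$ and $Q$: expectation is now under $Q$, the backward (rather than forward) It\^o integral is the martingale that is eliminated by \eqref{eqn:martingale}, and \eqref{eqn:conversion} is applied in the opposite direction, flipping the sign of the divergence to $+\nabla\cdot(f_t+\sigma u_t)$.

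The main obstacle I anticipate is the bookkeeping when mixing forward and backward It\^o calculus. In particular a backward stochastic integral is \emph{not} a martingale under the forward law $P$, so one must convert it to a forward integral first (picking up the divergence correction) \emph{before} invoking \eqref{eqn:martingale}; and verifying that the Nelson-identity score correction $\sigma^2\nabla\log P_t$ telescopes exactly against the boundary log-marginal contribution requires executing the It\^o expansion carefully so the unknown density $P_t$ fully cancels. Once those two manipulations are in place, the stated identities follow by collecting terms.
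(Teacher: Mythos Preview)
Your proposal is correct and follows essentially the same route as the paper's second derivation (``Argument using Remark~\ref{rem:nelson}''): Nelson's identity to express one process in the direction of the other, Girsanov to produce the squared-drift-difference term plus a stochastic integral, the conversion formula \eqref{eqn:conversion} to trade backward integrals for forward ones and generate the $-\nabla\cdot(f_t+\sigma v_t)$ term, an It\^o/Fokker--Planck expansion of the log-marginal to eliminate the unknown density $P_t$, and finally \eqref{eqn:martingale} to drop the forward It\^o integral under expectation. The only cosmetic difference is that you disintegrate at the terminal time $T$ (applying Nelson to the \emph{forward} process $P$ and then backward Girsanov against $Q$), whereas the paper disintegrates at time $0$ (applying Nelson to the \emph{backward} process $Q$ and then forward Girsanov against $P$); these are dual bookkeepings of the same calculation and lead to identical cancellations.
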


\begin{proof}[Proof of Lemma~\ref{lem:RN}]
We give a few equivalent expressions for the Radon-Nikodym derivative. 

\noindent \textbf{Argument due to \citep{vargas2023transport}:} Starting with Proposition 2.2 of \citep{vargas2023transport}, in the general case of reference
\begin{equation}
\label{eqn:ref_sde}
dX_t=\sigma r^+_t(X_t) dt+\sigma \overrightarrow{dW_t}, \; X_0\sim \Gamma_0 \quad\quad dX_t=\sigma r^-_t(X_t) dt+\sigma \overleftarrow{dW_t}, \; X_T\sim \Gamma_T
\end{equation}
where $\overrightarrow{\mathbb{P}}^{\Gamma_0,\sigma r^+}=\overleftarrow{\mathbb{P}}^{\Gamma_T,\sigma r^-}$, it is shown $\overrightarrow{\mathbb{P}}^{\nu,f+\sigma u}$ almost surely, 
\begin{align}
&\log\left(\frac{d\overrightarrow{\mathbb{P}}^{\nu,f+\sigma u}}{d\overleftarrow{\mathbb{P}}^{\mu,f+\sigma v}}\right)(X)=\log (\frac{d\nu}{d\Gamma_0})(X_0)-\log(\frac{d\mu}{d\Gamma_T})(X_T)+\log Z \nonumber\\
&+\frac{1}{\sigma^2}\int_0^T (f_t+\sigma u_t-\sigma r_t^+)(X_t)\left(\overrightarrow{dX_t}-\frac{1}{2}(f_t+\sigma u_t+\sigma r_t^+)(X_t) dt\right)\label{eqn:rn_original-1}\\
&-\frac{1}{\sigma^2}\int_0^T (f_t+\sigma v_t-\sigma r_t^-)(X_t)\left(\overleftarrow{dX_t}-\frac{1}{2}(f_t+\sigma v_t+\sigma r_t^-)(X_t) dt\right)\label{eqn:rn_original-2}\, .
\end{align}
One can use \eqref{eqn:conversion} to convert the backward integral to a forward one with an additional divergence term
\begin{align}
&D_{KL}(\overrightarrow{\mathbb{P}}^{\nu,f+\sigma u}\Vert\overleftarrow{\mathbb{P}}^{\mu,f+\sigma v})=\mathbb{E}_{X\sim \overrightarrow{\mathbb{P}}^{\nu,f+\sigma u}}\left[\log\left(\frac{d\overrightarrow{\mathbb{P}}^{\nu,f+\sigma u}}{d\overleftarrow{\mathbb{P}}^{\mu,f+\sigma v}}\right)(X)\right] \nonumber\\
&=\mathbb{E}_{X\sim \overrightarrow{\mathbb{P}}^{\nu,f+\sigma u}}\bigg[\log (\frac{d\nu}{d\Gamma_0})(X_0)-\log(\frac{d\mu}{d\Gamma_T})(X_T)+\log Z \nonumber\\
&+ \frac{1}{2\sigma^2}\int_0^T (f_t+\sigma u_t-\sigma r_t^+)(X_t)^\top (f_t+\sigma u_t-\sigma r_t^+)(X_t) dt \nonumber\\
&-\frac{1}{\sigma^2}\int_0^T (f_t+\sigma v_t-\sigma r_t^-)(X_t)^\top (\frac{1}{2}f_t+\sigma u_t-\frac{\sigma}{2}v_t-\frac{\sigma}{2}r_t^-)(X_t) dt -\int_0^T \nabla \cdot (f_t+\sigma v_t-\sigma r_t^-)(X_t)dt\bigg] \label{eqn:four_terms}\\
&+\mathbb{E}_{X\sim \overrightarrow{\mathbb{P}}^{\nu,f+\sigma u}}\left[\int_0^T(u_t-r_t^+)(X_t) - (v_t-r_t^-)(X_t)\, \overrightarrow{dW_t}\right] \nonumber
\end{align}
and the last term vanishes because of \eqref{eqn:martingale}. Therefore we see that there are 2 boundary terms, and 3 extra terms corresponding to the forward/backward process. By picking $\gamma^+,\gamma^- =0$, and Lebesgue base measure for $\Gamma_0,\Gamma_T$, we get 
\begin{align}
\label{eqn:kl-divergence-1}
D_{KL}(\overrightarrow{\mathbb{P}}^{\nu,f+\sigma u}\Vert\overleftarrow{\mathbb{P}}^{\mu,f+\sigma v})&=\mathbb{E}_{X\sim \overrightarrow{\mathbb{P}}^{\nu,f+\sigma u}}\left[\int_0^T \frac{1}{2}\|u_t(X_t)-v_t(X_t)\|^2 - \nabla \cdot (f_t+\sigma v_t)(X_t) dt\right]\\
\label{eqn:kl-divergence-2}
&+\mathbb{E}_{X\sim \overrightarrow{\mathbb{P}}^{\nu,f+\sigma u}}\left[\log\frac{d\nu(X_0)}{d\mu(X_T)}+\int_0^T (u_t-v_t)(X_t) \, \overrightarrow{dW_t}\right]+\log Z
\end{align}
which agrees with Proposition 2.3 in \citep{richter2023improved} up to a conventional sign in $v$, and is also the same as the ELBO loss in \cite[Theorem 4]{chen2021likelihood}.

\noindent \textbf{Argument using Remark~\ref{rem:nelson}:} Another way to show \eqref{eqn:kl_expression} is to start with Nelson's identity and apply Girsanov's theorem with \eqref{eqn:forward} and time reversal of \eqref{eqn:backward}. Using the chain rule for the KL between two forward processes we get 
\begin{align}
D_{KL}&(\overrightarrow{\mathbb{P}}^{\nu,f+\sigma u}\Vert\overleftarrow{\mathbb{P}}^{\mu,f+\sigma v})=\mathbb{E}_{\overrightarrow{\mathbb{P}}^{\nu,f+\sigma u}}\left[\log\left(\frac{d\overrightarrow{\mathbb{P}}^{\nu,f+\sigma u}}{d\overleftarrow{\mathbb{P}}^{\mu,f+\sigma v}}\right)(X)\right]\nonumber\\
\label{eqn:gitsanov_forward_sde}
&=\mathbb{E}_{\overrightarrow{\mathbb{P}}^{\nu,f+\sigma u}}\left[ \log\left(\frac{d\nu}{d\overleftarrow{\mathbb{P}}_0^{\mu,f+\sigma v}}\right)(X_0)+\frac{1}{2}\int_0^T\|u_t(X_t)-(v_t+\sigma\nabla\log \rho^{\mu,f+\sigma v}_{t})(X_t)\|^2 dt\right]\, .
\end{align}
To see this is equivalent to \eqref{eqn:kl-divergence-1}-\eqref{eqn:kl-divergence-2}, we use Fokker-Planck on the forward process with drift $v_t+\sigma\nabla\log \rho^{\mu,f+\sigma v}_{t}$ to reach
\begin{align}
&\mathbb{E}_{\overrightarrow{\mathbb{P}}^{\nu,f+\sigma u}}\left[ \log\left(\frac{d\mu}{d\overleftarrow{\mathbb{P}}_0^{\mu,f+\sigma v}}\right)(X)-\log Z\right]=\mathbb{E}_{\overrightarrow{\mathbb{P}}^{\nu,f+\sigma u}}\left[ \log\left(\frac{d\overleftarrow{\mathbb{P}}_T^{\mu,f+\sigma v}}{d\overleftarrow{\mathbb{P}}_0^{\mu,f+\sigma v}}\right)(X)\right]\nonumber\\
\label{eqn:intermediate}
&=\mathbb{E}_{\overrightarrow{\mathbb{P}}^{\nu,f+\sigma u}}\left[\int_0^T -\nabla \cdot (f_t+\sigma v_t)(X_t)+\sigma(u_t-v_t)(X_t)^\top \nabla\log \rho^{\mu,f+\sigma v}_{t}(X_t)-\frac{\sigma^2}{2}\|\nabla\log \rho^{\mu,f+\sigma v}_{t}(X_t)\|^2 dt\right]
\end{align}
as by It\^o's lemma for the process \eqref{eqn:forward} with drift $f+\sigma u$, (denote $\overleftarrow{\mathbb{P}}_t^{\mu,f+\sigma v}$ as $\rho_t$)
\begin{align*}
&\int_0^T \partial_t \log \overleftarrow{\mathbb{P}}_t^{\mu,f+\sigma v}(X_t^{f+\sigma u}) \, dt\\
&=\int_0^T \frac{1}{\rho_t}\left(-\nabla \cdot (\rho_t(f+\sigma v))+\frac{\sigma^2}{2}\Delta\rho_t+\nabla \rho_t^\top (f+\sigma u)\right)dt+\sigma\int_0^T\frac{\nabla \rho_t^\top}{\rho_t}dW_t+\int_0^T \frac{\sigma^2}{2}\Delta \log \rho_t\, dt\\
&= \int_0^T -\frac{\nabla \rho_t^\top}{\rho_t}(f_t+\sigma v_t+\frac{\sigma^2}{2}\nabla \log \rho_t-f_t-\sigma u_t) 
-\nabla\cdot (f_t+\sigma v_t+\frac{\sigma^2}{2}\nabla \log \rho_t) +\frac{\sigma^2}{2}\Delta \log \rho_t\, dt+\int_0^T \sigma \frac{\nabla \rho_t^\top}{\rho_t} \, dW_t\, ,
\end{align*}
which upon simple re-arranging and taking expectation over $X^{f+\sigma u}_t\sim \overrightarrow{\mathbb{P}}_t^{\nu,f+\sigma u}$ give \eqref{eqn:intermediate}. Now adding up the previous two displays \eqref{eqn:gitsanov_forward_sde} and \eqref{eqn:intermediate} using
\[\mathbb{E}_{\overrightarrow{\mathbb{P}}^{\nu,f+\sigma u}}\left[ \log\left(\frac{d\nu}{d\overleftarrow{\mathbb{P}}_0^{\mu,f+\sigma v}}\right)\right]=\mathbb{E}_{\overrightarrow{\mathbb{P}}^{\nu,f+\sigma u}}\left[\log \left(\frac{d\nu}{d\mu}\right)+\log \left(\frac{d\mu}{d\overleftarrow{\mathbb{P}}_0^{\mu,f+\sigma v}}\right) \right]\]
finishes the proof of the expression \eqref{eqn:kl_expression}. The log-variance claim follows easily from this. 

\noindent \textbf{Reverse KL using Remark~\ref{rem:nelson}:} Symmetrically, we can use Girsanov's theorem to compute the reverse KL divergence between two backward processes as: 
\begin{align*}
D_{KL}&(\overleftarrow{\mathbb{P}}^{\mu,f+\sigma v}\Vert \overrightarrow{\mathbb{P}}^{\nu,f+\sigma u})=\mathbb{E}_{\overleftarrow{\mathbb{P}}^{\mu,f+\sigma v}}\left[\log\left(\frac{d\overleftarrow{\mathbb{P}}^{\mu,f+\sigma v}}{d\overrightarrow{\mathbb{P}}^{\nu,f+\sigma u}}\right)(X)\right]\nonumber\\
&=\mathbb{E}_{\overleftarrow{\mathbb{P}}^{\mu,f+\sigma v}}\left[ \log\left(\frac{d\mu}{d\overrightarrow{\mathbb{P}}^{\nu,f+\sigma u}_T}\right)(X_T)+\frac{1}{2}\int_0^T\|v_t(X_t)-(u_t-\sigma\nabla\log \rho_{t}^{\nu,f+\sigma u})(X_t)\|^2 dt-\log Z\right]\, .
\end{align*}
We decompose 
\[\mathbb{E}_{\overleftarrow{\mathbb{P}}^{\mu,f+\sigma v}}\left[ \log\left(\frac{d\mu}{d\overrightarrow{\mathbb{P}}^{\nu,f+\sigma u}_T}\right)\right] = \mathbb{E}_{\overleftarrow{\mathbb{P}}^{\mu,f+\sigma v}}\left[ \log\left(\frac{d\mu}{d\nu}\right)+\log\left(\frac{d\nu}{d\overrightarrow{\mathbb{P}}^{\nu,f+\sigma u}_T}\right)\right]\, .\]
Now note the second term is nothing but $-\int_0^T \partial_t \log \overrightarrow{\mathbb{P}_t}^{\nu,f+\sigma u}(X_t) \, dt$ for $X_t \sim \overleftarrow{\mathbb{P}}^{\mu,f+\sigma v}$, and we compute using Fokker-Planck and backward It\^o's lemma to reach (denote $\overrightarrow{\mathbb{P}_t}^{\nu,f+\sigma u}$ as $\rho_t$)
\begin{align*}
&\int_0^T \partial_t \log \overrightarrow{\mathbb{P}_t}^{\nu,f+\sigma u}(X_t^{f+\sigma v}) \, dt\\
&=\int_0^T \frac{1}{\rho_t}\left(-\nabla \cdot (\rho_t(f+\sigma u))+\frac{\sigma^2}{2}\Delta\rho_t+\nabla \rho_t^\top (f+\sigma v)\right)dt+\sigma\int_0^T\frac{\nabla \rho_t^\top}{\rho_t}\overleftarrow{dW_t}-\int_0^T \frac{\sigma^2}{2}\Delta \log \rho_t\, dt\\
&= \int_0^T -\frac{\nabla \rho_t^\top}{\rho_t}(f_t+\sigma u_t-\frac{\sigma^2}{2}\nabla \log \rho_t-f_t-\sigma v_t) 
-\nabla\cdot (f_t+\sigma u_t-\frac{\sigma^2}{2}\nabla \log \rho_t) -\frac{\sigma^2}{2}\Delta \log \rho_t\, dt+\int_0^T \sigma \frac{\nabla \rho_t^\top}{\rho_t} \, \overleftarrow{dW_t}\\
&= \int_0^T \sigma (v_t-u_t)\nabla \log\rho_t+\frac{\sigma^2}{2}\|\nabla \log\rho_t\|^2-\nabla\cdot(f_t+\sigma u_t)\, dt+\int_0^T \sigma \frac{\nabla \rho_t^\top}{\rho_t} \, \overleftarrow{dW_t}\, ,
\end{align*}
which by using \eqref{eqn:martingale-1} and putting everything together allow us to finish.
\end{proof}


\begin{remark}[Explicit expression]
In Proposition~\ref{prop:loss}, up to constants, $D_{KL}(\overrightarrow{\mathbb{P}}^{\nu,f+\sigma^2 \nabla \phi}\Vert\overleftarrow{\mathbb{P}}^{\mu,f-\sigma^2 \nabla \psi})$ can be written as for $X\sim \overrightarrow{\mathbb{P}}^{\nu,f+\sigma^2 \nabla \phi}$, 
\begin{equation}
\mathbb{E}\Big[\int_0^T \frac{\sigma^2}{2}\|\nabla \phi_t(X_t)+\nabla \psi_t(X_t)\|^2 - 
\nabla \cdot (f_t-\sigma^2 \nabla \psi_t)(X_t) dt 
+\log \frac{\nu(X_0)}{\mu(X_T)}+\sigma\int_0^T (\nabla\phi_t+\nabla \psi_t)(X_t) \, \overrightarrow{dW_t}\Big]\, , \label{eqn:different_kl-1}
\end{equation}
and the log-variance divergence between the same two path measures has $\text{Var}[\cdot]$ in place of $\mathbb{E}[\cdot]$. 
Moreover $D_{KL}(\overrightarrow{\mathbb{P}}^{\nu,f+\sigma^2\nabla \phi_t}\Vert\overleftarrow{\mathbb{P}}^{\mu,f+\sigma^2\nabla \phi_t-\sigma^2\nabla \log\rho_t})$ evaluates to 
\begin{equation}
\mathbb{E}_{\overrightarrow{\mathbb{P}}^{\nu,f+\sigma^2\nabla \phi}}\Big[\log\frac{\nu(X_0)}{\mu(X_T)} + \int_0^T \frac{\sigma^2}{2}\|\nabla \log \rho_t(X_t)\|^2 
+\sigma^2\nabla\cdot (\nabla \log\rho_t- \nabla \phi_t)(X_t)-\nabla \cdot f_t(X_t)\, dt\Big]\, . \label{eqn:different_kl}
\end{equation} 
Both are straightforward consequences of Lemma~\ref{lem:RN}.
\end{remark}

Immediately follows is our main result: Proposition~\ref{prop:loss}, along with Lemma~\ref{lem:sde_for_sb} that sheds a different light on the losses (b) and (c) from Section~\ref{sec:proposal_loss}. 

\begin{proof}[Proof of Proposition~\ref{prop:loss}] 
The first term in losses (a), (b), (d) involving $D_{KL}$ can be written in terms of the (current) controls using Lemma~\ref{lem:RN}, up to constants (c.f. \eqref{eqn:different_kl-1}/\eqref{eqn:different_kl}). One can regularize to enforce optimality while still ensuring the right marginals via 
\[\mathcal{L}(\nabla \phi,\nabla \psi):= D_{KL}(\overrightarrow{\mathbb{P}}^{\nu,\sigma^2\nabla \phi}\Vert\overleftarrow{\mathbb{P}}^{\mu,-\sigma^2\nabla \psi})+\lambda R(\nabla \psi)\quad \text{or} \quad D_{KL}(\overrightarrow{\mathbb{P}}^{\nu,\sigma^2\nabla \phi}\Vert\overleftarrow{\mathbb{P}}^{\mu,-\sigma^2\nabla \psi})+\lambda' R'(\nabla \phi)\]
where $R(\cdot)$ can either utilize the variational formulation recast from \eqref{eqn:sb_contrain} as in (a) or the optimality condition on the control as in (b). In both cases, the first term is a $\rho$ constraint (time reversal consistency with correct terminal marginals), and the second one a control constraint (enforce optimality of trajectory). 

For (b), we give one direction of the argument for $\nabla \psi$ first. Let $\overrightarrow{\mathbb{P}}^{\nu,f}$ denote the path measure associated with $dX_t = f_t(X_t)dt+\sigma dW_t, X_0\sim \nu$, then $\overrightarrow{\mathbb{P}}^{\nu,f}$ almost surely, using Lemma~\ref{lem:RN}, the Radon-Nikodym derivative between $\overrightarrow{\mathbb{P}}^{\nu,f}$ and the controlled backward process is
\[\log\left(\frac{d\overrightarrow{\mathbb{P}}^{\nu,f}}{d\overleftarrow{\mathbb{P}}^{\mu,f-\sigma^2\nabla \psi_t}}\right)(X) = \log\frac{d\nu(X_0)}{d\mu(X_T)}+\int_0^T \frac{\sigma^2}{2}\|\nabla \psi_t\|^2- \nabla \cdot(f-\sigma^2\nabla \psi_t)dt+\sigma\int_0^T\nabla \psi_t^\top\, dW_t+\log Z\, .\]
In the case when the variance (taken along the prior $X\sim\overrightarrow{\mathbb{P}}^{\nu,f}$)
\[\text{Var}\left(\psi_T(X_T)-\psi_0(X_0) - \int_0^T \frac{\sigma^2}{2}\|\nabla \psi_t\|^2+\nabla \cdot f-\sigma^2\Delta \psi_t(X_t)dt-\int_0^T \sigma\nabla \psi_t^\top\, dW_t\right)=0\, ,\] 
it implies that the random quantity is almost surely a constant independent of the realization, and
\[\log\left(\frac{d\overrightarrow{\mathbb{P}}^{\nu,f}}{d\overleftarrow{\mathbb{P}}^{\mu,f-\sigma^2\nabla \psi_t}}\right)(X)=\underbrace{\log \nu(X_0)-\psi_0(X_0)}_{-f(X_0)}+\underbrace{\psi_T(X_T)-\log \mu(X_T)+\log Z}_{-g(X_T)}\, .\]
Recall the terminal constraint $\overrightarrow{\mathbb{P}}^{\nu,f+\sigma^2\nabla \phi_t}_0=\overleftarrow{\mathbb{P}}^{\mu,f-\sigma^2\nabla \psi_t}_0=\nu$ and $\overrightarrow{\mathbb{P}}^{\nu,f+\sigma^2\nabla \phi_t}_T=\overleftarrow{\mathbb{P}}^{\mu,f-\sigma^2\nabla \psi_t}_T=\mu$ are imposed by the first KL term, therefore using Nelson's identity $-\nabla\psi_0(X_0)=\nabla\phi_0(X_0)-\nabla \log \nu(X_0)$ and $-\nabla\psi_T(X_T)=\nabla\phi_T(X_T)-\nabla\log p_{\text{target}}(X_T)$, from which we can deduce that the factorization property \eqref{eqn:factorize_condition} holds and concludes that $\overleftarrow{\mathbb{P}}^{\mu,f-\sigma^2\nabla \psi_t}$ must be the unique solution to the SB problem. The other direction on $\nabla \phi$ is largely similar, and one can show using Girsanov's theorem and the variance condition that along the prior $X\sim\overrightarrow{\mathbb{P}}^{\nu,f}$,
\begin{align*}
\log \left(\frac{d\overrightarrow{\mathbb{P}}^{\nu,f+\sigma^2\nabla \phi_t}}{d\overrightarrow{\mathbb{P}}^{\nu,f}}\right)(X) &=\int_0^T -\frac{\sigma^2}{2}\|\nabla \phi_t\|^2 dt + \int_0^T \sigma\nabla \phi_t^\top \,dW_t = \underbrace{\phi_T(X_T)}_{g(X_T)}\underbrace{-\phi_0(X_0)}_{f(X_0)}\, .
\end{align*}
This allows us to conclude that the factorization characterization \eqref{eqn:factorize_condition} holds in the same way. Note that evaluating the variance regularizer only requires simulating from the reference process \eqref{eqn:Q_sde}. 

In (c), the first two parts enforce
\[\phi_T(X_T)+\psi_T(X_T)=\log p_{\text{target}}(X_T) \quad\text{and} \quad\phi_0(X_0)+\psi_0(X_0)=\log p_{\text{prior}}(X_0)=\log\nu(X_0)\, .\]
Now for any $\phi_t$, the likelihood ratio along $dX_t=(f+\sigma^2\nabla \phi_t)dt+\sigma dW_t, X_0\sim \nu$ is 
\begin{equation}
\label{eqn:temp1}
\log \left(\frac{d\overrightarrow{\mathbb{P}}^{\nu,f+\sigma^2\nabla \phi_t}}{d\overrightarrow{\mathbb{P}}^{\nu,f}}\right)(X) = \int_0^T \frac{\sigma^2}{2}\|\nabla \phi_t\|^2dt+\int_0^T \sigma\nabla \phi_t^\top\, dW_t\, ,
\end{equation}
which according to \eqref{eqn:factorize_condition}, has to be equal to 
\[\psi_0(X_0)+\phi_T(X_T)-\log\nu(X_0)=-\phi_0(X_0)+\phi_T(X_T) \,\;\;  \mathbb{P}^{\nu,f} \, a.s. \Rightarrow \mathbb{P}^{\nu,f+\sigma^2\nabla \phi}\, a.s.\]
for $\nabla \phi$ to be optimal, justifying
\begin{align*}
&\text{Var}_{X\sim \overrightarrow{\mathbb{P}}^{\nu,f+\sigma^2 \nabla \phi}}\left(\phi_T(X_T)-\phi_0(X_0)-\frac{\sigma^2}{2}\int_0^T \|\nabla \phi_t\|^2(X_t)\, dt-\sigma\int_0^T \nabla \phi_t(X_t)^\top\, dW_t \right)\, . 
\end{align*}
In a similar spirit, using \eqref{eqn:rn_original-1}-\eqref{eqn:rn_original-2} and \eqref{eqn:conversion}, for any $\psi_t$, along the same process $X\sim \overrightarrow{\mathbb{P}}^{\nu,f+\sigma^2\nabla \phi_t}$,
\begin{align}
\log\left(\frac{d\overrightarrow{\mathbb{P}}^{\nu,f}}{d\overleftarrow{\mathbb{P}}^{\mu,f-\sigma^2\nabla \psi_t}}\right)(X)&=\int_0^T \sigma^2\nabla \psi_t^\top\nabla \phi_t +\frac{\sigma^2}{2}\|\nabla \psi_t\|^2-\nabla \cdot (f_t-\sigma^2\nabla \psi_t)dt+\int_0^T \sigma \nabla \psi_t^\top\, dW_t \nonumber\\
&+\log \left(\frac{d\nu(X_0)}{d\mu(X_T)}\right) +\log Z\, , \label{eqn:temp2}
\end{align}
which again using \eqref{eqn:factorize_condition}, has to be equal to \[-\psi_0(X_0)-\phi_T(X_T)+\log \nu(X_0)=\log \nu(X_0)-\psi_0(X_0)+\psi_T(X_T)-\log p_{\text{target}}(X_T)\]
for $\nabla \psi$ to be optimal, yielding the claimed variance regularizer. Additionally, in order to verify the terminal constraint at $0,T$ along $\overrightarrow{\mathbb{P}}^{\nu,f+\sigma^2\nabla \phi_t}$, it suffices to sum up \eqref{eqn:temp1}-\eqref{eqn:temp2}, which due to the first 2 terms of the loss, gives 
\[-\phi_0(X_0)+\phi_T(X_T)-\psi_0(X_0)-\log p_{\text{target}}(X_T)+\psi_T(X_T)+\log \nu(X_0)=0.\]
This necessarily imposes the time-reversal consistency $\overrightarrow{\mathbb{P}}^{\nu,f+\sigma^2\nabla \phi_t}/\overleftarrow{\mathbb{P}}^{\mu,f-\sigma^2\nabla \psi_t} = 1$ a.s. for the LHS.

In (d), we use $\nabla \phi_t$ and $\nabla \log\rho_t$ as optimization variables instead of the two drifts, and it follows from the dynamical formulation \eqref{eqn:bb-1}-\eqref{eqn:bb-2}. 
 The first part establishes a particular relationship between the two variables (namely $\nabla\phi_t$ traces out a curve of measures $\rho_t$), and the second part enforces optimality among all curves transporting between $\nu$ and $\mu$. 
\end{proof}

It is important to note that the dynamics for $\phi_t,\psi_t$ below are adapted to the same filtration generated by the Brownian motion corresponding to $X_t\sim \overrightarrow{\mathbb{P}}$, i.e., $\phi_t,\psi_t$ are interpreted as functions of $X_t,t$. They are not time-reversed SDE, but rather terminal-constrained SDEs.
\begin{lemma}[SDE correspondence to SB optimality] 
\label{lem:sde_for_sb}
We have for the optimal forward drift $\nabla \phi_t$ and $X\sim\overrightarrow{\mathbb{P}}^{\nu,f}$ as in \eqref{eqn:Q_sde},
\[d \phi_t(X_t) = - \frac{\sigma^2}{2}\|\nabla \phi_t(X_t)\|^2 dt+\sigma \nabla \phi_t(X_t)^\top \,\overrightarrow{dW_t}\, ,\]
analogously for the optimal backward drift $-\nabla\psi_t$, along $X\sim\overrightarrow{\mathbb{P}}^{\nu,f}$, 
\[d\psi_t(X_t)=\Big[\frac{\sigma^2}{2}\|\nabla \psi_t\|^2-\nabla \cdot f_t+\sigma^2\Delta \psi_t\Big](X_t) dt+\sigma\nabla \psi_t(X_t)^\top\, \overrightarrow{dW_t}\, .\]
Moreover, along the controlled forward dynamics $X\sim \overrightarrow{\mathbb{P}}^{\nu,f+\sigma^2\nabla \phi}$, the optimal control $\phi,\psi$ satisfy
\begin{align}
&d\phi_t(X_t) = \frac{\sigma^2}{2}\|\nabla \phi_t(X_t)\|^2 dt+\sigma \nabla \phi_t(X_t)^\top \,\overrightarrow{dW_t} \label{eqn:fbsde_control_1}\, ,\\
&d\psi_t(X_t)=\Big[\frac{\sigma^2}{2}\|\nabla \psi_t\|^2+\nabla \cdot (\sigma^2\nabla \psi_t-f_t)+\sigma^2\nabla \phi_t^\top\nabla \psi_t\Big](X_t)dt+\sigma\nabla \psi_t(X_t)^\top \overrightarrow{dW_t} \label{eqn:fbsde_control_2} \, .
\end{align}
In the above, $\nabla\phi_t,\nabla\psi_t$ refer to the optimal forward / backward drift in the SDE
\[dX_t=[f_t(X_t)+\sigma^2\nabla \phi_t(X_t)] dt+\sigma \overrightarrow{dW_t},\; X_0\sim \nu\,,\]
\[dX_t=[f_t(X_t)-\sigma^2\nabla \psi_t(X_t)]dt+\sigma \overleftarrow{dW_t}, \; X_T\sim \mu\, ,\]
and $\phi_T(X_T)+\psi_T(X_T)=\log p_{\text{target}}(X_T), \phi_0(X_0)+\psi_0(X_0)=\log \nu(X_0)$ for $X\sim \overrightarrow{\mathbb{P}}^{\nu,f+\sigma^2\nabla \phi}$.
\end{lemma}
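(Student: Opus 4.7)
The plan is to derive both SDEs by applying It\^o's formula to $\phi_t(X_t)$ and $\psi_t(X_t)$ and then substituting the HJB-type PDEs that these potentials satisfy as a consequence of being the log-potentials of the Schr\"odinger system.

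First I would reconcile notation. In the lemma the drifts are $\sigma^2\nabla\phi_t$ and $-\sigma^2\nabla\psi_t$, whereas in \eqref{eqn:forward_sb}--\eqref{eqn:backward_sb} they were $\sigma^2\nabla\log\phi_t$ and $-\sigma^2\nabla\log\psi_t$. So the $\phi,\psi$ used here play the role of $\log\phi,\log\psi$ in \eqref{eqn:sb_system}. A direct Hopf--Cole computation on \eqref{eqn:sb_system} (using $\Delta\tilde\phi/\tilde\phi=\Delta\log\tilde\phi+\|\nabla\log\tilde\phi\|^2$ and $\nabla\cdot(\tilde\psi f)/\tilde\psi=\nabla\cdot f+f^\top\nabla\log\tilde\psi$) yields the pair of PDEs
\begin{align*}
\partial_t\phi_t+f_t^\top\nabla\phi_t+\tfrac{\sigma^2}{2}\Delta\phi_t+\tfrac{\sigma^2}{2}\|\nabla\phi_t\|^2 &= 0,\\
\partial_t\psi_t+f_t^\top\nabla\psi_t-\tfrac{\sigma^2}{2}\Delta\psi_t-\tfrac{\sigma^2}{2}\|\nabla\psi_t\|^2+\nabla\cdot f_t &= 0,
\end{align*}
which are exactly the HJB equations already alluded to around \eqref{eqn:hjb}. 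I would also note that the boundary identities $\phi_T+\psi_T=\log p_{\text{target}}$ and $\phi_0+\psi_0=\log\nu$ follow from the factorization $\log\rho_t=\log\phi_t+\log\psi_t$ in \eqref{eqn:factorization} evaluated at $t=0,T$.

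Next, for the uncontrolled path $dX_t=f_t(X_t)dt+\sigma dW_t$, It\^o's lemma applied to $\phi_t(X_t)$ gives $d\phi_t(X_t)=[\partial_t\phi_t+f_t^\top\nabla\phi_t+\tfrac{\sigma^2}{2}\Delta\phi_t](X_t)\,dt+\sigma\nabla\phi_t(X_t)^\top dW_t$, and substituting the first PDE collapses the drift to $-\tfrac{\sigma^2}{2}\|\nabla\phi_t\|^2$, matching the first claim. The same It\^o expansion for $\psi_t(X_t)$ together with the second PDE (solve for $\partial_t\psi_t+f_t^\top\nabla\psi_t=\tfrac{\sigma^2}{2}\Delta\psi_t+\tfrac{\sigma^2}{2}\|\nabla\psi_t\|^2-\nabla\cdot f_t$, then add the It\^o correction $\tfrac{\sigma^2}{2}\Delta\psi_t$) produces the drift $\sigma^2\Delta\psi_t+\tfrac{\sigma^2}{2}\|\nabla\psi_t\|^2-\nabla\cdot f_t$ as claimed.

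For the controlled path $dX_t=(f_t+\sigma^2\nabla\phi_t)(X_t)dt+\sigma dW_t$, the only modification to the previous computations is an extra Girsanov-type drift term: applying It\^o to $\phi_t(X_t)$ adds $\sigma^2\|\nabla\phi_t\|^2$ to the prior drift, yielding $\tfrac{\sigma^2}{2}\|\nabla\phi_t\|^2$ and giving \eqref{eqn:fbsde_control_1}; while applying it to $\psi_t(X_t)$ adds $\sigma^2\nabla\phi_t^\top\nabla\psi_t$, producing $\sigma^2\Delta\psi_t+\tfrac{\sigma^2}{2}\|\nabla\psi_t\|^2-\nabla\cdot f_t+\sigma^2\nabla\phi_t^\top\nabla\psi_t$, which upon regrouping $\sigma^2\Delta\psi_t-\nabla\cdot f_t=\nabla\cdot(\sigma^2\nabla\psi_t-f_t)$ matches \eqref{eqn:fbsde_control_2}. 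There is no serious obstacle here; the only care needed is the Hopf--Cole relabelling in the opening step, since mistaking $\phi_t$ in the lemma for the raw $\tilde\phi_t$ of \eqref{eqn:sb_system} would scramble all of the signs and quadratic terms that make the PDEs reduce so cleanly under It\^o.
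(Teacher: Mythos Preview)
Your proposal is correct and follows essentially the same approach as the paper: derive the HJB-type PDEs for $\phi_t,\psi_t$ from the Schr\"odinger system \eqref{eqn:sb_system} via the Hopf--Cole substitution (using exactly the identity $\Delta g/g=\Delta\log g+\|\nabla\log g\|^2$), then apply It\^o's lemma along the relevant process and substitute. The one minor difference is that for the controlled-dynamics part \eqref{eqn:fbsde_control_1}--\eqref{eqn:fbsde_control_2} the paper simply cites \cite[Theorem~3]{chen2021likelihood}, whereas you carry out the same It\^o computation directly with the extra $\sigma^2\nabla\phi_t$ in the drift; your route is more self-contained and makes the sign flip in \eqref{eqn:fbsde_control_1} and the appearance of the cross term in \eqref{eqn:fbsde_control_2} transparent.
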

\begin{proof}[Proof of Lemma~\ref{lem:sde_for_sb}]
Using It\^o's lemma, we have along the reference SDE $dX_t=f_t(X_t)dt+\sigma dW_t, X_0\sim \nu$, 
\[d\phi_t=\left[\frac{\partial \phi_t}{\partial t}+\nabla \phi_t^\top f_t+\frac{\sigma^2}{2}\Delta \phi_t\right] dt+\sigma\nabla \phi_t^\top \overrightarrow{dW_t}\, .\]
Now deducing from \eqref{eqn:sb_system}, since the optimal $\phi_t$ solves the PDE for all $(t,x)\in[0,T]\times \mathbb{R}^d$
\[\partial_t \phi_t=-f_t^\top\nabla \phi_t-\frac{\sigma^2}{2}\Delta\phi_t-\frac{\sigma^2}{2}\|\nabla \phi_t\|^2\, ,\]
substituting the last display into the previous one gives the result. Analogously, along the same reference process with It\^o's lemma, 
\[d\psi_t = \left[\frac{\partial \psi_t}{\partial t}+\nabla \psi_t^\top f_t+\frac{\sigma^2}{2}\Delta \psi_t\right] dt+\sigma\nabla \psi_t^\top \overrightarrow{dW_t}\]
and using the fact \eqref{eqn:sb_system} that the optimal $\psi_t$ solves for all $(t,x)\in[0,T]\times \mathbb{R}^d$
\[\partial_t \psi_t = -\nabla\psi_t^\top f_t-\nabla \cdot f_t+\frac{\sigma^2}{2}\left(\|\nabla \psi_t\|^2+\Delta\psi_t\right)\]
and plugging into the previous display yields the claim. In both of the PDE derivations above, we used the fact that for any $g\colon \mathbb{R}^d\rightarrow \mathbb{R}$, 
\begin{equation}
\label{eqn:important_identity}
\frac{1}{g}\nabla^2 g=\nabla^2\log g+\frac{1}{g^2}\nabla g\nabla g^\top\Rightarrow \frac{1}{g}\Delta g=\Delta \log g+\|\nabla \log g\|^2
\end{equation}
by taking trace on both sides. 

The second part of the lemma statement, where $X_t$ evolves along the optimally controlled SDE, follows from \cite[Theorem 3]{chen2021likelihood} up to a change of variable. Notice the sign change and the absence of the cross term in the dynamics for $\phi_t,\psi_t$ when $X\sim\overrightarrow{\mathbb{P}}^{\nu,f}$ vs. $X\sim \overrightarrow{\mathbb{P}}^{\nu,f+\sigma^2\nabla \phi}$.
\end{proof}

\begin{remark}[Equivalence between SDE, PDE, Path measure]
\label{rmk:equivalence}
Adding \eqref{eqn:fbsde_control_1}-\eqref{eqn:fbsde_control_2} up and integrating over time, we get that along the \emph{optimally controlled} forward trajectory, 
\[\log p_{\text{target}}(X_T)-\log\nu(X_0)=\int_0^T \frac{\sigma^2}{2}\|\nabla \phi_t(X_t)+\nabla \psi_t(X_t)\|^2 + 
\nabla \cdot (\sigma^2 \nabla \psi_t-f_t)(X_t) dt +\sigma\int_0^T (\nabla\phi_t+\nabla \psi_t)(X_t) \, \overrightarrow{dW_t},\]
exactly matching the KL objective \eqref{eqn:different_kl-1} $D_{KL}(\overrightarrow{\mathbb{P}}^{\nu,f+\sigma^2 \nabla \phi}\Vert\overleftarrow{\mathbb{P}}^{\mu,f-\sigma^2 \nabla \psi})=0$. This implies that the optimally controlled dynamics is one solution that satisfy the $\nu-\mu$ marginal. The summation of $\phi_t,\psi_t$, however, introduces ambiguity/non-uniqueness as any $+C$ shift in one of $\phi,\psi$ that's cancelled by $-C$ in another will result in the same integral equality for $\partial_t \log\rho_t(X_t)$. What we really want is therefore conditions on $\log \left(\frac{d\overrightarrow{\mathbb{P}}^{\nu,f+\sigma^2\nabla \phi_t}}{d\overrightarrow{\mathbb{P}}^{\nu,f}}\right)\, \text{and} \,\log\left(\frac{d\overrightarrow{\mathbb{P}}^{\nu,f}}{d\overleftarrow{\mathbb{P}}^{\mu,f-\sigma^2\nabla \psi_t}}\right)$ separately instead of $\log\left(\frac{d\overrightarrow{\mathbb{P}}^{\nu,f+\sigma^2\nabla \phi}}{d\overleftarrow{\mathbb{P}}^{\mu,f-\sigma^2\nabla \psi}}\right)$
only. An equivalent PDE representation of $\partial_t \log\rho_t(X_t)$ can be written as 
\begin{align}
\partial_t \log\rho_t(X_t)&= \frac{1}{\rho_t}(\partial_t \rho_t+\nabla \rho_t^\top \dot{X_t})\nonumber\\
&=\frac{1}{\rho_t}[-\nabla \cdot (\rho_t(f_t+\sigma^2\nabla \phi_t))+\frac{\sigma^2}{2}\Delta\rho_t]+\frac{1}{\rho_t}\nabla \rho_t^\top (f_t+\sigma^2\nabla \phi_t-\frac{\sigma^2}{2}\nabla \log \rho_t)\nonumber\\
&=-\sigma^2\Delta\phi_t-\nabla \cdot f_t+\frac{\sigma^2}{2}\frac{1}{\rho_t}\Delta\rho_t-\frac{\sigma^2}{2}\|\nabla \log\rho_t\|^2=-\sigma^2\Delta\phi_t-\nabla \cdot f_t+\frac{\sigma^2}{2}\Delta\log\rho_t\, . \label{eqn:partial_log_rho} 
\end{align}
\end{remark}

Proposition~\ref{prop:importance_sample} and Lemma~\ref{lem:discretization} are stated in Section~\ref{sec:implement}, whose proof we give below.

\begin{proof}[Proof of Proposition~\ref{prop:importance_sample}]
The optimal $\nabla \phi_t,\nabla \psi_t$ allow us to calculate $\log Z$ for $p_{\text{target}}$ using \eqref{eqn:different_kl-1} as (can be used without expectation, or with expectation and ignore the last zero-mean martingale term) 
\begin{align*}
\log Z&=\mathbb{E}_{\overrightarrow{\mathbb{P}}^{\nu,f+\sigma^2 \nabla \phi}}\left[-\frac{\sigma^2}{2}\int_0^T \|\nabla \phi_t(X_t)+\nabla \psi_t(X_t)\|^2 dt+\int_0^T \nabla \cdot (f_t(X_t)-\sigma^2 \nabla \psi_t(X_t)) dt
-\log\frac{d\nu(X_0)}{d\mu(X_T)}\right]\\
&+\mathbb{E}_{\overrightarrow{\mathbb{P}}^{\nu,f+\sigma^2 \nabla \phi}}\left[-\sigma \int_0^T(\nabla \phi_t + \nabla \psi_t)(X_t)\, \overrightarrow{dW_t}\right] =: \mathbb{E}[-S]\,.
\end{align*}
Alternatively using \eqref{eqn:covf}, with the optimal $\nabla \phi^*,\nabla \psi^*$, since $Z$ is independent of $X_T$,
\begin{equation}
\label{eqn:jarzynski}
-\log Z = \log \nu(X_0)-\frac{\sigma^2}{2}\int_0^T\nabla\cdot (\nabla \log\phi_t^*-\nabla \log\psi_t^*)(X_t)\, dt-\int_0^T \nabla \cdot f_t(X_t) dt -\log \mu(X_T)\, ,
\end{equation}
in which case the estimator is exact with $X_t$ following \eqref{eqn:ode}. 
 
In general for imperfect control, since $D_{KL}(\overrightarrow{\mathbb{P}}^{\nu,f+\sigma^2 \nabla \phi}\Vert\overleftarrow{\mathbb{P}}^{\mu,f-\sigma^2\nabla \psi})>0$, $\log Z$ will only be lower bounded by $\mathbb{E}[-S]$. Using Lemma~\ref{lem:RN} however,
\begin{align} 
&1 = \mathbb{E}_{\overrightarrow{\mathbb{P}}^{\nu,f+\sigma^2\nabla \phi}}\left[\left( \frac{d\overrightarrow{\mathbb{P}}^{\nu,f+\sigma^2\nabla \phi}}{d\overleftarrow{\mathbb{P}}^{\mu,f-\sigma^2\nabla \psi}}\right)^{-1}\right]\nonumber\\
&= \mathbb{E}_{\overrightarrow{\mathbb{P}}^{\nu,f+\sigma^2\nabla \phi}}\left[\exp\left(-\frac{\sigma^2}{2}\int_0^T \|\nabla \phi_t+\nabla \psi_t\|^2+\nabla \cdot (f_t-\sigma^2\nabla \psi_t)dt-\sigma\int_0^T (\nabla \phi_t+\nabla \psi_t) dW_t-\log\frac{\nu}{\mu}\right)\frac{1}{Z}\right] \label{eqn:Z_estimate}\\
&=:\mathbb{E}[\exp(-S')/Z]\, , \nonumber
\end{align}
giving $Z=\mathbb{E}_{\overrightarrow{\mathbb{P}}^{\nu,f+\sigma^2\nabla \phi}}[\exp(-S')]$ as unbiased estimator using any (potentially sub-optimal) controls $\nabla \phi,\nabla \psi$. 

For the importance sampling, we use path weights suggested by the terminal requirement that $X_T^{\phi^*}\sim \mu$, therefore using \eqref{eqn:partial_log_rho} the path weight becomes 
\begin{align*}
w^\phi(X^\phi_T)&=\frac{d\mu}{d\mathbb{P}_{X_T^\phi}}(X_T^\phi)= \frac{d\mu (X_T^\phi)}{d\nu (X_0^\phi)}\frac{d\nu (X_0^\phi)}{d\mathbb{P}_{X_T^\phi}(X_T^\phi)}\\
&= \frac{d\mu (X_T^\phi)}{d\nu (X_0^\phi)}\exp\left(-\int_0^T \partial_t \log \rho^\phi_t(X_t^\phi) \, dt\right)\\
&=  \frac{d\mu (X_T^\phi)}{d\nu (X_0^\phi)}\exp\left(\int_0^T \sigma^2\Delta\phi_t-\frac{\sigma^2}{2}\Delta\log\rho_t^\phi+\nabla \cdot f_t\, dt\right)\, .
\end{align*}
Indeed, this choice guarantees $\mathbb{E}_\phi[g(X_T^\phi)w^\phi(X_T^\phi)]=\int g(x)\mu(x) dx$ for any function $g$ on the terminal variable $X_T$ generated with the suboptimal control $\phi$. It remains to normalize the weights (i.e., account for the constant $Z$) via dividing by $\mathbb{E}_\phi[w^\phi(X_T^\phi)]=Z$. In the case of perfect controls, all samples will have equal weight.
\end{proof}


\begin{remark}[optimal control $\Leftrightarrow$ optimal estimator]
\label{rem:jarzynski} 
In general one always has the importance sampling identity 
\[1 = \mathbb{E}_{\overrightarrow{\mathbb{P}}^{\nu,f+\sigma^2\nabla \phi}}\left[\left( \frac{d\overrightarrow{\mathbb{P}}^{\nu,f+\sigma^2\nabla \phi}}{d\overleftarrow{\mathbb{P}}^{\mu,f-\sigma^2\nabla \psi}}\right)^{-1}\right]\, .\]
With the optimal controls $\phi^*,\psi^*$, we see that since inside the square bracket of \eqref{eqn:Z_estimate}, we have $\exp(0)=1$
holds deterministically (although the path is random), the $Z$-estimator is optimal when optimally controlled in the sense that it's unbiased and has zero variance (it is almost surely a constant and equality holds not only in expectation). This property is desirable for the Monte-Carlo estimates that we employ. Such optimal control $\Leftrightarrow$ optimal sampling equivalence also appears in \citep{thijssen2015path} for path integral control problem that involves a \emph{single control}. This choice of $\phi^*,\psi^*$ also admits the interpretation of minimizing the $\chi^2$-divergence between path measures:
\[\chi^2(\overleftarrow{\mathbb{P}}^{\mu,f-\sigma^2\nabla \psi}\Vert d\overrightarrow{\mathbb{P}}^{\nu,f+\sigma^2\nabla \phi}) = \text{Var}_{\overrightarrow{\mathbb{P}}^{\nu,f+\sigma^2\nabla \phi}} \left[ \frac{\overleftarrow{\mathbb{P}}^{\mu,f-\sigma^2\nabla \psi}}{\overrightarrow{\mathbb{P}}^{\nu,f+\sigma^2\nabla \phi}}\right]\, .\]
\end{remark}

\begin{proof}[Proof of Lemma~\ref{lem:discretization}]
We obtain the forward trajectory 
\begin{equation}
\label{eqn:new_forward}
X_{k+1}^i=X_k^i+(f_k(X_k^i)+\sigma^2 \nabla \phi_k(X_k^i))h+\sigma\sqrt{h} \cdot z_k^i, \; z_k^i\sim\mathcal{N}(0,I) 
\end{equation}
with Euler-Maruyama for each of the $i\in[N]$ samples. Rewriting the term \eqref{eqn:temp2} using \eqref{eqn:conversion}, \eqref{eqn:rn_original-1}-\eqref{eqn:rn_original-2} and proceeding with the approximation \eqref{eqn:approx}-\eqref{eqn:approx-1} give
\begin{align}
&\log\frac{\nu(X_0)}{p_{\text{target}}(X_T)}+\frac{1}{\sigma^2}\int_0^T f_t(X_t)\overrightarrow{dX_t}-\frac{1}{2\sigma^2}\int_0^T \|f_t(X_t)\|^2 dt-\frac{1}{\sigma^2}\int_0^T (f_t-\sigma^2\nabla \psi_t)(X_t)\overleftarrow{dX_t}+\frac{1}{2\sigma^2}\int_0^T \|(f_t-\sigma^2\nabla \psi_t)(X_t)\|^2 dt\nonumber\\
&\approx\frac{1}{\sigma^2}\sum_{k=0}^{K-1} f_k(X_k)^\top (X_{k+1}-X_k)-\frac{1}{2\sigma^2}\|f_k(X_k)\|^2h \nonumber\\
&-\frac{1}{\sigma^2} (f_{k+1}(X_{k+1})-\sigma^2\nabla \psi_{k+1}(X_{k+1}))(X_{k+1}-X_k)+\frac{1}{2\sigma^2}\|f_{k+1}(X_{k+1})-\sigma^2\nabla \psi_{k+1}(X_{k+1})\|^2h +\log\frac{\nu(X_0)}{p_{\text{target}}(X_T)}\nonumber\\
&= \sum_{k=0}^{K-1} \frac{1}{2\sigma^2h} \|X_{k}-X_{k+1}+(f_{k+1}-\sigma^2\nabla \psi_{k+1})(X_{k+1})h\|^2 \nonumber - \sum_{k=0}^{K-1} \frac{1}{2\sigma^2h} \|X_{k+1}-X_k-f_{k}(X_k)h\|^2+\log\frac{\nu(X_0)}{p_{\text{target}} (X_T)}\nonumber \\
&\overset{!}{=}  \log \nu(X_0)-\psi_0(X_0)+\psi_T(X_T)-\log p_{\text{target}} (X_T) \nonumber
\end{align}
which yields the estimator for the variance regularizer on $\psi$ when $(X_k)_k$ follows \eqref{eqn:new_forward}. Note that compared to a naive discretization of the RHS of \eqref{eqn:temp2}, we are able to avoid any divergence and stochastic terms.


It is also possible to avoid the divergence term in \eqref{eqn:Z_estimate} by leveraging similar ideas. Direct computation using \eqref{eqn:rn_original-1}-\eqref{eqn:rn_original-2} on $\frac{d\overleftarrow{\mathbb{P}}^{\mu,f-\sigma^2\nabla \psi}}{d\overrightarrow{\mathbb{P}}^{\nu,f+\sigma^2\nabla \phi}}$ tells us the normalizing constant estimator
\[Z\approx \frac{1}{N}\sum_{i=1}^N \exp\left(\log\frac{\mu}{\nu}-\sum_{k=0}^{K-1} \frac{1}{2\sigma^2h}\|X_k^i-X_{k+1}^i+(f_{k+1}-\sigma^2\nabla \psi_{k+1})(X_{k+1}^i)h\|^2+\frac{1}{2}\|z_k
^i\|^2\right)\]
for the same Euler-Maruyama trajectory \eqref{eqn:new_forward}, where we used that the additional term
\[\sum_{k=0}^{K-1} \frac{1}{2\sigma^2h} \|X_{k+1}-X_k-(f_{k}+\sigma^2\nabla \phi_k)(X_k)h\|^2=\frac{1}{2}\sum_{k=0}^{K-1}\|z_k\|^2\]
from the update \eqref{eqn:new_forward}.  
\end{proof} 

\section{MISSING AUXILIARY RESULTS}
\begin{remark}[Backward drift]
\label{rmk:backward_drift}
For PINN \eqref{pinn:loss}, a regularizer on the backward drift involving $\nabla \psi$ is also possible and will be:
\begin{equation}
\eqref{eqn:log_divergence} +\lambda \cdot \frac{h}{n}\sum_{i=1}^n \sum_{k=0}^{K} \Big|\partial_t \psi(x_k^i,kh)-\frac{\sigma^2}{2}\left[\Delta \psi(x_k^i,kh)-
\|\nabla \psi(x_k^i,kh)\|^2\right]\Big|\label{eqn:backward_pinn}\, .
\end{equation}
And similarly for variance loss \eqref{var:loss}, which will read as 
\begin{align}
\eqref{eqn:log_divergence}&+\frac{\lambda}{K+1} \cdot \text{Var}_n \Big[\psi(y_{K+1}^i,(K+1)h)-\psi(y_0^i,0)+\sum_{k=0}^K \frac{1}{2\sigma^2h}\Big(\|y_{k+1}^i-y_k^i\|^2- \|y_{k}^i-y_{k+1}^i-\sigma^2 h \nabla \psi(y_{k+1}^i,(k+1)h)\|^2\Big)\Big] \label{eqn:backward_var}
\end{align}
and TD loss \eqref{eqn:TD} becomes
\begin{align}
\eqref{eqn:log_divergence}&+\lambda\cdot \frac{1}{n}\sum_{i=1}^n \sum_{k=0}^K h\cdot \Big\vert \psi(y_{k+1}^i,(k+1)h)-\psi(y_{k}^i,kh)-\frac{\sigma^2 h}{2} \|\nabla \psi(y_{k}^i,kh)\|^2 -\sigma^2 \Delta \psi(y_k^i,kh) -\sigma\sqrt{h} \nabla\psi(y_{k}^i,kh)^\top Z_k^i\Big\vert\, . \label{eqn:backward_td}
\end{align}
However, these should witness similar behavior as the regularizer on the forward control $\nabla \phi$ that we experimented with (using the same trajectories). Note crucially there is no divergence term in \eqref{eqn:backward_var} in contrast to \eqref{eqn:backward_td}, \eqref{eqn:backward_pinn}.
\end{remark}


Lastly we mention a word about contraction for method \eqref{eqn:score_matching_loss}.

\begin{remark}[Contraction]
\label{rmk:contract}
In Section~\ref{sec:related} we claimed that for two processes $(p_t)_t,(q_t)_t$ with different initialization, but same drift $\sigma v+\sigma^2 s$ where $s$ is the score function, $\partial_t\, D_{KL}(p_t\Vert q_t)=-\sigma^2/2\cdot \mathbb{E}_{p_t}[\|\nabla \log\frac{p_t}{q_t}\|^2]\leq 0$ contracts towards each other, when we run the generative process 
\[dX_t=[\sigma v_t(X_t)+\sigma^2 s_t(X_t)]\, dt+\sigma \overrightarrow{dW_t},\; X_0\sim \nu\neq \overleftarrow{\mathbb{P}}_0^{\mu,\sigma v}\, ,\]
although the actual rate may be slow. We give this short calculation here for completeness:
\begin{align*}
\partial_t\, &D_{KL}(p_t\Vert q_t)\\
&=\int (\partial_t p_t(x))\log \frac{p_t(x)}{q_t(x)} \, dx+\int q_t(x)\left[\frac{\partial_t p_t(x)}{q_t(x)}-\frac{p_t(x)\partial_t q_t(x)}{q_t^2(x)}\right]\,dx\\
&= \int p_t\left[\sigma v_t+\sigma^2 s_t-\frac{\sigma^2}{2}\nabla \log p_t(x)\right]^\top\nabla \log \frac{p_t(x)}{q_t(x)} dx -\int \frac{p_t(x)}{q_t(x)}\partial_t q_t(x) dx\\
&= \int p_t\left[\sigma v_t+\sigma^2 s_t-\frac{\sigma^2}{2} \nabla \log p_t(x)\right]^\top\nabla \log \frac{p_t(x)}{q_t(x)} dx -\int \nabla \frac{p_t(x)}{q_t(x)}\left[\sigma v_t+\sigma^2 s_t-\frac{\sigma^2}{2}\nabla \log q_t(x)\right] q_t(x) dx\\
&= \int \left[\sigma v_t+\sigma^2 s_t-\frac{\sigma^2}{2}\nabla \log p_t(x)-\sigma v_t-\sigma^2 s_t+\frac{\sigma^2}{2}\nabla \log q_t(x)\right]^\top \nabla \log \frac{p_t(x)}{q_t(x)} \cdot p_t(x) dx\\
&= -\frac{\sigma^2}{2} \int \left\|\nabla \log \frac{p_t(x)}{q_t(x)}\right\|^2 p_t(x) dx \leq 0\, .
\end{align*}
\end{remark}

\section{SECOND-ORDER DYNAMICS}
\label{sec:second_order}
The motivation of this section is to design a smoother trajectory for $X_t$ while still maintaining optimality (and therefore uniqueness) of the dynamics. Suppose the reference $Q$ is given by the following augmented process with a velocity variable $V_t\in \mathbb{R}^d$, and $(X_t,V_t)$ have as the stationary distribution $\mathcal{N}(0,I)\otimes \mathcal{N}(0,I)$:  
\begin{align}
\label{eqn:ref_2nd_1}
dX_t &=  V_t\, dt\\
\label{eqn:ref_2nd_2}
dV_t &= -X_t dt-\gamma  V_t dt + \sqrt{2\gamma}\, \overrightarrow{dW_t}
\end{align}
We assume it is initialized at $Z_0:=(X_0,V_0)\sim \nu\otimes \mathcal{N}(0,I)$ independent and would like to enforce $Z_T:=(X_T,V_T)\sim \mu\otimes \mathcal{N}(0,I)$ at the terminal. In this second-order case \citep{dockhorn2022score}, to solve the optimal trajectory problem, we formulate $P$ as the controlled process
\begin{align}
\label{eqn:2ndorder-1}
&\inf_{\rho,u} \; \int_{\mathbb{R}^{2d}} \int_0^T \gamma\|u_t(Z_t)\|^2 \rho_t(Z_t)\, dt dz\\
\label{eqn:2ndorder-2}
&\text{s.t.}\; dX_t= V_t\, dt\\
\label{eqn:2ndorder-3}
&\quad \quad dV_t = -[X_t+2\gamma u_t(Z_t)] dt -\gamma  V_t dt + \sqrt{2\gamma}\, \overrightarrow{dW_t},   \; \rho_0(Z_0)=\nu\otimes \mathcal{N}(0,I), \rho_T(Z_T)=\mu\otimes \mathcal{N}(0,I)
\end{align} 
One could check via the Girsanov's theorem that by picking 
\[\bar{b}_t = \begin{bmatrix} V_t\\ -X_t-\gamma V_t\end{bmatrix}, \quad b_t = \begin{bmatrix} V_t\\ -X_t-2\gamma u_t(Z_t)-\gamma V_t\end{bmatrix}, \quad G_t = \begin{bmatrix}0 & 0\\ 0 & \sqrt{2\gamma}I\end{bmatrix}\]
since $\bar{b}_t-b_t\in \text{image}(G_t)$, the KL divergence between the two path measures $P$ and $Q$ is $\mathbb{E}[\int_0^T \gamma \|u_t(Z_t)\|^2 dt]$. This is the SB goal of 
\[\min_{P_{0}\sim\nu\otimes\mathcal{N}, P_{T}\sim \mu\otimes\mathcal{N}} D_{KL}(P\Vert Q)\,, \]
which if minimized will identify the path with the minimum control effort between the two time marginals. We note that this formulation is different from the one in \cite[Section 3]{chen2024deep}, and is closer to an underdamped version of the SB dynamics. We denote the corresponding controlled path measure \eqref{eqn:2ndorder-2}-\eqref{eqn:2ndorder-3} as $\overrightarrow{\mathbb{P}}$. 

The Radon-Nikodym likelihood ratio from Lemma~\ref{lem:RN} still applies, therefore to impose path measure consistency, we introduce a backward control $o(Z_t,t)$ for the process $\overleftarrow{\mathbb{P}}$: 
\begin{align}
dX_t &=  V_t\, dt\\
dV_t &= -[X_t-2\gamma o_t(Z_t)] dt-\gamma  V_t dt + \sqrt{2\gamma}\, \overleftarrow{dW_t}\, .
\end{align}
The KL divergence $D_{KL}(\overrightarrow{\mathbb{P}}\Vert\overleftarrow{\mathbb{P}})$ for the augmented process becomes (from now on we will take $\gamma=2$ for simplicity corresponding to critical damping) 
\begin{equation}
\label{eqn:likelihood_ratio_second_order}
\mathbb{E}_{Z\sim \overrightarrow{\mathbb{P}}}\left[\int_0^T 2\|u_t(Z_t)+o_t(Z_t)\|^2 - \nabla \cdot \hat{b}_t(Z_t)\, dt 
+\log \frac{\nu(X_0)\otimes\mathcal{N}(V_0)}{\mu(X_T)\otimes\mathcal{N}(V_T)}\right]+C 
\end{equation}
for $\gamma=2$ and 
\[\hat{b}_t = \begin{bmatrix} V_t\\ -X_t+2\gamma o_t(Z_t)-\gamma V_t\end{bmatrix}\, ,\]
with $\nabla\cdot\hat{b}_t(Z_t)=-\gamma d+2\gamma \text{div}_V (o_t(Z_t))$. We minimize over vector fields $u(Z_t,t), o(Z_t,t)$ taking gradient forms to impose time reversal consistency. Below we derive a corresponding PINN and a variance regularizer for this dynamics, first on the forward drift followed by the backward drift. 
\paragraph{Forward condition} Writing out the optimality condition for the optimization problem \eqref{eqn:2ndorder-1}, we have for a Lagrange multiplier $\lambda(Z_t,t)$, since by continuity equation the dynamics \eqref{eqn:2ndorder-2}-\eqref{eqn:2ndorder-3} can be rewritten as
\begin{align} 
\label{eqn:augment_rho}
\partial_t \rho_t(Z_t) + \nabla_{[X,V]} \cdot\left(\rho_t(Z_t)\begin{bmatrix} V_t \\-X_t-4 u_t(Z_t)-2 V_t -2 \nabla_V\log \rho_t(Z_t)\end{bmatrix}\right)= 0 \, ,
\end{align}
we seek to optimize the Lagrangian
\begin{align*}
\int_{\mathbb{R}^{2d}} \int_0^T 2\|u_t(Z_t)\|^2 \rho_t(Z_t) + \lambda(Z_t,t)\left[\partial_t \rho+\nabla_{[X,V]} \cdot\left(\rho_t(Z_t)\begin{bmatrix} V_t \\-X_t-4 u_t(Z_t)-2 V_t -2 \nabla_V\log \rho_t(Z_t)\end{bmatrix}\right)\right] \, dtdz\, .
\end{align*}
Performing integration by parts, we have
\begin{align*}
\int_{\mathbb{R}^{2d}} \int_0^T \left\{2\|u_t(Z_t)\|^2 -\partial_t \lambda (Z_t,t)-\nabla_{[X,V]} \lambda (Z_t,t)^\top \begin{bmatrix} V_t \\-X_t-4 u_t(Z_t)-2 V_t \end{bmatrix}-2\Delta_V \lambda(Z_t,t)\right\}\rho_t(Z_t) \, dtdz\, .
\end{align*}
Fixing $\rho$, the optimal control $u_t\in \mathbb{R}^d$ satisfies
\begin{equation}
\label{eqn:control}
u_t(Z_t)=-\nabla_V \lambda(Z_t,t)\, .
\end{equation}
Plugging this choice back in, the optimal $\lambda(Z_t,t)$ must satisfies the following PDE:
\begin{align}
&2\|\nabla_V \lambda(Z_t,t)\|^2-\partial_t \lambda(Z_t,t) -\nabla_X \lambda(Z_t,t)^\top V_t+\nabla_V \lambda(Z_t,t)^\top (X_t-4\nabla_V \lambda(Z_t,t)+2V_t)-2\Delta_V \lambda(Z_t,t) = 0 \nonumber \\ 
&\Rightarrow \partial_t \lambda(Z_t,t)=-2\|\nabla_V \lambda(Z_t,t)\|^2-\nabla_{[X,V]} \lambda(Z_t,t)^\top  \bar{b}_t  -2\Delta_V \lambda(Z_t,t) 
\label{eqn:second_order_pde}
\end{align}
where the $\Delta_V \lambda(Z_t,t)$ term is interpreted as $\text{Trace}(\nabla^2_V \lambda (Z_t,t))$. Turning to the variance loss, use It\^o's lemma we get that along the reference SDE \eqref{eqn:ref_2nd_1}-\eqref{eqn:ref_2nd_2},
\[d\lambda_t(Z_t)=\left[\frac{\partial \lambda_t(Z_t)}{\partial t}+\nabla_X \lambda_t(Z_t)^\top V_t-\nabla_V \lambda_t(Z_t)^\top (X_t+2 V_t)+2\Delta_V \lambda_t(Z_t)\right] dt+2\nabla_V \lambda_t(Z_t)^\top dW_t\, ,\]
which by plugging in the PDE \eqref{eqn:second_order_pde}, we end up with
\[d\lambda_t(Z_t) = -2\|\nabla_V \lambda(Z_t,t)\|^2\, dt + 2\nabla_V \lambda(Z_t,t)^\top dW_t\, .\]
It implies the regularizer on the forward control as
\begin{equation}
\label{eqn:var_second_order}
\text{Var}\left(\lambda(Z_T,T)-\lambda(Z_0,0)+\int_0^T 2\|\nabla_V \lambda(Z_t,t)\|^2dt-2\int_0^T \nabla_V \lambda(Z_t,t)^\top dW_t 
 \right).
\end{equation}
Once we have the optimal control $u(Z_t,t)$ from \eqref{eqn:control}, one can simply simulate \eqref{eqn:2ndorder-2}-\eqref{eqn:2ndorder-3} from $Z_0$ to draw samples from $\mu$. We recommend a splitting approach where one alternates between the Hamiltonian part (with reversible symplectic leapfrog denoted by flow map $\Phi_h$), the $u_t$ drift part (with Euler), and the rest OU part (with exact simulation) for SDE discretization.

\paragraph{Backward condition} Via a change of variable  
\[\log\rho_t(Z_t)=\lambda_t(Z_t)+\eta_t(Z_t),\quad u_t(Z_t)=-\nabla_V\lambda_t(Z_t)\, ,\]
now \eqref{eqn:augment_rho} and \eqref{eqn:second_order_pde} imply the PDE regularizer on the backward control $o(Z_t,t)=-\nabla_V \eta(Z_t,t)$ should read as 
\begin{equation}
\partial_t \eta (Z_t,t)=-\nabla_{[X,V]} \eta(Z_t,t)^\top \bar{b}_t-\nabla\cdot \bar{b}_t+2\|\nabla_V \eta (Z_t,t)\|^2+2\Delta_V \eta(Z_t,t)\, ,
\end{equation}
and the corresponding SDE regularizer along the reference \eqref{eqn:ref_2nd_1}-\eqref{eqn:ref_2nd_2} becomes (using $\nabla\cdot \bar{b}_t=-2d$):
\[d\eta_t(Z_t) = \left[2\|\nabla_V \eta(Z_t,t)\|^2+4\Delta_V \eta(Z_t,t) +2d\right]\, dt + 2\nabla_V \eta(Z_t,t)^\top dW_t\, .\]
Moreover the boundary condition should match the marginal densities at $t=0,T$ in the sense that time reversal imposes $u_t(Z_t)+o_t(Z_t)=-\nabla_V\log\rho_t(Z_t)$ for all $t\in [T]$.
From \eqref{eqn:augment_rho} the deterministic ODE implementation of the dynamics with given $(u_t(Z_t),o_t(Z_t))$ therefore becomes
  \begin{equation}
  \label{eqn:ode_1}
  dZ_t = \begin{bmatrix} dX_t \\ dV_t \end{bmatrix} = \begin{bmatrix} V_t \\-X_t-2 V_t-2 u_t(Z_t) +2 o_t(Z_t)\end{bmatrix} dt\, .  
  \end{equation}

\begin{remark}[Cost and evolution]
\label{rem:second_order}
We would like to note that the reference process \eqref{eqn:ref_2nd_1}-\eqref{eqn:ref_2nd_2} still gives an entropy regularized optimal transport objective with quadratic cost, since 
\begin{align}
&\arg\min_{P_0\sim \nu\otimes \mathcal{N},P_T\sim \mu\otimes \mathcal{N}}\, KL(P_{0T}\Vert Q_{0T}) \label{eqn:augmented_sb}\\
&= \arg\min_{P_0\sim \nu\otimes \mathcal{N},P_T\sim \mu\otimes \mathcal{N}}\, \mathbb{E}_{P_{0T}}[-\log Q_{T|0}]-\int P(Z_0,Z_T) \log P(Z_0,Z_T) \, dZ_0 dZ_T \nonumber\\
&=\arg\min_{P_0\sim \nu\otimes \mathcal{N},P_T\sim \mu\otimes \mathcal{N}} \, \mathbb{E}_{P_{0T}}[\underbrace{(Z_T-M_{0T}Z_0)^\top D^{-1}_{T|0}(Z_T-M_{0T}Z_0)}_{\mathcal{C}_T(Z_0,Z_T)}]-H(P_{0T})  \nonumber\, , 
\end{align}
where we used that the conditional distribution of $Z_T$ given $Z_0$ is Gaussian, which we denote with mean $M_{0T}Z_0$ and covaraince $D_{T|0}$. Using the fact that the reference is a linear SDE that admits analytical solution, assuming we initialize as
\[(X_0,V_0)\sim \mathcal{N}\left(\begin{pmatrix}\mu_X^0\\ \mu_V^0\end{pmatrix},\begin{pmatrix}\sigma_{XX}^0\cdot I & 0\\ 0 & \sigma_{VV}^0\cdot I\end{pmatrix}\right)\, ,\]
the dynamics will remain Gaussian with
\[\mathbb{E}[Z_T] = \begin{bmatrix}(T+1)\mu_X^0 + T \mu_V^0\\ -T \mu_X^0+(1-T)\mu_V^0 \end{bmatrix} e^{-T} = \begin{bmatrix}(T+1)e^{-T}\cdot I &  T e^{-T}\cdot I\\ -Te^{-T}\cdot I & (1-T)e^{-T}\cdot I\end{bmatrix}\begin{pmatrix}\mu_X^0\\ \mu_V^0\end{pmatrix}\rightarrow 0_{2d}\]
and
\[
\text{Cov}[Z_T] = \begin{bmatrix} 
\sigma_{XX}^T \cdot I & \sigma_{XV}^T\cdot I \\ \sigma_{VX}^T\cdot I & \sigma_{VV}^T\cdot I \end{bmatrix} e^{-2T} \rightarrow \begin{bmatrix}
I & 0\\
0 & I
\end{bmatrix}
\]
for 
\begin{align*}
\sigma_{XX}^T &= (T+1)^2\sigma_{XX}^0+T^2\sigma_{VV}^0+e^{2T}-1-2T-2T^2\\
\sigma_{XV}^T &= \sigma_{VX}^T =-T(T+1)\sigma_{XX}^0+T(1-T)\sigma_{VV}^0+2T^2 \\
\sigma_{VV}^T &= T^2\sigma_{XX}^0+(1-T)^2\sigma_{VV}^0-2T^2+2T+e^{2T}-1
\end{align*}
and plugging in $\mu_X^0= x_0, \mu_V^0= v_0, \sigma_{XX}^0 =\sigma_{VV}^0 = 0$ gives the desired $M_{0T}$ and $D_{T|0}$. This asymptotic behavior matches our intuition about the equilibrium. 

\end{remark}

Note that the coupling $P_{0T}(X_0,V_0,X_T,V_T)=\pi^*(X_0,X_T)\mathcal{N}(V_0)\mathcal{N}(V_T)$ for $\int \pi^*(X_0,X_T) dX_T = \nu(X_0)$, $\int \pi^*(X_0,X_T) dX_0 = \mu(X_T)$ is always valid, in which case the objective will simplify to a quadratic involving $(X_0,X_T)$ only since $\mathbb{E}_{P_{0T}}[V_0 V_T]=\mathbb{E}_{P_{0T}}[V_0 X_T]=\mathbb{E}_{P_{0T}}[X_0 V_T]=0$ and the constraints already enforce $\mathbb{E}_{P_{0T}}[\|V_T\|^2]$ and $\mathbb{E}_{P_{0T}}[\|V_0\|^2]$ to be fixed and $\mathbb{E}_{P_{0T}}[X_T V_T]=\mathbb{E}_{P_{0T}}[X_0 V_0]=0$. Similarly the entropy term $H(\cdot)$ will also only involve $(X_0,X_T)$ in this case. But generally the marginal over $(X_0,X_T)$ from solving the optimal transport SB problem \eqref{eqn:augmented_sb} will not correspond to $\pi^*(X_0,X_T)$ from \eqref{eqn:static}, as we show below.

\begin{lemma}[Structure of the joint optimal coupling]
The joint coupling from \eqref{eqn:augmented_sb} in general do not factorize over the $X$ and $V$ variables, unless $T\rightarrow \infty$, in which case the coupling over the $X$ variables will favor the independent one, i.e., $\nu(X_0)\otimes \mu(X_T)$.
\end{lemma}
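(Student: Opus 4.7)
The argument combines the necessary and sufficient SB factorization \eqref{eqn:factorize_condition-1} with the explicit Gaussian form of the reference transition kernel from Remark~\ref{rem:second_order}. That characterization gives $P^*_{0T}(Z_0, Z_T) = e^{f(Z_0) + g(Z_T)} Q_{0T}(Z_0, Z_T)$, where $Q_{0T}(Z_0, Z_T) = \nu(X_0)\mathcal{N}(V_0)\, Q_{T|0}(Z_T \mid Z_0)$ and $Q_{T|0}(\cdot \mid Z_0) = \mathcal{N}(M_{0T} Z_0, D_{T|0})$ with $M_{0T}, D_{T|0}$ as read from Remark~\ref{rem:second_order}. The crucial observation is that $M_{0T}$ has off-diagonal blocks $\pm T e^{-T}\, I$ coupling the $X$- and $V$-components, and $D_{T|0}$ is also not block-diagonal, so $\log Q_{0T}$ contains cross monomials between $X$- and $V$-coordinates at different times through the bilinear term $Z_T^\top D_{T|0}^{-1} M_{0T} Z_0$.

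Suppose for contradiction that $P^*_{0T}(X_0, V_0, X_T, V_T) = \pi^X(X_0, X_T)\, \pi^V(V_0, V_T)$ at some finite $T$. Taking logarithms in the SB relation and applying the mixed partial $\partial^2/(\partial X_0\, \partial V_T)$ annihilates the entire left-hand side $\log \pi^X + \log \pi^V - f - g$, because each summand depends on at most one of $\{X_0, V_T\}$. Applied to $\log Q_{0T}$, the same partial returns the $(V_T, X_0)$ sub-block of the Hessian of $\log Q_{T|0}$, namely $[D_{T|0}^{-1} M_{0T}]_{VX}$. Direct $2\times 2$ block inversion of $D_{T|0}$ using $\sigma_{XX}^T = e^{2T}-1-2T-2T^2$, $\sigma_{VV}^T = e^{2T}-1+2T-2T^2$, $\sigma_{XV}^T = 2T^2$ from Remark~\ref{rem:second_order}, followed by multiplication with $M_{0T}$, collapses the $XV$ block of the product into $T(e^{2T}-1)$ times a strictly positive factor, hence nonzero for all $T > 0$; the analogous computation for the $VX$ block gives the same conclusion, contradicting the factorization assumption.

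For $T \to \infty$, the same explicit formulas yield $M_{0T} \to 0$ and $D_{T|0} \to I_{2d}$, so $Q_{T|0}(\cdot \mid Z_0) \to \mathcal{N}(0, I_{2d})$ uniformly in $Z_0$ and $Q_{0T}$ tends to the product $Q_0^{\mathrm{ref}} \otimes Q_T^{\mathrm{ref}} := (\nu \otimes \mathcal{N}) \otimes (\mathcal{N} \otimes \mathcal{N})$. Against a product reference, the KL decomposes as
\[
D_{KL}(P_{0T} \,\|\, Q_{0T}) = I_P(Z_0; Z_T) + D_{KL}(P_0 \,\|\, Q_0^{\mathrm{ref}}) + D_{KL}(P_T \,\|\, Q_T^{\mathrm{ref}}),
\]
and the two marginal KL terms are fixed by the endpoint constraints $P_0 = \nu \otimes \mathcal{N}$, $P_T = \mu \otimes \mathcal{N}$. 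Minimizing the remaining mutual information subject to these marginals gives the independent coupling $P^*_{0T} = (\nu \otimes \mathcal{N}) \otimes (\mu \otimes \mathcal{N})$, whose $(X_0, X_T)$-marginal is $\nu(X_0) \otimes \mu(X_T)$, as claimed.

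The main obstacle is in step two: the $2\times 2$ block inversion of the non-block-diagonal $D_{T|0}$ and the ensuing algebraic simplification of the off-diagonal block of $D_{T|0}^{-1} M_{0T}$. Although the computation is elementary, the cancellations leading to the clean expression $T(e^{2T}-1)$ are not self-evident, and one must verify strict nonvanishing uniformly on $(0, \infty)$ rather than only generically.
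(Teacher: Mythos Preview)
Your argument is correct and follows the same overall strategy as the paper: invoke the Schr\"odinger-bridge factorization $P^*_{0T}=e^{f(Z_0)+g(Z_T)}Q_{0T}$, locate the obstruction to $X/V$-factorization in the cross terms of the Gaussian kernel $Q_{T|0}$, and then let the reference degenerate as $T\to\infty$ to recover the independent coupling. Where you differ is in execution. The paper simply writes $f(Z_0)=f_1(X_0)+f_2(V_0)$, $g(Z_T)=g_1(X_T)+g_2(V_T)$ and remarks that $Q(Z_T\mid Z_0)$ carries cross terms; this leaves open whether non-separable potentials could cancel those cross terms. Your mixed-partial device $\partial^2_{X_0,V_T}$ closes that gap cleanly, since it annihilates $f(Z_0)$ and $g(Z_T)$ regardless of their form and isolates the block $[D_{T|0}^{-1}M_{0T}]_{VX}$, which your explicit computation shows equals (up to sign) $T(e^{2T}-1)$ times a strictly positive scalar. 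For the limit, the paper argues via the cost formulation $\mathbb{E}[\mathcal{C}_T]-H(P)$ that $\mathcal{C}_T$ becomes constant under the constraints so entropy dominates, whereas you argue via the equivalent KL-to-reference formulation that $Q_{0T}$ becomes a product and hence $D_{KL}(P_{0T}\Vert Q_{0T})=I_P(Z_0;Z_T)+\text{const}$; both routes are valid and yield the same conclusion. Your version is the more rigorous of the two.
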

\begin{proof}
The optimal regularized coupling always takes the form for some $f,g$ 
\begin{align*}
\pi^*(Z_0,Z_T) &= e^{f(Z_0)}\cdot e^{g(Z_T)}\cdot Q(Z_T|Z_0)\cdot \nu(X_0)\otimes \mathcal{N}(V_0)\otimes \mu(X_T)\otimes \mathcal{N}(V_T)\\
&= e^{f_1(X_0)+f_2(V_0)}\cdot e^{g_1(X_T)+g_2(V_T)}\cdot Q(Z_T|Z_0)\cdot \nu(X_0)\otimes  \mu(X_T)\otimes \mathcal{N}(V_0)\otimes \mathcal{N}(V_T)\\
&= [e^{f_1(X_0)+g_1(X_T)}\nu(X_0)\otimes  \mu(X_T)]\cdot [e^{f_2(V_0)+g_2(V_T)}\mathcal{N}(V_0)\otimes \mathcal{N}(V_T)]\cdot Q(Z_T|Z_0)\, .
\end{align*}
While the rest of the terms may factorize, there will be cross terms between e.g., $X_T - V_0$ and $V_T - X_0$ coming from $Q(Z_T|Z_0)$ unless $T\rightarrow \infty$. Although the cost $\mathcal{C}_T(Z_0,Z_T)$ is still quadratic, since it approaches in that limit
\[\min_{X_T\sim\mu,V_T\sim\mathcal{N}}\; \mathbb{E}[\|X_T\|^2+\|V_T\|^2]\]
using Remark~\ref{rem:second_order}, which is fixed by the constraint, the second entropy term will dominate and therefore the resulting optimal coupling will be the product distribution.
%
\end{proof}

\begin{remark}[Gaussian entropy regularized $\mathcal{W}_2$ transport]
In the more commonly studied case when the cost is simply $\|Z_T-Z_0\|^2$, and if both $\nu$ and $\mu$ are Gaussian with arbitrary covariance, using closed-form expression from \cite[Theorem 1]{janati2020entropic} we have that the regularized optimal transport plan is Gaussian over $\mathbb{R}^{2d}\times \mathbb{R}^{2d}$: 
\begin{align*}
\pi^*\sim\mathcal{N}\left(\begin{pmatrix} [\bar{\nu},0]^\top\\
[\bar{\mu},0]^\top
\end{pmatrix}, \begin{pmatrix}
A & C_T\\C_T^\top & B
\end{pmatrix}\right)
\end{align*}
for 
\begin{align*}
A = \begin{bmatrix} 
\Sigma_\nu & 0\\ 0 & I
\end{bmatrix}\quad
B = \begin{bmatrix} 
\Sigma_\mu & 0\\ 0 & I
\end{bmatrix} \quad
C_T = \frac{1}{2}A^{1/2}(4A^{1/2} BA^{1/2}+T^2 I)^{1/2} A^{-1/2}-\frac{T}{2} I\, .
\end{align*}  
We recognize 
\[\frac{1}{2}A^{1/2}(4A^{1/2} BA^{1/2}+T^2 I)^{1/2} A^{-1/2}=A^{1/2}(A^{1/2} BA^{1/2}+T^2/4\cdot I)^{1/2} A^{-1/2}=(AB+T^2/4\cdot I)^{1/2}\] 
as the unique square root of $AB+T^2/4\cdot I$ with non-negative eigenvalues when $A\succ 0$. Therefore in this case, the optimal entropy-regularized coupling does take the form of $P_{0T}^*(X_0,V_0,X_T,V_T)=\pi^*(X_0,X_T)\bar{\pi}^*(V_0,V_T)$ where $\pi^*$ and $\bar{\pi}^*$ are respectively the entropy-regularized optimal coupling between the $X$-variables and the $V$-variables (i.e., the structure of $C_T$ implies that $X$ and $V$ are uncorrelated).
%
\end{remark}


\paragraph{Normalizing constant estimator} Importance-weighted $Z$-estimators can be derived from the controlled sampling trajectory \eqref{eqn:2ndorder-2}-\eqref{eqn:2ndorder-3} using similar ideas as those in Lemma~\ref{lem:discretization} by leveraging the likelihood ratio \eqref{eqn:likelihood_ratio_second_order}. One can check that the discretized $Z$ estimator can be written as 
\begin{align}
&\frac{1}{N}\sum_{i=1}^N \exp\Big(\log\frac{\mu(X_K)\otimes \mathcal{N}(V_K)}{\nu(X_0)\otimes \mathcal{N}(V_0)}+\label{eqn:second_order_z}\\
&\sum_{k=0}^{K-1} \frac{1}{2(1-e^{-2h})} \|\hat{V}_{k+1}^i-e^{-h}V_k^i\|^2-\frac{1}{2(1-e^{-2h})}\|V_k^i-e^{-h}\{[\Phi_{-h}(Z_{k+1}^i)]_V+4h\nabla_V \eta_{k+1}(\Phi_{-h}(Z_{k+1}^i))\}\|^2 \Big) \nonumber
\end{align}
for $3$-stage splitting updates 
\[\hat{V}_{k+1}\sim\mathcal{N}(e^{-h} V_k, (1-e^{-2h}) I),\; \tilde{V}_{k+1}=\hat{V}_{k+1}+4h\cdot \nabla_V \lambda_k(X_k,\hat{V}_{k+1}),\; Z_{k+1}=(X_{k+1},V_{k+1})=\Phi_h(X_k,\bar{V}_{k+1})\] 
from $Z_0\sim \nu\otimes \mathcal{N}(0,I)$. Notice the last two parts are purely deterministic and the first two parts act solely on the variable $V$, which explains the last line in \eqref{eqn:second_order_z}.

\newpage
\section{ADDITIONAL NUMERICAL EXPERIMENTS}
\label{app:numerics}

We provide additional details on the numerical experiments in this section. Compared to MCMC, our method is gradient free and since it takes a more global perspective on the path space, it can be less susceptible to mode collapse and/or long escape time. This type of path-based sampler also comes with importance weighting correction and has normalizing constant estimator built in as an output. 

\subsection{Benchmark and Metrics}
The following $4$ targets are considered with different priors $\nu$: 
\begin{itemize}
\item 2D standard Gaussian: $\mathcal{N}(x; 0, I)$ with prior $\nu(x) = \mathcal{N}(x; 0, 2I)$
\item Funnel: $x_0 \sim \mathcal{N}(0, 3^2 ), x_1 | x_0 \sim \mathcal{N}(0, \exp(x_0))$ with prior $\nu(x) = \mathcal{N}(x; 0, 2I)$
\item Gaussian mixture model with $9$ modes: $\frac{1}{9}\sum_{i=1}^9 \mathcal{N}(x;\mu_i, I) \; \textrm{where} \; \{\mu_i\}_{i=1}^9=\{-5,0,5\}\times \{-5,0,5\}$ with prior $\nu(x) = \mathcal{N}(x; 0, 3.5^2I)$
\item Double well: $\mu(x) \propto -(x_0^2 - 2)^2 - (x_1^2 - 2)^2$ with prior $\nu(x) = \mathcal{N}(0, 2I)$
\end{itemize}
%
For each of the benchmarks and $4$ losses, we plot the marginals and report the following: 
\begin{itemize}
\item Absolute error in mean and relative error in standard deviation compared to the ground truth using importance-weighted Monte-Carlo estimates \eqref{eqn:iw_statistics} 
\item The log normalizing constant $\log Z$ estimator (c.f. Lemma \ref{lem:discretization}):
\[\log\left(\frac{1}{n}\sum_{i=1}^n \frac{\mu(x_{K+1}^i)}{\nu(x_0^i)}\exp\left[\sum_{k=0}^K \frac{1}{2}\|Z_k^i\|^2-\sum_{k=0}^K \frac{1}{2\sigma^2 h}\|x_k^i-x_{k+1}^i-\sigma^2 h \nabla \psi(x_{k+1}^i,(k+1)h)\|^2\right]\right)\]
using the final trajectory $\{x_k^i\}$ from \eqref{eqn:em-discretize}. 
\end{itemize}

\paragraph{Importance weighting}
For the very last sampling SDE, we simulate \eqref{eqn:em-discretize} with the latest $\nabla \phi$, but re-weight the $n$ samples $\{x_{K+1}^i\}_{i=1}^n$ each with individual (un-normalized) weight 
\begin{equation}
\label{eqn:discretized_w}
w^\phi(x_{K+1}^i)=\frac{\mu(x_{K+1}^i)}{\nu(x_0^i)}\exp\left(h\left[\sum_{k=0}^K\frac{\sigma^2}{2}\Delta \phi(x_k^i,kh)-\frac{\sigma^2}{2}\Delta \psi(x_k^i,kh)\right]\right)
\end{equation}
before taking their average, i.e., 
\begin{equation}
\label{eqn:iw_statistics}
\hat{\mathbb{E}}_{p_\text{target}}[g] = \frac{\sum_{i=1}^n g(x_{K+1}^i)w^\phi(x_{K+1}^i)}{\sum_{i=1}^n w^\phi(x_{K+1}^i)}
\end{equation}
for a summary statistics $g:\mathbb{R}^d\rightarrow \mathbb{R}$ we are interested in. This estimator follows from Proposition~\ref{prop:importance_sample} and is used for post-processing with a potentially suboptimal control $\nabla \phi$, caused by e.g., approximation or optimization error from neural network training. 

\subsection{Result}
For the experiments, we parameterize $\phi, \psi:\mathbb{R}^d \times [0, c] \rightarrow \mathbb{R}$ as residual feed-forward neural networks where $c$ is a hyperparameter.
Each network first maps the input vector to a hidden state with a linear transformation, which is then propagated through several residual layers that maintain hidden state dimensionality.
Each residual layer receives as input a state-time pair $(x, t) \in \mathbb{R}^{d+1}$ and outputs $\sigma(W (x, t) + b) + x$, where $\sigma$ is the ReLU activation function.
The number of hidden layers and their sizes vary by target distribution.
Adam optimizer was used to train the models with $\beta_1 = 0.9, \beta_2 = 0.999$, and weight decay $0.01$, where batches of trajectories are used for several steps of gradient updates in each epoch, before regenerating the $n$ trajectories and estimating the objective for the next round of updates on the NN parameters. Across all experiments, we initialize $\phi(x_0^i,0)\approx \log \mu (x_0^i)$. For fair comparison, the training process is stopped when the loss stops noticeably decreasing. 





For a comparison on the computation speed of the four regularizers: for the standard normal target it took 624 seconds to train with PINN, 108 seconds with the Variance regularizer, 106 seconds with TD, and 85 seconds with Separate Control. Generating trajectories requires much less time than evaluating the loss and its gradient. Computing each loss requires processing all $K$ states across $n$ trajectories. Combined with the number of training epochs and the number of updates per batch, we estimate the processing time per trajectory state as approximately
\[
    \frac{\texttt{training\_time}}{\texttt{epochs} \times \texttt{trajectories} \times \texttt{updates\_per\_batch} \times K}\,. 
\]
The stated processing time is therefore roughly $6.2 \cdot 10^{-3}\si{\second}$ for PINN, $3.6 \cdot 10^{-7}\si{\second}$ for Variance, $3.5 \cdot 10^{-7}\si{\second}$ for TD, and $2.8 \cdot 10^{-7}\si{\second}$ for Separate Control. The latter three are comparable and much faster than PINN whose Laplacian computation adds an order of magnitude processing time. This efficiency comparison is also independent of the target distribution.

For reproducibility, the anonymous Github repository can be found at the following link:

\begin{center}
\url{https://anonymous.4open.science/r/diffusion-sampler-E4F7}
\end{center}
where hyperparameters used for the experiments are listed in the corresponding notebooks.

In Table~\ref{tab:stddev-table}, Table~\ref{tab:mean-table} and Figure~\ref{plt:train_loss}, Figure~\ref{plt:GMM}, Figure~\ref{plt:others} below we present the simulation results. All experiments are run with a GeForce RTX 2080 GPU and an AMD Ryzen 9 3900X CPU.

\begin{table}[H]
\begin{center}
\begin{tabular}{@{}lcccc@{}}
\toprule
       & PINN   & Variance                 & TD     & Separate control         \\ \midrule
Standard normal & \num{0.231533} & \num{0.382497} & \num{0.240691} & \num{0.221310}  \\
Funnel & \num{0.794941} & \num{0.796652} & \num{0.794814} & \num{0.810974} \\
GMM             & \num{0.171790} & \num{0.088070}  & \num{0.134221} & \num{0.032793} \\
Double well     & \num{0.161144}  & \num{0.177658}  & \num{0.008204}  & \num{0.026290}   \\ \bottomrule
\end{tabular}
\caption{Relative error of weighted empirical standard deviation (lower is better)}
\label{tab:stddev-table}
\end{center}
\end{table}

\begin{table}[H]
\begin{center}
\begin{tabular}{@{}lcccc@{}}
\toprule
       & PINN   & Variance                 & TD     & Separate control         \\ \midrule
Standard normal & \num{0.123589} & \num{0.037648} & \num{0.013826} & \num{0.030821}  \\
Funnel & \num{0.786865} & \num{0.295754} & \num{0.255022} & \num{0.154966} \\
GMM             & \num{0.632307} & \num{0.039433}  & \num{0.288841} & \num{0.301473} \\
Double well     & \num{0.234222}  & \num{0.736263}  & \num{0.165676}  & \num{0.361951}   \\ \bottomrule
\end{tabular}
\caption{Absolute error of empirical mean (lower is better)}
\label{tab:mean-table}
\end{center}
\end{table}

\begin{figure}[H]
\begin{center}
\includegraphics[width=0.45\hsize]{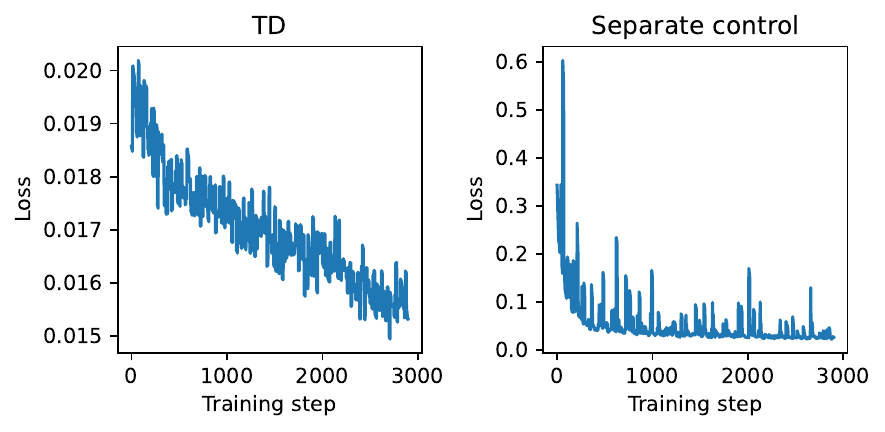}
\caption{Training loss plot for GMM with TD and Separate Control}
\label{plt:train_loss}
\end{center}
\end{figure}

\begin{figure}[H]
\begin{center}
\includegraphics[width=0.4\hsize]{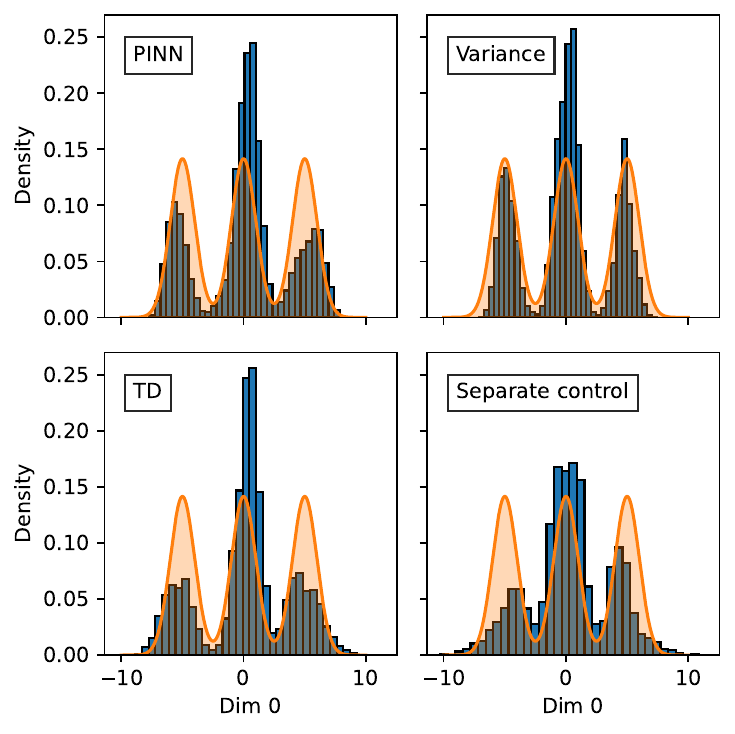}
\includegraphics[width=0.4\hsize]{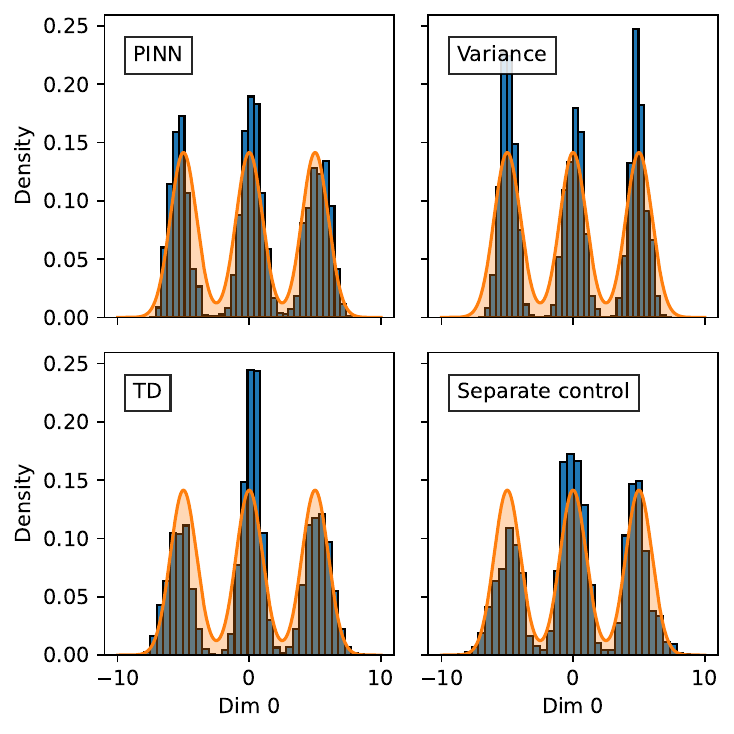}
\caption{GMM marginal before (left) and after (right) importance weighting}
\label{plt:GMM}
\end{center}
\end{figure}

\begin{figure}[H]
\begin{center}
\includegraphics[width=0.4\hsize]{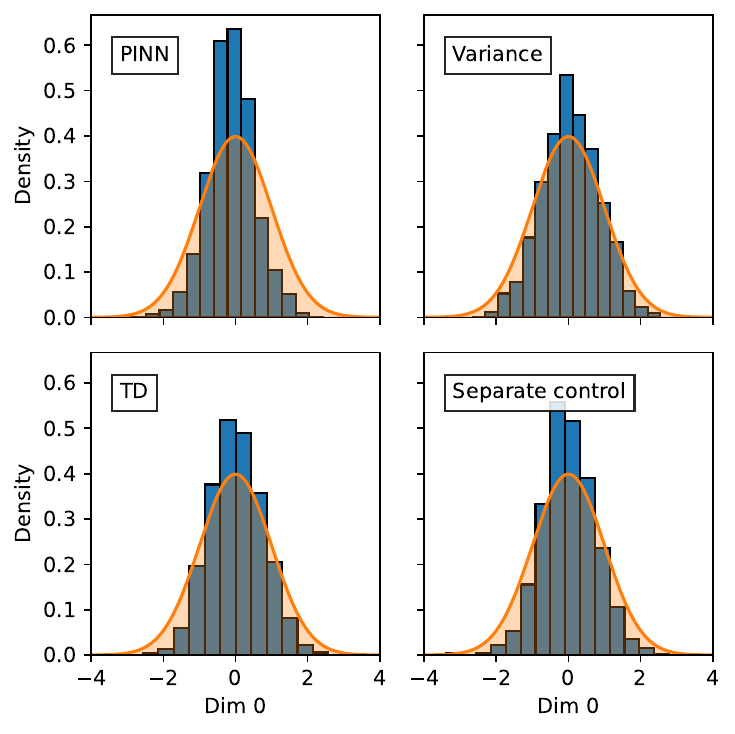}
\includegraphics[width=0.4\hsize]{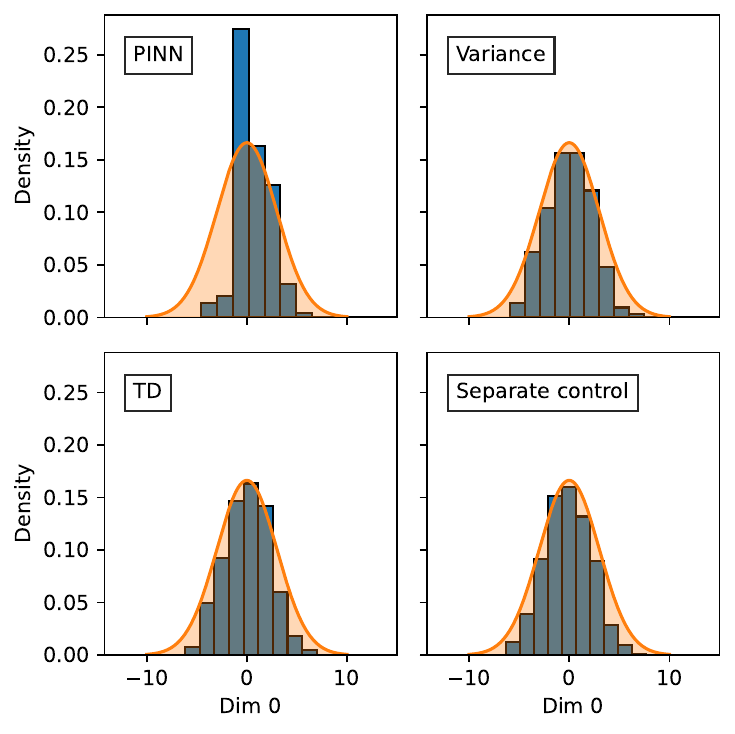}
\caption{Standard normal marginal (left) and Funnel marginal (right) with importance weighting}
\label{plt:others}
\end{center}
\end{figure}

\subsection{Application in Optimal Transport and Stochastic Control}


In contrast to \citep{chen2021likelihood}, we are guaranteed optimal control in the sense of SB, beyond just terminal marginal constraint matching. In this regard, the two works target different purposes (i.e., trajectories), and we illustrate these optimal transport and control benefits in Table \ref{tab:new_app}.

\begin{figure}[H]
  \centering
            \includegraphics[width=0.45\hsize]{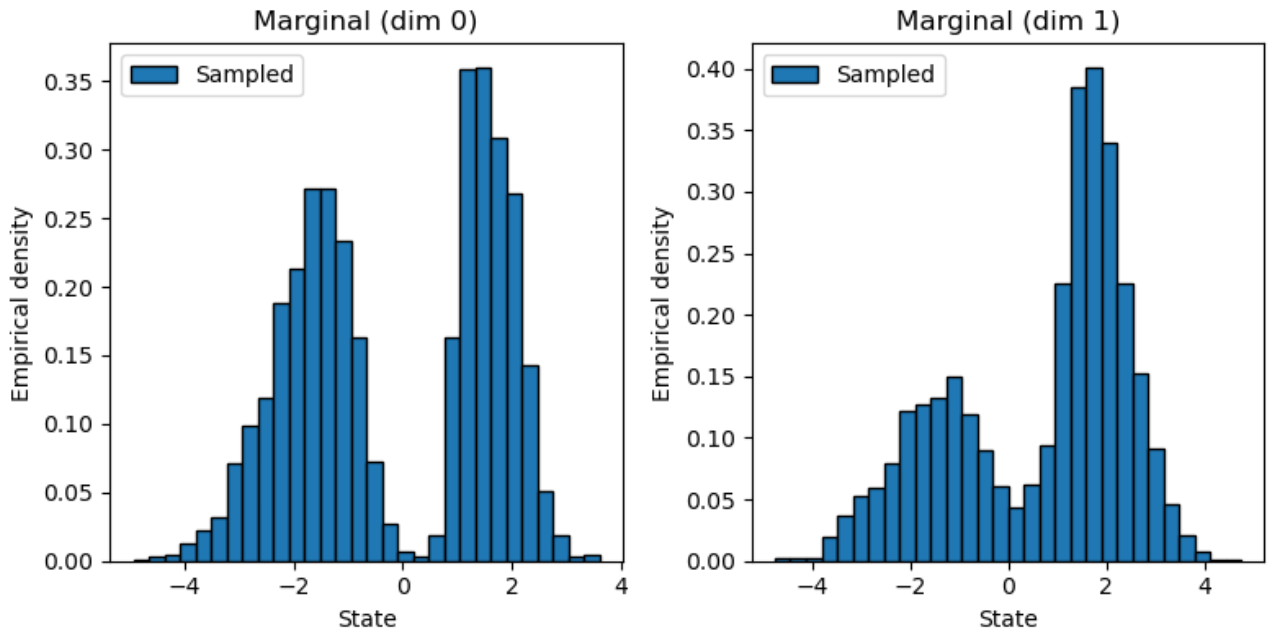}
            \hspace{0.2cm}
          \includegraphics[width=0.45\hsize]{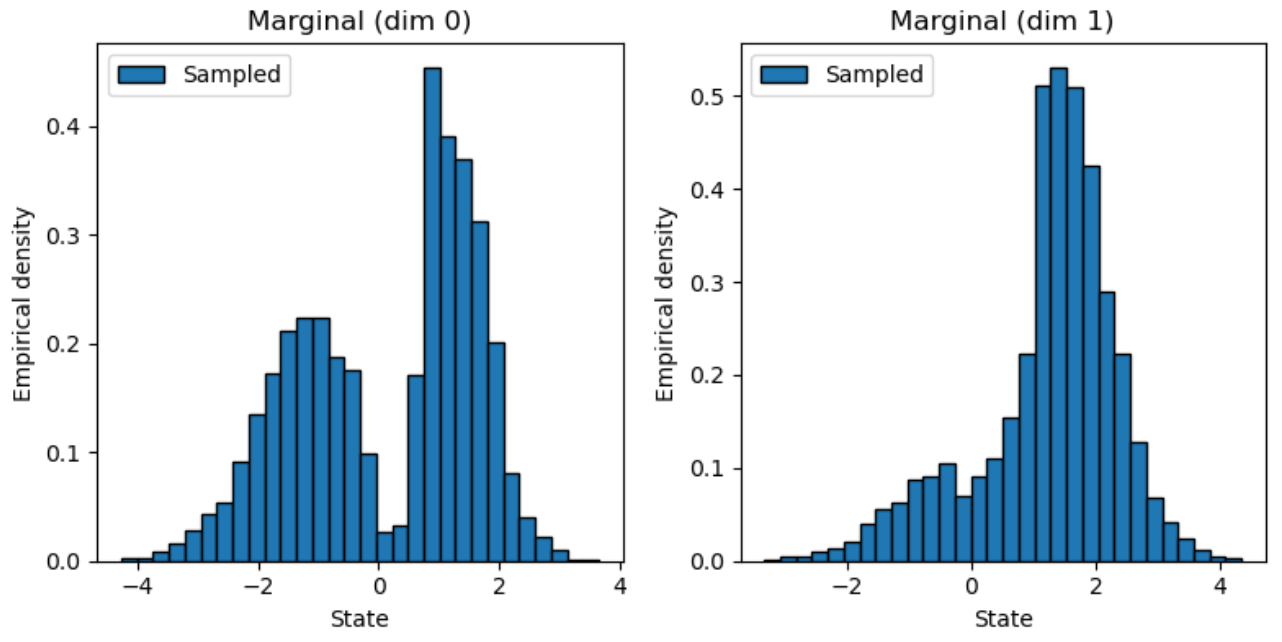}
  \caption{Double-well experiment: marginal distribution (non importance-weighted) from our variance-regularized loss \eqref{var:loss} on the left vs. method in \citep{chen2021likelihood} without regularization on the right. Absolute error in mean: 0.31 (ours) vs. 0.35 \citep{chen2021likelihood}.}
\end{figure}

\begin{table}[H]
  \centering
  \begin{tabular}{ccc}
    \toprule
    Algorithm    & Entropy-regularized $\mathcal{W}_2$ distance     & Control Cost $\mathbb{E}_{X\sim \overrightarrow{\mathbb{P}}^{\nu,\sigma^2 \nabla \phi}}[\int_0^T \frac{\sigma^2}{2}\|\nabla \phi_t(X_t)\|^2 \, dt]$ \\
    \midrule
    Ours &  1.94 &    1.62  \\
    Method of \citep{chen2021likelihood}   & 2.50   &  2.31    \\
    \bottomrule
  \end{tabular}
    \caption{Comparison of \citep{chen2021likelihood} vs. our SB method \eqref{var:loss} on double-well experiment: both are two-parameter losses but ours aim at identifying the \emph{unique optimal control} when setting the regularization parameter $\lambda>0$. We observe that our method preserves marginals with a better bridge interpolation and less control energy spent ($T=1$ is picked in all experiments).}
    \label{tab:new_app}
\end{table}



\begin{table}[H]
  \centering
  \begin{tabular}{ccc}
    \toprule
    Dimension (Method)     & Entropy-regularized $\mathcal{W}_2$ distance      & Absolute error in mean estimator \\
    \midrule
    2 (ours) & 1.94  &  0.31  \\
    2 (DDS \citep{vargas2022denoising})    &  2.28     &  0.39 \\
    \bottomrule
  \end{tabular}
    \caption{Comparison of eqn (9) in DDS \citep{vargas2022denoising} as one-parameter loss vs. our SB two-parameter loss \eqref{var:loss}. We observe (1) better trajectory property in the sense of optimal transport; (2) one requires slightly larger network size for method \citep{vargas2022denoising} to give satisfying results.}
\end{table}


\vfill


\end{document}